\documentclass{article} 
\usepackage{iclr2027_conference,times}

\usepackage{hyperref}
\usepackage{url}

\usepackage{amsfonts} 
\usepackage{graphicx}
\usepackage{subcaption}
\usepackage{amsmath}
\usepackage{amssymb}
\usepackage{amsthm}

\usepackage{algorithm}
\usepackage{algorithmic}
\newcommand{\argmin}{\mathop{\rm arg~min}\limits}

\theoremstyle{plain}
\newtheorem{theorem}{Theorem}[section]
\newtheorem{proposition}[theorem]{Proposition}
\newtheorem{lemma}[theorem]{Lemma}

\theoremstyle{definition}

\theoremstyle{remark}

\title{AUC Maximization from Biased \\ Positive-unlabeled Data with Confidence}

\author{
\textbf{Atsutoshi Kumagai},
\textbf{Tomoharu Iwata},
\textbf{Hiroshi Takahashi},
\textbf{Taishi Nishiyama}, \\
\textbf{Kazuki Adachi},
\textbf{Yasuhiro Fujiwara} \\
NTT, Inc. \\
\texttt{atsutoshi.kumagai@ntt.com}
}

\iclrfinalcopy 
\begin{document}

\maketitle

\lhead{Preprint}

\begin{abstract}
Maximizing the area under the receiver operating characteristic curve (AUC) is a standard approach to imbalanced binary classification.
Although positive and negative data are required for maximizing the AUC, negative data are often difficult to collect in some real-world applications due to privacy concerns or the need for specialized expertise to annotate them.
Thus, AUC maximization from positive and unlabeled (PU) data has been attracting attention.
Existing methods assume that labeled positive data are unbiased samples from the true positive distribution.
However, this ideal assumption is often violated in practice.
In this paper, we propose a method to maximize the AUC from biased PU data.
To address the bias, our key idea is to exploit {\it confidence}, i.e., the probability that an instance is positive, associated with the small number of labeled positive data.
We derive an estimator of the AUC risk using biased PU data with confidence, enabling AUC maximization under such bias. We further show that the rewritten AUC risk induces a Bayes-optimal AUC ranking even when the available confidence is any strictly increasing transformation of the true posterior probability.
We experimentally show the effectiveness of our method on eight real-world datasets.
\end{abstract}

\section{Introduction}
\label{intro}

In many real-world binary classification tasks
such as cyber security
\citep{mirsky2018kitsune,bagui2021resampling},
medical care
\citep{yang2021automated},
product inspection
\citep{park2016machine},
and fraud detection
\citep{su2021positive},
{\it class-imbalance} frequently arises, where the amount of positive data is much smaller than that of negative data
\citep{johnson2019survey}.
For such imbalanced data, classification accuracy, which is the standard performance metric in ordinary classification, is not a suitable measure
\citep{yang2022auc}.
Instead, the area under the receiver operating characteristic curve (AUC) is widely used
\citep{bradley1997use,mcdermott2024closer}.
The AUC represents the probability that a classifier will rank a positive instance higher than a negative one
\citep{yang2022auc}. 
Owing to the nature of the ranking, the AUC can adequately measure the classifier's performance even with imbalanced data.
Consequently, maximizing the AUC enables accurate classifiers to be learned from imbalanced data
\citep{liu2020stochastic,yuan2021compositional,yang2022auc}.

\begin{figure*}[t]
  \centering
  \includegraphics[width=13.0cm]{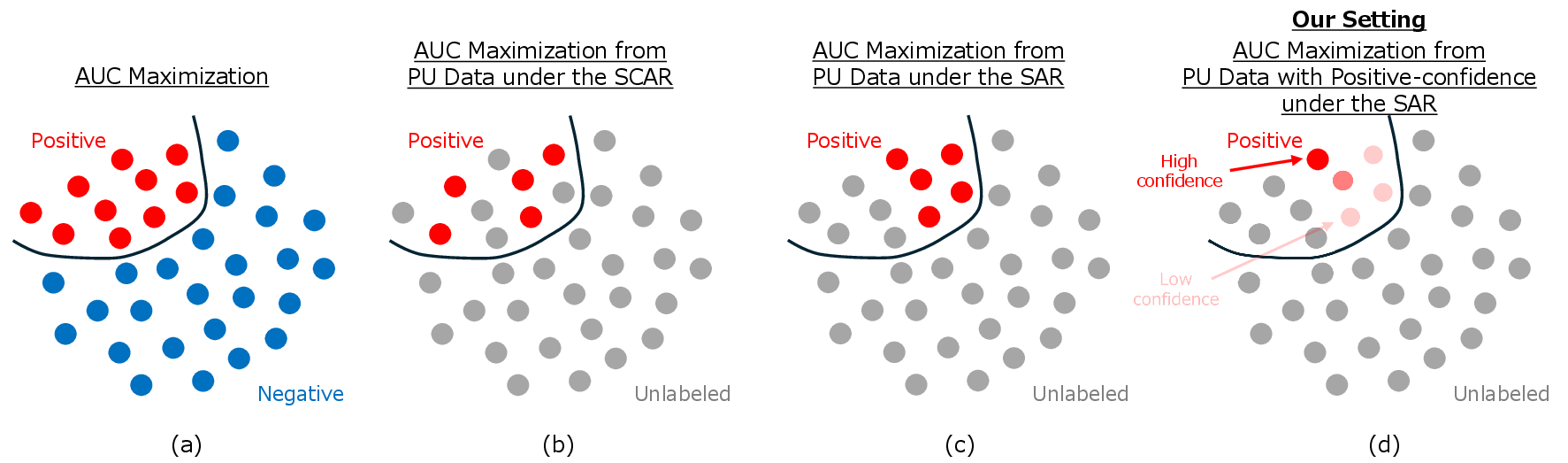}
  \caption{Illustrations of our problem setting and other related settings. Red, blue, and gray points are positive, negative, and unlabeled data, respectively. The dark/light red colors show high/low confidence values for positive data. 
  The black line is illustrated to visually distinguish positive and negative data.
  (a) Standard AUC maximization with PN data. (b) AUC Maximization from PU data under the SCAR setting where labeled positive data are unbiased. (c) AUC Maximization from PU data under the SAR setting where labeled positive data are biased. Note that there are no studies for this setting. (d) Our problem setting: AUC maximization from PU data with positive-confidence under the SAR setting. 
  In class-imbalanced settings, the number of labeled positive instances (each equipped with confidence) is small.
  }
  \label{overview}
\end{figure*}

Although labeled positive and negative (PN) data are required for AUC maximization,
negative data are often difficult to collect in practice.
For example, in cyber security, malicious data (positive data) can be collected from public blocklists, but benign data of legitimate users (negative data) are often unavailable due to privacy concerns, and 
identifying benign data from given unlabeled data requires a high level of expertise
\citep{mirsky2018kitsune,kumagaiauc,kumagaipositive}.
In medical care, although positive patient data can be collected on the basis of confirmed diagnoses, 
treating all other data as negative is problematic since
such data often include false-negatives or pre-symptomatic patients who have already contracted the disease~\citep{bekker2020learning}.
Similar situations occur in other applications such as product inspection
\citep{kumagaiauc} 
and fraud detection
\citep{su2021positive}.

To address such situations, several studies have proposed methods to maximize the AUC from
only positive and unlabeled (PU) data
\citep{sakai2018semi,xie2018semi,xie2024weakly}.
These methods assume that labeled positive data are unbiased samples from the true positive distribution, which is known as the
{\it selected completely at random} (SCAR) setting
\citep{elkan2008learning}.
Although this ideal assumption enables an unbiased estimator of the AUC to be derived with PU data,
it is often violated in practice since labeled positive data are often {\it biased}.
For example, blocklists tend to contain certain (widely known) types of malicious data, and
diseases that can be diagnosed are usually limited to well-understood cases.

To cope with such biased situations, some PU learning studies for ordinary classification have recently considered the {\it selected at random} (SAR) setting, where the probability of selecting positive data to be labeled depends on their feature values
\citep{kato2018learning,teisseyre2025learning,gerych2022recovering}.
While this setting is more realistic, these methods are not designed for maximizing the AUC.
In addition, to make the problem tractable, they require strong assumptions about data distributions or labeling mechanisms, such as the PN distributions being separated
\citep{gerych2022recovering}
or the selection probability of positive data following some restricted function forms
\citep{gerych2022recovering,he2018instance,teisseyre2025learning}.
When the assumption is violated, the performance of these methods drastically deteriorates.

To address this problem, we propose a method for maximizing the AUC under the SAR setting by using biased positive data equipped with {\it confidence} and unlabeled data, without such restrictive assumptions about data distributions or labeling mechanisms.
Here, the confidence is formally defined as the probability that a given instance is positive,
and we refer to this confidence for labeled positive data as positive-confidence.
Figure \ref{overview} illustrates our problem setting.

Such confidence need not be collected specifically for the proposed method; 
in many applications, it is already produced as a by-product of existing labeling or operational processes.
For example, when the same instance is labeled by multiple annotators to enhance reliability,
the average of these labels can naturally form confidence
\citep{ishidaperformance}.
Such multi-annotator labeling is used in various domains such as medical care, cyber security, and crowdsourcing
\citep{le2023learning,kantchelian2015better,tang2025confidence}.
Confidence-like scores are also routinely available in applications,
such as BI-RADS scores representing the likelihood of cancer in medical care
\citep{sato2025}, 
alert severity levels in security operations
\citep{jalalvand2024alert}, 
and risk scores in financial fraud detection. 
Moreover, automatic labeling systems, including LLM-based annotators and pre-trained classifiers, can also provide such information
\citep{ishida2018binary,tang2025confidence}. 
These examples motivate the growing body of confidence-based weakly supervised learning methods
\citep{ishida2018binary,cao2021learning,berthon2021confidence,feng2021pointwise,wang2023binary}.

However, to the best of our knowledge,
positive-confidence has not been used for PU learning under the SAR setting.
The key idea is that positive-confidence provides information that allows us to correct the bias in the labeled positive data.
By using such confidence, 
we derive an estimator of the AUC risk from biased PU data without relying on the strong assumptions about data distributions or labeling mechanisms required by existing SAR methods.
This enables us to maximize the AUC from biased PU data.
A practical concern is that obtained confidence may not be accurately calibrated as a probability.
We further show that the proposed method does not require exact calibration to recover the optimal AUC ranking: when the observed confidence is given by any strictly increasing transformation of the true posterior probability\footnote{We say that a function $h: [0,1] \to [0,1] $ is strictly increasing if it satisfies $h(a)<h(b)$ whenever $a<b$.}, 
minimizing the resulting AUC risk still yields the Bayes-optimal AUC ranking.
This indicates that the proposed method can also exploit confidence that is systematically over- or under-confident, as long as its ordering is preserved.
In addition, our method is model-agnostic: it can use any differentiable classifier such as linear classifier,
kernel-based classifier, and neural network, which is preferable in practice.

\section{Related Work}

Various methods for AUC maximization have been proposed
\citep{brefeld2005auc,ying2016stochastic,liu2020stochastic,yuan2021compositional,yang2022auc}.
Previous studies
\citep{fujino2016semi,yuan2021compositional,yuan2021large,wang2023novel}
have shown that AUC maximization methods often outperform other methods for imbalanced data
such as class balanced loss
\citep{charoenphakdee2019symmetric},
focal loss
\citep{lin2017focal},
or sampling-based methods
\citep{menardi2014training,chawla2002smote}.
However, these AUC maximization methods require both labeled PN data, and thus, they cannot be applied to our problem setting.

PU learning methods train binary classifiers by using only PU data
\citep{bekker2020learning}.
A representative approach is empirical risk minimization, which rewrites the empirical risk by using only PU data
\citep{sugiyama2022machine}.
Although most methods are designed for ordinary classification on balanced data
\citep{du2015convex,kiryo2017positive,jiang2023positive},
several studies have recently shown the AUC can also be maximized from PU data
\citep{sakai2018semi,xie2018semi,charoenphakdee2019symmetric,xie2024weakly}.
However, these PU learning and AUC maximization methods assume the SCAR setting, i.e., labeled positive data are unbiased, which is not often satisfied in practice.
This paper tackles the more realistic but challenging SAR setting where labeled positive data are biased. 

Some PU learning methods for the SAR setting have recently been proposed
\citep{bekker2019beyond,kato2018learning,gerych2022recovering,he2018instance,teisseyre2025learning}.
Although several studies used EM algorithm-based methods
\citep{bekker2019beyond,gong2021instance}, 
they lack theoretical guarantees for learning the true classifier
\citep{gerych2022recovering,teisseyre2025learning}.
To make the problem theoretically tractable, 
most studies impose restrictive assumptions about data distributions or labeling mechanisms, such as
the data separation assumption
\citep{gerych2022recovering},
the invariance of order assumption
\citep{kato2018learning},
and the probabilistic gap assumption
\citep{he2018instance,gerych2022recovering}.
However, when the assumption is violated, they cannot work well.
In addition, they are not designed for maximizing the AUC.
In contrast, the proposed method can maximize the AUC from biased PU data using positive-confidence, without relying on such restrictive assumptions.

Several confidence-based learning methods have recently been proposed
\citep{ishida2018binary,cao2021learning,berthon2021confidence,feng2021pointwise,wang2023binary}.
Especially,
\citet{ishida2018binary}
and
\citet{shinoda2021binary}
use positive-confidence to learn binary classifiers.
Although they do not require unlabeled data, they cannot be applied for maximizing the AUC, and require that labeled positive data are unbiased. 
One PU learning method uses positive-confidence information to deal with noisy labels
\citep{tang2025confidence}.
However, it is not designed for either AUC maximization or the SAR setting.
In addition, existing confidence-based learning methods typically formulate confidence as true posterior probabilities.
In contrast, the proposed method can maximize the AUC even if the observed confidence is given by any strictly increasing transformation of the true posterior.

\section{Preliminaries}
\label{sec:prelim}


We briefly introduce AUC maximization.
Let input instance ${\bf x} \in \mathcal{X} $ and its class label $y \in \{0,1\}$ be associated with probability density $p({\bf x},y)$,
where $1$ and $0$ correspond to PN classes, respectively.
The conditional probability densities for PN classes are denoted as
$p^{{\rm p}} ({\bf x}) := p({\bf x} | y=1)$ and $p^{{\rm n}} ({\bf x}) := p({\bf x} | y=0)$.
Let $s: \mathcal{X} \to \mathbb{R} $ be a score function that quantifies how likely an instance is positive.
The classifier is defined by the score function with threshold $t$: $\hat{y}= I ( s({\bf x}) \geq t )$, where $I(z)$ is the indicator function that outputs $1$ if $z$ is true and $0$ otherwise.

The AUC is the probability that a randomly drawn positive instance is ranked higher than a randomly drawn negative instance with ties counted as one half
\citep{yang2022auc}.
Formally, the AUC with score function $s$ can be represented as
\begin{align}
{\rm AUC} (s) &= \mathbb{E}_{{\bf x}^{{\rm p}} \sim p^{{\rm p}} ({\bf x} )} \mathbb{E}_{ {\bf x}^{{\rm n}} \sim p^{{\rm n}} ({\bf x} )} \left[ {\ell}_{\rm{01}} (s({\bf x}^{{\rm n}}) - s({\bf x}^{{\rm p}})) \right] \nonumber\\
&= 1 - \mathbb{E}_{{\bf x}^{{\rm p}} \sim p^{{\rm p}} ({\bf x} )} \mathbb{E}_{ {\bf x}^{{\rm n}} \sim p^{{\rm n}} ({\bf x} )} \left[ \ell_{{\rm 01}} (s({\bf x}^{{\rm p}}) - s({\bf x}^{{\rm n}})) \right],
\end{align}
where $\ell_{01}(z)$ is the zero-one loss, with
$\ell_{01}(z)=1$ if $z<0$, $0$ if $z>0$, and $1/2$ if $z=0$,
and $\mathbb{E}$ denotes the expectation.
Maximizing the AUC is equivalent to minimizing the following AUC risk,
\begin{align}
\label{auc_risk}
{\cal R} (s) := \mathbb{E}_{{\bf x}^{{\rm p}} \sim p^{{\rm p}} ({\bf x} )} \mathbb{E}_{ {\bf x}^{{\rm n}} \sim p^{{\rm n}} ({\bf x} )} \left[ \ell_{{\rm 01}} (s({\bf x}^{{\rm p}}) - s({\bf x}^{{\rm n}})) \right].
\end{align}
Since the gradient of the zero-one loss is zero almost everywhere, 
the AUC risk cannot be directly minimized with gradient descent methods.
To address this, a common approach is to use the smoothed AUC risk by replacing $\ell_{01}(z)$ with the sigmoid surrogate $\sigma(-z)$, where $\sigma(z) = 1/(1+{\rm exp}(-z))$
\citep{charoenphakdee2019symmetric}:
\begin{align}
\label{sauc}
{\cal R}_{\sigma} (s) \!:=\! \mathbb{E}_{{\bf x}^{{\rm p}} \sim p^{{\rm p}} ({\bf x} )} \mathbb{E}_{{\bf x}^{{\rm n}} \sim p^{{\rm n}} ({\bf x} )} \left[ \sigma (- s({\bf x}^{{\rm p}}) \!+\! s({\bf x}^{{\rm n}})) \right].
\end{align}

Suppose we have $N^{{\rm p}}$ positive instances $\{ {\bf x}^{{\rm p}} _1, \dots, {\bf x}^{{\rm p}} _{N^{{\rm p}}} \} $ drawn from $p^{{\rm p}} ({\bf x})$ and $N^{{\rm n}}$ negative instances  $\{ {\bf x}^{{\rm n}} _1, \dots, {\bf x}^{{\rm n}} _{N^{{\rm n}}} \} $ drawn from $p^{{\rm n}} ({\bf x})$,  the empirical estimator of Eq. \eqref{sauc} is given as
\begin{align}
{\widehat {\cal R}_{\sigma} } (s) = \frac{1}{N^{{\rm p}}N^{{\rm n}}} \sum_{n=1}^{N^{{\rm p}}} \sum_{m=1}^{N^{{\rm n}}} \sigma (-s({\bf x} _n^{{\rm p}})+s({\bf x} _m^{{\rm n}})).
\end{align}
By minimizing this empirical AUC risk w.r.t. the parameters of $s$, we can obtain good score functions to maximize the AUC
\citep{yang2022auc}.

\section{Proposed Method}
\label{prop_method_all}

\subsection{Problem Formulation}
\label{sec:prop_form}

We assume that there exists probability density $p({\bf x}, y, o)$, where 
${\bf x} \in \mathcal{X}$ is an input instance, $y \in \{ 0, 1\}$ is its class, and $o \in \{0, 1\}$ represents whether its class $y$ is observed (labeled). Here, $o=1$ denotes that its class is labeled, and $o=0$ denotes that its class is unlabeled. In PU learning setting, only positive data are labeled. This PU assumption can be formally represented as
\begin{align}
p(o=1| {\bf x}, y=0) = 0,
\end{align}
which means that negative data are never labeled
\citep{elkan2008learning}.
Propensity score $e({\bf x})$ is defined as
\begin{align}
e({\bf x}) := p(o=1|{\bf x}, y=1),
\end{align}
which represents the probability that positive instance ${\bf x}$ is labeled.
Most PU learning methods assume the SCAR setting, i.e., propensity score $e({\bf x})$ does not depend on feature values ${\bf x}$,
$e({\bf x}) = p(o=1|y=1)$
\citep{sakai2018semi,kiryo2017positive,xie2018semi,xie2024weakly}.
Although this assumption simplifies the problem, it is seldom met in practice.
Thus, this paper assumes the SAR setting,
\begin{align}
e({\bf x}) = p(o=1|{\bf x}, y=1) \not\equiv p(o=1| y=1),
\end{align}
which means that $e({\bf x})$ depends on feature values ${\bf x}$. 
This is a more realistic but difficult setting.
Following previous studies on biased PU learning
\citep{bekker2019beyond,bekker2020learning,gerych2022recovering},
we assume the positivity condition that
$e({\bf x})>0$ for $p^{\rm p}$-almost every ${\bf x}$.
 
For collecting PU data, there are two possible scenarios: the {\it one-sample scenario} (also known as the {\it censoring scenario})
\citep{elkan2008learning} 
and the {\it two-sample scenario} (also known as the {\it case-control scenario})~\citep{ward2009presence}. 
This paper focuses on the one-sample scenario since it is more commonly used in PU learning studies~\citep{bekker2020learning}.
We can easily modify the proposed method for the two-sample scenario, which is described in Section~\ref{ours_ts}.
In the one-sample scenario, a set of unlabeled data is first sampled from true marginal density $p({\bf x}) = \pi p^{{\rm p}} ({\bf x}) + (1-\pi) p^{{\rm n}} ({\bf x})$ where $\pi := p(y=1)$ is the positive class-prior.
Then, if instance ${\bf x}$ is positive, it can be labeled with probability $e({\bf x})$, 
and ${\bf x}$ remains unlabeled with probability $1-e({\bf x})$; if instance {\bf x} is negative, it is never labeled and thus ${\bf x}$ remains unlabeled with probability $1$. In our problem setting, we additionally assume that each labeled positive instance ${\bf x}$ 
is equipped with confidence $r({\bf x}) := p(y=1|{\bf x})$.
Note that this equality does not have to strictly hold as shown later. 
As a result, we are given PU data $X^{{\rm p}} \cup X$ with positive-confidence $R^{{\rm p}}$:
\begin{align}
\label{given_data}
X^{{\rm p}} &:= \{ {\bf x} ^{{\rm p}}_n \}_{n=1}^{N^{{\rm p}}} \sim p^{{\rm l}} ({\bf x}) := p({\bf x} | o=1), \ \ X := \{ {\bf x} _m \}_{m=1}^N \sim p^{{\rm u}} ({\bf x}) := p({\bf x} | o=0), \nonumber\\
R^{{\rm p}} &:= \{ r^{{\rm p}}_n | \ r^{{\rm p}}_n = p(y=1 | {\bf x} ^{{\rm p}}_n) \}_{n=1}^{N^{{\rm p}}} ,
\end{align}
where $p^{{\rm l}}({\bf x})$ and $p^{{\rm u}}({\bf x})$ are labeled (positive) and unlabeled densities, respectively.
From the data generation process, true marginal density $p({\bf x})$ can be also represented as
$p({\bf x}) = \alpha p^{{\rm l}} ({\bf x}) + (1-\alpha)  p^{{\rm u}} ({\bf x})$, where $c := p(o=1|y=1)$ and $\alpha := p(o=1) = c \pi$\footnote{$\alpha = p(o=1) = p(o=1|y=1)p(y=1) + p(o=1|y=0)p(y=0) = p(o=1|y=1)p(y=1) = c \pi$, where we used the PU property (negative data are never labeled, i.e., $p(o=1|y=0)=0$).}.
Note that $p^{{\rm l}} ({\bf x}) \neq p^{{\rm p}} ({\bf x})$ in the SAR setting.
Our goal is to learn score function $s$ that can maximize the AUC under the SAR setting by using given PU data with positive-confidence 
$X^{{\rm p}} \cup X \cup R^{{\rm p}}$.

\subsection{AUC Risk Estimator with Biased PU Data and Positive-confidence}

We derive the AUC risk estimator in the SAR setting from biased PU data with positive-confidence.
The objective function to be minimized is the following smoothed AUC risk,
\begin{align}
\label{obj_auc}
{\cal R}_{\sigma} (s) = \mathbb{E}_{{\bf x}^{{\rm p}} \sim p^{{\rm p}} ({\bf x} )} \mathbb{E}_{{\bf x}^{{\rm n}} \sim p^{{\rm n}}  ({\bf x} )} \left[ f ({\bf x}^{{\rm p}}, {\bf x}^{{\rm n}}) \right],
\end{align}
where we set $f({\bf x}^{{\rm p}}, {\bf x}^{{\rm n}}) := \sigma (-s({\bf x}^{{\rm p}})+s({\bf x}^{{\rm n}}))$.
Although this AUC risk depends on $p^{{\rm p}} ({\bf x} )$ and $p^{{\rm n}} ({\bf x} )$, data from both the distributions are unavailable in our setting; it seems impossible to calculate.
However, we show that it is in fact possible below.

First, by the definition
$p({\bf x}) = \pi p^{{\rm p}}({\bf x}) + (1-\pi)p^{{\rm n}}({\bf x})$,
the negative density can be represented as
\begin{align}
\label{neg_den}
p^{{\rm n}} ({\bf x}) = \frac{1}{1-\pi} \left[ p({\bf x}) - \pi p^{{\rm p}} ({\bf x}) \right].
\end{align}
By substituting Eq. \eqref{neg_den} into Eq. \eqref{obj_auc}, we obtain
\begin{align}
\label{obj_auc_won}
{\cal R}_{\sigma} (s) &= \frac{1}{1-\pi}\mathbb{E}_{{\bf x}^{{\rm p}} \sim p^{{\rm p}} ({\bf x} )} \mathbb{E}_{{\bf x} \sim p ({\bf x} )} \left[ f ({\bf x}^{{\rm p}}, {\bf x} ) \right] - \frac{\pi}{1-\pi} \mathbb{E}_{{\bf x}^{{\rm p}} \sim p^{{\rm p}} ({\bf x} )} \mathbb{E}_{{\bf {\bar x}}^{{\rm p}} \sim p^{{\rm p}}  ({\bf x} )} \left[ f ({\bf x}^{{\rm p}}, {\bf {\bar x}}^{{\rm p}}) \right].
\end{align}
Here, $\mathbb{E}_{{\bf x}^{{\rm p}} \sim p^{{\rm p}} ({\bf x} )} \mathbb{E}_{{\bf {\bar x}}^{{\rm p}} \sim p^{{\rm p}}  ({\bf x} )} \left[ f ({\bf x}^{{\rm p}}, {\bf {\bar x}}^{{\rm p}}) \right]=1/2$
due to the symmetry of sigmoid function $\sigma$
\citep{xie2018semi,charoenphakdee2019symmetric,xie2024weakly}
(The proof is described in Section~\ref{const_lem}).
Thus, the AUC risk can be represented as
\begin{align}
\label{obj_auc_won_c}
{\cal R}_{\sigma} (s) = \frac{1}{1-\pi}\mathbb{E}_{{\bf x}^{{\rm p}} \sim p^{{\rm p}} ({\bf x} )} \mathbb{E}_{{\bf x} \sim p ({\bf x} )} \left[ f ({\bf x}^{{\rm p}}, {\bf x} ) \right] - \frac{\pi}{2(1-\pi)}.
\end{align}

Labeled density $p^{{\rm l}} ({\bf x})$ and positive density $p^{{\rm p}} ({\bf x})$ are related as follows,
\begin{align}
\label{rel_po}
p^{{\rm l}} ({\bf x}) = \frac{e({\bf x})}{c} p^{{\rm p}}({\bf x}),
\end{align}
which means that a positive instance is first sampled from $p^{{\rm p}} ({\bf x})$ and then is observed as labeled data with probability in accordance with propensity score $e({\bf x})$
\citep{bekker2020learning}.
The derivation of Eq. \eqref{rel_po} is described in Section \ref{lem1}.
Under the positivity condition described in Section \ref{sec:prop_form},
Eq.~\eqref{rel_po} can be inverted as
$p^{{\rm p}}({\bf x}) = c p^{\rm l} ({\bf x})/e({\bf x})$ for $p^{\rm p}$-almost every ${\bf x}$.

Additionally, by the PU assumption,
$p(o=1|{\bf x})=e({\bf x})p(y=1|{\bf x})$,
as shown in Section~\ref{lem2}.
Thus, for $p(y=1|{\bf x})>0$, we obtain
\begin{align}
\label{e_sy}
e({\bf x}) = \frac{p(o=1|{\bf x})}{p(y=1|{\bf x})}.
\end{align}
Since
$p^{{\rm p}}({\bf x})
= p(y=1|{\bf x})p({\bf x})/\pi$,
we have $p(y=1|{\bf x})>0$ for
$p^{{\rm p}}$-almost every ${\bf x}$.
Thus, Eq.~\eqref{e_sy} can be used together with the inverted form of
Eq.~\eqref{rel_po} in the $p^{{\rm p}}$-expectation in
Eq.~\eqref{obj_auc_won_c}.\footnote{
The denominator in the resulting expectation is also nonzero almost surely.
In fact, since
$p^{{\rm l}}({\bf x})
= p(o=1|{\bf x})p({\bf x})/p(o=1)$,
we have $p(o=1|{\bf x})>0$ for
$p^{{\rm l}}$-almost every ${\bf x}$.
}
We can therefore rewrite Eq.~\eqref{obj_auc_won_c} as
\begin{align}
\label{obj_auc_ours}
{\cal R}_{\sigma} (s) &= \frac{c}{1-\pi}\mathbb{E}_{{\bf x}^{{\rm p}} \sim p^{{\rm l}} ({\bf x})} \mathbb{E}_{{\bf x} \sim p ({\bf x} )} \left[ \frac{p(y=1|{\bf x}^{{\rm p}})}{p(o=1|{\bf x} ^{{\rm p}})} f ({\bf x}^{{\rm p}}, {\bf x} ) \right] -\frac{\pi}{2(1-\pi)}.
\end{align}
From this equation, we can see that weight $p(y=1|{\bf x}^{{\rm p}})/p(o=1|{\bf x}^{{\rm p}})$ becomes larger for labeled positive instance ${\bf x}^{{\rm p}}$ with higher $p(y=1|{\bf x}^{{\rm p}})$ (i.e., more likely to be positive) and lower $p(o=1|{\bf x}^{{\rm p}})$ (i.e., rarer as labeled data).
By substituting $p({\bf x}) = \alpha p^{{\rm l}} ({\bf x}) + (1-\alpha)  p^{{\rm u}} ({\bf x})$ into Eq. \eqref{obj_auc_ours},
we obtain
\begin{align}
\label{obj_auc_ours2}
{\cal R}_{\sigma} (s) &= \frac{c}{1-\pi} \left[ \alpha \mathbb{E}_{{\bf x}^{{\rm p}} \sim p^{{\rm l}} ({\bf x})} \mathbb{E}_{{\bar {\bf x}}^{{\rm p}} \sim p^{{\rm l}} ({\bf x} )} \left[ \frac{p(y=1|{\bf x}^{{\rm p}})}{p(o=1|{\bf x} ^{{\rm p}})} f ({\bf x}^{{\rm p}}, {\bar {\bf x}}^{\rm p} ) \right]  \right. \nonumber\\
&+ \left. (1-\alpha) \mathbb{E}_{{\bf x}^{{\rm p}} \sim p^{{\rm l}} ({\bf x})} \mathbb{E}_{{\bf x} \sim p^{{\rm u}} ({\bf x} )} \left[ \frac{p(y=1|{\bf x}^{{\rm p}})}{p(o=1|{\bf x} ^{{\rm p}})} f ({\bf x}^{{\rm p}}, {\bf x} ) \right] \right] -\frac{\pi}{2(1-\pi)}.
\end{align}
Here, we train probabilistic classifier $\hat{u}({\bf x})$ to estimate $u({\bf x}):=p(o=1|{\bf x})$ using given training data $X^{{\rm p}} \cup X$.
We also estimate labeled probability $\alpha=p(o=1)$ as $\hat{\alpha} := N^{{\rm p}}/(N^{{\rm p}} + N)$.
In addition, the confidence for labeled positive instances, $r_n^{{\rm p}} = p(y=1|{\bf x}_n^{{\rm p}})$, is given in our setting.
Thus, by replacing $u({\bf x}) = p(o=1|{\bf x})$ and $\alpha$ with their estimates and each expectation with the sample average in Eq.~\eqref{obj_auc_ours2}, we obtain the following estimator of the AUC risk:
\begin{align}
\label{obj_auc_ours2_emp}
{\hat {\cal R}}_{\sigma} &(s) \!=\! \frac{c}{1-\pi} \left[ \frac{\hat{\alpha}}{N^{{\rm p}}(N^{{\rm p}}-1)} \sum_{n \neq m}^{N^{{\rm p}},N^{{\rm p}}} \frac{r_n^{{\rm p}}}{{\hat u}_n^{{\rm p}}} f ({\bf x}_n^{{\rm p}}, {\bar {\bf x}}_m^{\rm p} ) \right. + \left. \frac{(1-\hat{\alpha})}{N^{{\rm p}}N} \sum_{n, m=1}^{N^{{\rm p}},N} \frac{r_n^{{\rm p}}}{\hat{u} _n^{{\rm p}}} f ({\bf x}_n^{{\rm p}}, {\bf x}_m )  \right] -\frac{\pi}{2(1-\pi)}.
\end{align}
where $\hat{u}_n^{{\rm p}} := \hat{u}({\bf x}_n^{{\rm p}})$, and we omit the case of $n=m$ in the first term to avoid biases following 
\citep{sakai2018semi}.
Since coefficient $c/(1-\pi)$ and constant $\pi/(2(1-\pi))$ do not affect the optimization of $s$, we can safely ignore them for training.
This property is preferable in practice since $\pi$ and $c$ are difficult or impossible to estimate
\citep{bekker2020learning}.
Note that although we use the sigmoid function in the AUC risk in Eq. \eqref{obj_auc}, 
we can derive the AUC risk estimator of the same form in Eq.~\eqref{obj_auc_ours2_emp}
whenever we use symmetric functions (i.e., function $\sigma$ satisfying $\sigma(z)+\sigma(-z)=k$ for any $z \in \mathbb{R}$ and $k$ is a constant).
This is because the second term in Eq.~\eqref{obj_auc_won} also becomes constant. 
The symmetric functions include many common loss functions such as sigmoid, ramp, and unhinged functions
\citep{charoenphakdee2019symmetric}.
Even when using non-symmetric functions,
we can derive a slightly modified version of the AUC risk estimator, which also enables us to maximize the AUC without requiring $\pi$ and $c$. The details are described in Section \ref{nonsymauc}.
We further analyze the sensitivity of the rewritten risk to labeling probability estimation errors 
in Section \ref{apen:u_error}
and the generalization behavior of its empirical estimator
in Section \ref{apen:generalization}.

Algorithm \ref{arg} shows the training procedure of the proposed method with stochastic gradient descent methods.
We first set mini-batch sizes $U$ and $P$ so as to maintain the original label ratio (Line 1). 
Then, we train probabilistic binary classifier $\hat{u}({\bf x})$ to estimate $p(o=1|{\bf x})$ with PU data $X^{{\rm p}} \cup X$ (Line 2).
Then, we randomly sample PU data with positive-confidence from given data $X^{{\rm p}} \cup X \cup R^{{\rm p}}$ (Lines 4--5).
We calculate the loss in Eq.~\eqref{obj_auc_ours2_emp} for learning score function $s$ with trained classifier $\hat{u}({\bf x})$ (Line 6).
We update parameters of score function $s$ by using the gradient of the loss (Line 7).
In the gradient calculation of the last step, parameters of trained classifier $\hat{u}({\bf x})$ are frozen.

\subsection{Robustness to Strictly Increasing Transformations of Positive-Confidence}
\label{robsut_pcon}

By replacing true posterior $p(y=1|{\bf x})$ in Eq.~\eqref{obj_auc_ours} with observed confidence $r({\bf x})$, we obtain the objective, which is exactly equivalent to the original AUC risk when 
$r({\bf x})=p(y=1|{\bf x})$.
However, we show below that this exact equality is not necessary to recover the Bayes-optimal ranking for the AUC.
\begin{proposition}
Suppose that the available confidence is $r'({\bf x}) = h\left(p(y=1|{\bf x})\right)$, where $h:[0,1]\to[0,1]$ is an arbitrary strictly increasing function. 
Then, the objective obtained by replacing $p(y=1|{\bf x})$ with $r'({\bf x})$ in Eq. \eqref{obj_auc_ours}, i.e., 
\begin{align}
\label{obj_auc_ours_strans}
{\cal R}_{\sigma}^{h} (s) &= \frac{c}{1-\pi}\mathbb{E}_{{\bf x}^{{\rm p}} \sim p^{{\rm l}} ({\bf x})} \mathbb{E}_{{\bf x} \sim p ({\bf x} )} \left[ \frac{r'({\bf x}^{{\rm p}})}{p(o=1|{\bf x} ^{{\rm p}})} f ({\bf x}^{{\rm p}}, {\bf x} ) \right] -\frac{\pi}{2(1-\pi)}
\end{align}
induces the same Bayes-optimal ranking for the AUC as the original AUC risk in Eq. \eqref{obj_auc}.
\end{proposition}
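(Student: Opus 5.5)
Our plan is to undo the importance weighting: rewrite ${\cal R}_{\sigma}^{h}$ as an ordinary pairwise risk between a reweighted ``positive'' density and the marginal $p({\bf x})$. We then use the standard fact that the Bayes-optimal AUC ranking is any strictly increasing transformation of the likelihood ratio between the two pairwise distributions.

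\textbf{Step 1 (reweighting).} We use Eq.~\eqref{rel_po} and Eq.~\eqref{e_sy}, i.e. $p^{\rm l}({\bf x}) = p(o=1|{\bf x})p({\bf x})/\alpha$ with $\alpha = c\pi$. For $p^{\rm l}$-a.e.\ ${\bf x}$,
\begin{align*}
c\, p^{\rm l}({\bf x}) \frac{r'({\bf x})}{p(o=1|{\bf x})} = \frac{h(\eta({\bf x}))\, p({\bf x})}{\pi},
\end{align*}
where $\eta({\bf x}) := p(y=1|{\bf x})$. We define the density $q({\bf x}) := h(\eta({\bf x}))p({\bf x})/Z$ with $Z := \mathbb{E}_{p}[h(\eta)]$. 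Up to the positive constant $Z/(\pi(1-\pi))$ and an additive constant, this gives
\begin{align*}
{\cal R}_{\sigma}^{h}(s) \propto \mathbb{E}_{{\bf x}^{\rm p}\sim q}\mathbb{E}_{{\bf x}\sim p}\left[ f({\bf x}^{\rm p},{\bf x}) \right].
\end{align*}
One subtlety needs a remark. Outside the support of $p^{\rm l}$ the two integrands may differ. We will follow the paper's positivity convention, $e>0$ $p^{\rm p}$-a.e., which makes $\mathrm{supp}\, p^{\rm l}$ coincide with $\{\eta>0\}$. If $h(0)>0$, the region $\{\eta=0\}$ carries no labeled mass, so $q$ should be taken proportional to $h(\eta)\mathbb{1}[\eta>0]p$. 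This is still a nondecreasing function of $\eta$, strictly increasing on $\{\eta>0\}$, and we will carry it through.

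\textbf{Step 2 (Bayes-optimal ranking).} Write $\mathbb{E}_q\mathbb{E}_p[f] = \iint q({\bf x})p({\bf x}')\ell(s({\bf x})-s({\bf x}'))$ with $\ell$ the zero-one loss, or the symmetric sigmoid surrogate. Symmetrize over each unordered pair $\{{\bf x},{\bf x}'\}$. The contribution of the pair is $q({\bf x})p({\bf x}')\ell(s({\bf x})-s({\bf x}')) + q({\bf x}')p({\bf x})\ell(s({\bf x}')-s({\bf x}))$. This is minimized pointwise by ranking ${\bf x}$ above ${\bf x}'$ iff $q({\bf x})/p({\bf x}) > q({\bf x}')/p({\bf x}')$. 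Hence the Bayes-optimal scores are exactly those ordered like $g({\bf x}) := q({\bf x})/p({\bf x}) \propto h(\eta({\bf x}))$. Doing the same for Eq.~\eqref{obj_auc} (equivalently Eq.~\eqref{obj_auc_won_c} with $q=p^{\rm p}\propto \eta p$) gives the optimal ordering by $\eta({\bf x})$. This is the known result that the AUC-optimal ranking is monotone in the posterior.

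\textbf{Step 3 (conclusion).} Since $h$ is strictly increasing, $h(\eta({\bf x}))>h(\eta({\bf x}'))$ iff $\eta({\bf x})>\eta({\bf x}')$. So the two objectives induce the same pairwise preferences, hence the same set of optimal rankings, up to ties on null sets.

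\textbf{Main obstacle.} Step 2 is where care is needed. The pointwise pair argument directly yields the optimum for the zero-one loss. For the sigmoid surrogate we will instead argue that the minimizing ranking among all measurable $s$ is ordered by the same ratio. The tool is a rearrangement or swap argument: exchanging the scores of any misordered pair does not increase the risk, since $\sigma(-z)+\sigma(z)=1$ makes the pair term affine in $\sigma(-z)$ with coefficient $q({\bf x})p({\bf x}') - q({\bf x}')p({\bf x})$. If this proves delicate, we will state the proposition in terms of the zero-one AUC risk, which is what ``Bayes-optimal ranking for the AUC'' refers to.
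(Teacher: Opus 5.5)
Your argument has the same skeleton as the paper's proof. Step~1 is what the paper does with its change-of-measure lemma $\mathbb{E}_{p^{\rm l}}[\varphi/u]=\alpha^{-1}\mathbb{E}_{p}[\varphi\, I(\eta>0)]$. Up to a positive factor and an additive constant, ${\cal R}^{h}_{\sigma}$ becomes the sigmoid bipartite-ranking risk between $q_h\propto h(\eta)I(\eta>0)\,p$ and $p$. The paper includes the indicator for exactly the case $h(0)>0$ that you flag. Your final comparison is also the paper's, except that you use $p^{\rm p}/p\propto\eta$ where the paper uses $p^{\rm p}/p^{\rm n}=\frac{1-\pi}{\pi}\frac{\eta}{1-\eta}$; both are strictly increasing in $\eta$.

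Where you differ is Step~2. The paper proves nothing there: it invokes the AUC-consistency of the sigmoid loss and the Bayes characterization of bipartite ranking. You propose to derive both, which makes your proof self-contained. Your pairwise symmetrization is a correct proof of the Bayes characterization for the zero-one loss.

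For the sigmoid, the swap argument does not work as stated, for three reasons:
\begin{itemize}
\item in a continuous setting, exchanging the scores of two points changes nothing;
\item for atoms with $p({\bf x})\neq p({\bf x}')$, the swap changes the cross terms with third points by an amount of no definite sign;
\item the sigmoid infimum is not attained, so there is no ``minimizing ranking'' to compare.
\end{itemize}

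Use your affine identity globally instead. Set $z=s({\bf x})-s({\bf x}')$ and $D=q({\bf x})p({\bf x}')-q({\bf x}')p({\bf x})$. The unordered-pair terms are $q({\bf x}')p({\bf x})+D\,\sigma(-z)$ for the sigmoid and $q({\bf x}')p({\bf x})+D\,\ell_{01}(z)$ for the zero-one loss. Both have the common pairwise infimum $\min\{q({\bf x})p({\bf x}'),q({\bf x}')p({\bf x})\}$. With $w=z\,{\rm sgn}(D)$, the pairwise excesses over it are $|D|\,\sigma(-w)$ and $|D|\,\ell_{01}(w)$.
\begin{itemize}
\item Take the scorers $\lambda\,h(\eta)I(\eta>0)$ with $\lambda\to\infty$. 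By dominated convergence, the integral of the pairwise infima is also the global sigmoid infimum.
\item Since $\ell_{01}(w)\le 2\sigma(-w)$, the excess zero-one risk is at most twice the excess sigmoid risk for every $s$.
\end{itemize}
This is exactly the consistency result the paper imports.

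Do not fall back to the zero-one risk. ${\cal R}^{h}_{\sigma}$ is defined with the sigmoid, and the statement is about that objective.

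Finally, state two small facts explicitly:
\begin{itemize}
\item $Z>0$, because $\pi=\mathbb{E}_p[\eta]>0$ and $h(z)I(z>0)>0$ for $z>0$.
\item $h(z)I(z>0)$ is strictly increasing on all of $[0,1]$, because $h(z)>h(0)\ge 0$ for $z>0$.
\end{itemize}
Once the indicator is present, Step~3 should compare the values $h(\eta)I(\eta>0)$ rather than $h(\eta)$.
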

The proof is provided in Section \ref{apen:monotonic_transform}.
This result shows that the proposed method can recover the Bayes-optimal ranking for the AUC even when the available confidence is given as any strictly increasing transformation of the true posterior probability which includes systematic over- or under-confidence.
In this sense, the proposed method can treat a broader class of confidence than true posterior probabilities.
\begin{algorithm}[t]
\caption{Training procedure of the proposed method}
\label{arg}
\begin{algorithmic}[1]
\REQUIRE Biased PU data with positive-confidence $X^{{\rm p}} \cup X \cup R^{{\rm p}}$ and total mini-batch size $M$
\ENSURE Model parameters of score function $s$
\STATE{Partition $M$ into unlabeled and positive mini-batch sizes, $U$ and $P$, so that $P/(P+U) = N^{{\rm p}}/(N^{{\rm p}}+N)=: \hat{\alpha}$}
\STATE{Train classifier $\hat{u}({\bf x})$ to estimate $p(o=1|{\bf x})$ with PU data $X^{{\rm p}} \cup X$}
\REPEAT
\STATE{Sample unlabeled data with size $U$ from $X$}
\STATE{Sample positive data with size $P$ and their confidence from $X^{{\rm p}} \cup R^{{\rm p}}$}
\STATE{Calculate the loss in Eq. \eqref{obj_auc_ours2_emp} on the sampled data with trained classifier $\hat{u}({\bf x})$}
\STATE{Update parameters of $s$ with the gradient of the loss}
\UNTIL{End condition is satisfied;}
\end{algorithmic}
\end{algorithm}

\section{Experiments}
\label{exps}

\subsection{Data}
\label{sec:data}

We mainly used six real-world datasets: 
Mnist
\citep{lecun1998gradient}, 
FashionMnist (Fmnist)
\citep{xiao2017fashion}, 
Svhn
\citep{netzer2011reading}, 
Cifar10
\citep{krizhevsky2009learning},
Diabetes
\citep{gardner2024benchmarking},
and 
Blood
\citep{gardner2024benchmarking}.
The first four datasets are image datasets, while the remaining two are tabular datasets.
These datasets have been commonly used in PU learning studies
\citep{kumagaiauc,kumagaipositive,kumagai2025importance,xie2024weakly,kiryo2017positive,jiang2023positive}.
Following the previous studies
\citep{kumagai2025importance,kumagaiauc,kumagaipositive,xie2024weakly}, 
we constructed binary classification problems for the image datasets by partitioning their original classes into positive and negative classes, while using the original binary labels for Diabetes and Blood.
The detailed construction is provided in Section~\ref{apen:dataset}.

For training and validation in each dataset, we first randomly sampled $5,000$ and $1,000$ instances from the original data
with positive class-prior $\pi$, respectively.
To create the class-imbalanced data, we set $\pi$ to a small value.
Specifically, we changed $\pi$ within $\{ 0.05, 0.1, 0.15, 0.2 \}$ in our experiments.
Then, we set positive labeling rate $c=p(o=1|y=1)$ to $0.1$\footnote{Since the number of labeled positive training data $N^{{\rm p}}$ is determined as $N^{{\rm p}} = V p(o=1) = V p(o=1|y=1)p(y=1) = V c \pi$, where $V$ denotes the number of initially sampled training instances (i.e., $V=5,000)$,
$N^{{\rm p}}$ becomes $25$, $50$, $75$, and $100$ for the respective values of $\pi$.}.
We then selected a fraction $c$ of the positive instances within the initially sampled data as labeled positive data according to a biased sampling scheme.
Specifically, to induce sampling bias, we made instances from a subset of the positive classes more likely to be labeled for the image datasets and introduced a distance-based selection bias for the tabular datasets.
The detailed selection procedure for each dataset and results under other selection biases are provided in Sections \ref{apen:dataset} and \ref{sec_insdep_bias}, respectively.
We used $2,500$ positive and $2,500$ negative data as test data for evaluation.

In the main experiments, positive-confidence was estimated using a probabilistic classifier trained on $10,000$ labeled instances with class-prior $\pi$, following the standard evaluation protocol in previous confidence-based learning studies
\citep{ishida2018binary,cao2021learning,berthon2021confidence,wang2023binary}.
Note that these labeled data were used solely to generate confidence values for controlled evaluation.
This protocol enables us to quantitatively assess the effectiveness of our AUC risk estimator.
Confidence obtained from pre-trained classifiers is also practically relevant, as such classifiers are widely deployed in many real-world systems.
Note that results with positive-confidence obtained from real-world human annotators are also reported in Section \ref{sec:results}.
All labeled positive data for training and validation were equipped with positive-confidence.
Training data, validation data, test data, and data for the classifier to estimate positive-confidence did not overlap.
We conducted 10 experiments for each positive class-prior $\pi$ changing the random seeds and evaluated the mean test AUC.

\subsection{Comparison Methods}
\label{subsec:comp}

We compared the proposed method (Ours) with eight methods:
non-traditional classifier-based method (NTC)
\citep{elkan2008learning},
non-negative PU learning method (nnPU)
\citep{kiryo2017positive},
AUC maximization method with PU data (PUAUC)
\citep{xie2018semi,xie2024weakly},
PU learning method with selection bias (PUSB)
\citep{kato2018learning},
probabilistic gap-based method (PG)
\citep{gerych2022recovering},
positive-confidence-based method (Pconf)
\citep{ishida2018binary},
confidence-based noisy PU learning method (NPU)
\citep{tang2025confidence},
and the proposed method without positive-confidence (w/oConf).
We further evaluated LBE, a PU learning method under the SAR setting in Section \ref{apen:lbe}.
All methods including our method used neural networks for modeling classifiers.

NTC learns the binary classifier with PU data by simply treating all unlabeled data as negative data.
The proposed method also used it for estimating $p(o=1|{\bf x})$.
nnPU and PUAUC are PU learning methods for the SCAR setting (i.e., they assume that labeled positive data are unbiased).
Specifically, nnPU learns the classifier by minimizing the non-negative PU risk with PU data.
PUAUC learns the classifier by minimizing the AUC risk that is calculated with PU data.
PUSB and PG are PU learning methods for the SAR setting (i.e., they assume that labeled positive data are biased).
To deal with the SAR setting, PUSB assumes that $p(o=1|{\bf x})$ and $p(y=1|{\bf x})$ induce the same order on the feature space.
PG assumes propensity score $e({\bf x})$ to be a linear function of $p(y=1|{\bf x})$.
PUSB and PG do not use positive-confidence information.
In contrast, Pconf and NPU use positive-confidence information as in the proposed method.
Pconf learns the classifier by using only positive data with confidence.
NPU learns the classifier by using PU data with positive-confidence, which is designed to handle label noise in labeled positive data.
w/oConf is the proposed method that assumes $r_n^{{\rm p}} = 1$ for all labeled positive data in Eq. \eqref{obj_auc_ours2_emp}.
We compared it to evaluate the effectiveness of using positive-confidence in our framework. 
For nnPU, PUSB, and NPU, the absolute loss correction was used for preventing overfitting
\citep{kumagai2025importance}.

For the proposed method and w/oConf, we rounded up $\hat{u}_n^{{\rm p}} = \hat{u}({\bf x}_n^{{\rm p}})$ in Eq. \eqref{obj_auc_ours2_emp} 
less than $0.01$ to $0.01$ to stabilize the
training process following the previous study
\citep{ishida2018binary}.
We empirically confirm the robustness to this clipping value in Section \ref{apen:clipping}.
The empirical risk with validation PU data was used for early-stopping to mitigate overfitting.
All methods were implemented using PyTorch
\citep{paszke2017automatic}.
The details of settings such as network architectures are described in Section \ref{arc}.

\subsection{Results}
\label{sec:results}

Table \ref{result_all} shows the average test AUCs of each method on the six real-world datasets.
Full results including the standard deviations are described in Section \ref{sec:full_result}.
The proposed method performed the best or comparably to it in all cases.
NTC tended not to work well since it treats all unlabeled data as negative, which leads to biased results.
nnPU and PUAUC did not work well since they are not designed for the SAR setting.
In particular, since the main difference between the proposed method and PUAUC is whether the SAR setting is taken into account or not, 
this result indicates the importance of considering the bias of labeled positive data in AUC maximization framework.
PUSB and PG, which are PU learning methods for the SAR setting, tended to perform worse than the proposed method 
because their assumptions to deal with the SAR setting were invalid in our experiments.
By using positive-confidence information, the proposed method was able to perform well without relying on such assumptions.
Since Pconf does not use unlabeled data and assumes that labeled positive data are unbiased, it also did not work well even with positive-confidence. NPU, which uses positive-confidence in the noisy PU learning framework, also did not work well since it is not designed for either AUC maximization or the SAR setting.
The proposed method often outperformed w/oConf, which is the proposed method without positive-confidence.
Thus, this result indicates the effectiveness of using positive-confidence information in our framework.
Overall, these results show the effectiveness of the proposed method.
The results with each $\pi$ value are described in Section \ref{pi_graph}. The proposed method worked well in each case.

\begin{table*}[t]
\caption{Average test AUCs over different positive class-prior $\pi$ within $\{0.05, 0.1, 0.15, 0.2\}$. We set $c=p(o=1|y=1)=0.1$. 
Values in bold are not statistically different at the $5\%$ level from the best performing method in each row according to a paired t-test.
}
\label{result_all}
\centering
\scalebox{0.95}{
\begin{tabular}{lrrrrrrrrrr}
\hline
Data & \multicolumn{1}{c}{Ours} & \multicolumn{1}{c}{w/oConf} & \multicolumn{1}{c}{NTC} & \multicolumn{1}{c}{nnPU} & \multicolumn{1}{c}{PUAUC} &  \multicolumn{1}{c}{PUSB} & \multicolumn{1}{c}{PG} & \multicolumn{1}{c}{Pconf} & \multicolumn{1}{c}{NPU} \\
\hline
Mnist & \bf{0.8979} & 0.8769 & 0.8575 & 0.8196 & 0.8828 & 0.8841 & 0.8575 & 0.8193 & 0.8285  \\
Fmnist & \bf{0.9495} & 0.9371 & 0.9165 & 0.9296 & 0.9409 & \bf{0.9474} & 0.9165 & 0.7737 & \bf{0.9470}  \\
Svhn & \bf{0.6673} & \bf{0.6666} & 0.5328 & 0.4885 & \bf{0.6704} & 0.5965 & 0.5328 & 0.4900 & 0.4893  \\
Cifar10 & \bf{0.8643} & 0.8371 & 0.8013 & 0.4819 & \bf{0.8620} & 0.8442 & 0.8013 & 0.4767 & 0.4883  \\
Diabetes & \bf{0.7342} & 0.6926 & \bf{0.7357} & 0.5234 & 0.7171 & 0.7230 & \bf{0.7357} & 0.3974 & 0.4626  \\
Blood & \bf{0.5509} & 0.5287 & 0.5409 & 0.4847 & 0.5395 & \bf{0.5450} & 0.5409 & 0.4962 & 0.4828  \\
\hline
\end{tabular}
}
\end{table*}

We evaluated the performance of the proposed method when the confidence was distorted by strictly increasing transformation $h$.
As described in Section \ref{robsut_pcon}, minimizing the proposed AUC objective with such transformed confidence still leads to the Bayes-optimal ranking for the AUC.
Thus, the proposed method is expected to be empirically robust to such transformations.
We considered transformation $h(r)=r^k$, where $k>0$, with $k=1$ corresponding to the original confidence.
Figure~\ref{fig:strans_main_rk} shows the average test AUCs with their standard errors when changing $k$ of the transformation $h(r)=r^k$.
Overall, the performance of the proposed method remained relatively stable across different transformations and showed no systematic degradation as $k$ moved away from $1$.
In contrast, the performances of Pconf that also uses positive-confidence substantially degraded for some values of $k$ because it does not have the ranking-invariance property of the proposed method.
Results for other transformations are also reported in Section \ref{apend:strans_results}.
We also evaluated the robustness of the proposed method to additive Gaussian noise in positive-confidence in Section \ref{apend:results_gnoise}.

\begin{figure*}[t]
    \centering
    \begin{minipage}{0.23\linewidth}
        \centering
        \includegraphics[width=3.4cm]{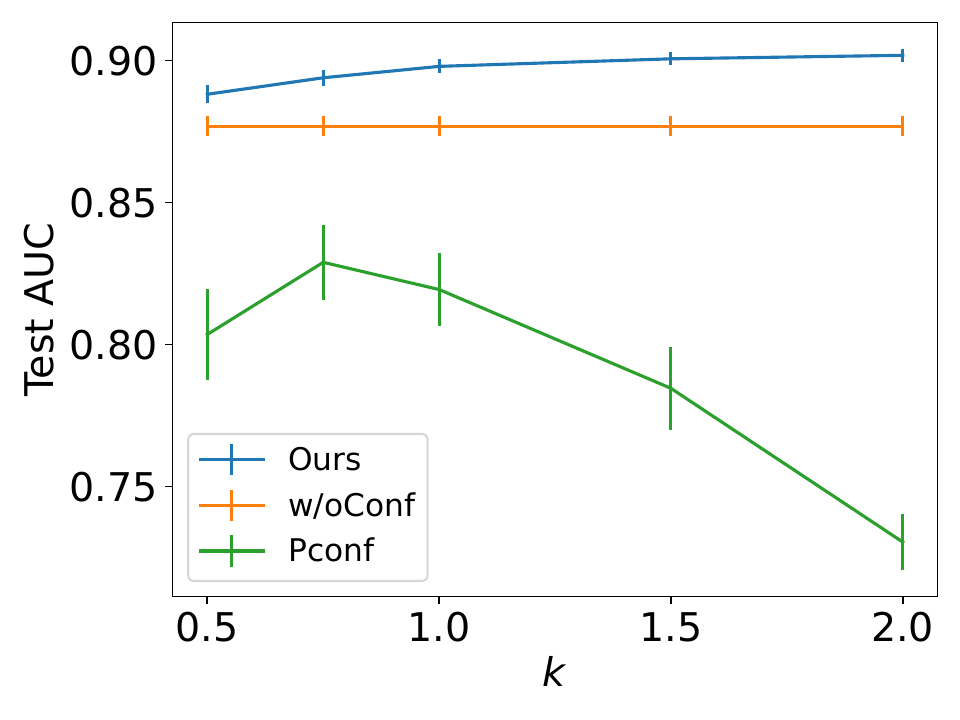}
        \subcaption{Mnist}
    \end{minipage}
    \hfill
    \begin{minipage}{0.23\linewidth}
        \centering
        \includegraphics[width=3.4cm]{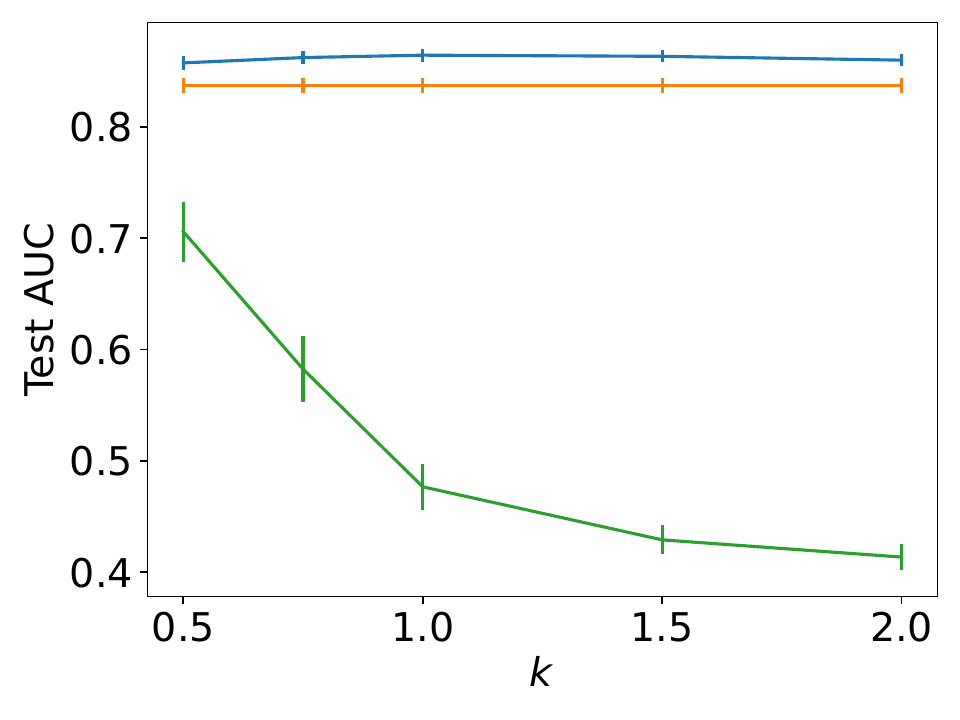}
        \subcaption{Cifar10}
    \end{minipage}
    \hfill
    \begin{minipage}{0.23\linewidth}
        \centering
        \includegraphics[width=3.4cm]{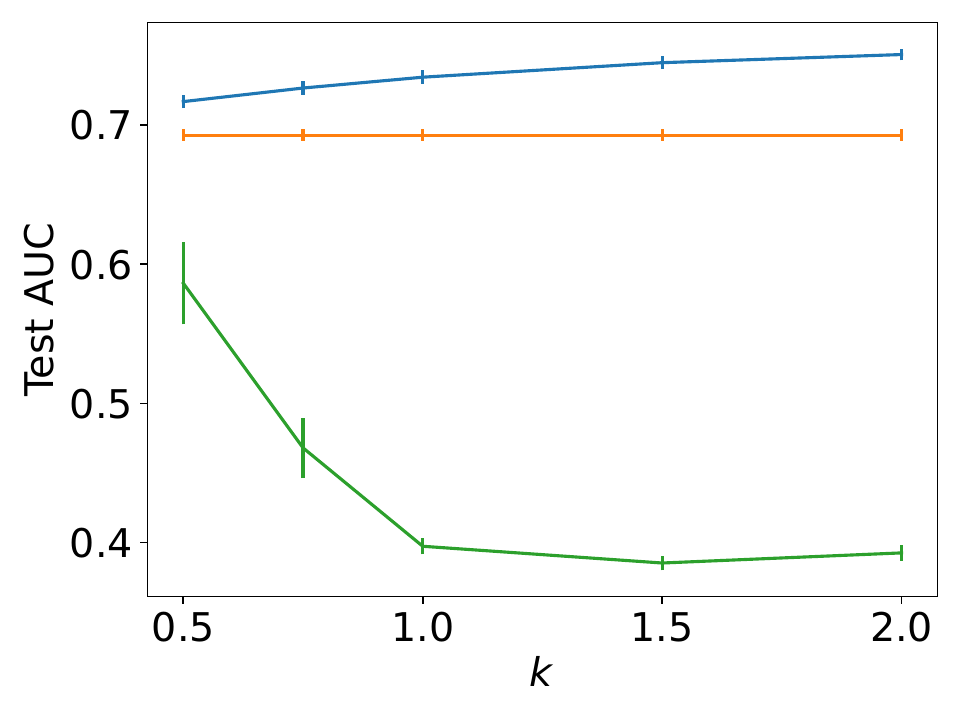}
        \subcaption{Diabetes}
    \end{minipage}
    \hfill
    \begin{minipage}{0.23\linewidth}
        \centering
        \includegraphics[width=3.4cm]{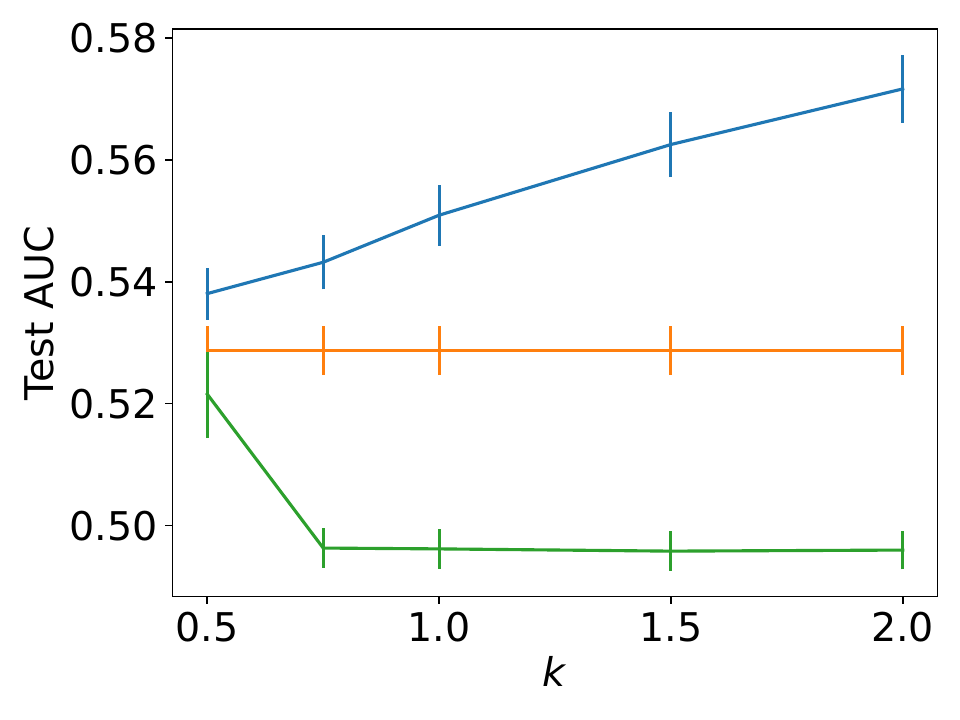}
        \subcaption{Blood}
    \end{minipage}
    \caption{The average test AUCs and their standard errors over different positive class-priors for different parameters $k$ of the transformation $h(r)=r^k$. Due to space limitation, we reported the results on two image and two tabular datasets; results on all datasets are provided in Section~\ref{apend:strans_results}.}
    \label{fig:strans_main_rk}
\end{figure*}
\begin{table*}[t!]
\caption{Results on the Cifar10-H and Fmnist-H datasets: average test AUCs over different positive class-prior $\pi$ within $\{0.05, 0.1, 0.15, 0.2\}$. We set $c=p(o=1|y=1)=0.1$. 
Values in bold are not statistically different at the $5\%$ level from the best performing method in each row according to a paired t-test.
}
\label{result_cifar10h}
\centering
\scalebox{0.9}{
\begin{tabular}{lrrrrrrrrrr}
\hline
Data & \multicolumn{1}{c}{Ours} & \multicolumn{1}{c}{w/oConf} & \multicolumn{1}{c}{NTC} & \multicolumn{1}{c}{nnPU} & \multicolumn{1}{c}{PUAUC} &  \multicolumn{1}{c}{PUSB} & \multicolumn{1}{c}{PG} & \multicolumn{1}{c}{Pconf} & \multicolumn{1}{c}{NPU} \\
\hline
Cifar10-H & \bf{0.5991} & \bf{0.5998} & 0.4779 & 0.4404 & 0.5813 & 0.5106 & 0.4779 & 0.5828 & 0.5712  \\
Fmnist-H & \bf{0.8285} & 0.8192 & 0.7823 & 0.6013 & \bf{0.8324} & 0.8242 & 0.7823 & 0.5211 & 0.8061  \\
\hline
\end{tabular}
}
\end{table*}

In our experiments so far, we have used classifier-based confidence. 
However, confidence can also be obtained from labels provided by multiple annotators. 
Therefore, we evaluated this setting using the Cifar10-H and Fmnist-H datasets, where each instance was labeled by multiple real-world human annotators
\citep{peterson2019human,ishidaperformance},
and which have been widely used in confidence-based learning studies~\citep{ishidaperformance,ushio2025practical}.
Confidence for each instance can be obtained by averaging the corresponding annotations
\citep{ishidaperformance}.
Details of the datasets and data construction are provided in Section~\ref{apen:human_data}.
Table \ref{result_cifar10h} shows the results.
On both datasets, the proposed method achieved performance statistically comparable
to the best-performing method, demonstrating its applicability to human-derived confidence.
On Cifar10-H, the average positive-confidence value was close to one.
Therefore, the proposed method behaved similarly to w/oConf, which
sets all positive-confidence values to one, resulting in similar performance.

\section{Conclusion}

In this paper, we proposed an AUC maximization method under the SAR setting with biased PU data and positive-confidence.
By using positive-confidence, the proposed method can maximize the AUC even in the SAR setting without restrictive assumptions about the data distributions or labeling mechanism.
Our experiments on eight real-world datasets demonstrate the effectiveness of the proposed method.

\bibliography{ref_aucda}
\bibliographystyle{iclr2027_conference}

\appendix

\section{Extended Related Work}

Kumagai et al.
\citeyearpar{kumagaipositive}
have recently proposed a distribution shift adaptation method for AUC maximization.
Specifically, as distribution shift, this paper assumes the covariate shift, which refers to the situation in which the distribution of input instance changes while the input-output (class label) relationship remains unchanged between the training and test phases.  
This method uses not only PU data from the training phase but also unlabeled data from the test phase to address the covariate shift.
The PU data in the training phase can be regarded as biased data.
Specifically, 
although the distributions of labeled positive data and unlabeled positive data in the training phase are identical 
(i.e., the SCAR setting), the distributions of labeled positive data in the training phase and the unlabeled positive data in the test phase may differ due to the covariate shift (i.e., a SAR-like setting). While this approach requires a large amount of unlabeled data obtained in the training phase to handle covariate shift, such unlabeled data are not available in our problem setting. Instead, the proposed method requires confidence information for a small amount of positive data.

\section{Theoretical Analysis}
\label{sec:theory}

We first show a basic relationship between the supports of the labeling probability and the posterior probability, and derive an expectation identity used in the following analysis.

\begin{lemma}
\label{lem:support_relation}
Let
$\eta({\bf x}):=p(y=1|{\bf x})$,
$u({\bf x}):=p(o=1|{\bf x})$, and
$\alpha:=p(o=1)$.
Under the PU assumption and the positivity condition in
Section~\ref{sec:prop_form},
\begin{align}
I(u({\bf x})>0)
=
I(\eta({\bf x})>0)
\quad
p\text{-almost surely},
\label{eq:support_u_eta}
\end{align}
and $u({\bf x})>0$ for $p^{\rm l}$-almost every ${\bf x}$.
Moreover, for any measurable function $\varphi$ such that the following expectations are finite,
\begin{align}
\mathbb E_{{\bf x}\sim p^{\rm l}({\bf x})}
\left[
\frac{\varphi({\bf x})}{u({\bf x})}
\right]
=
\frac{1}{\alpha}
\mathbb E_{{\bf x}\sim p({\bf x})}
\left[
\varphi({\bf x})
I(\eta({\bf x})>0)
\right].
\label{eq:change_measure_pl_p}
\end{align}
\end{lemma}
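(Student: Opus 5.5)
The plan rests on the factorization $u({\bf x})=e({\bf x})\eta({\bf x})$ (shown in Section~\ref{lem2} from the PU assumption) together with the Bayes identity $p^{\rm l}({\bf x})=u({\bf x})p({\bf x})/\alpha$.

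\textbf{Support relation.} Since $e({\bf x})\le 1$, we have $u({\bf x})>0\Rightarrow \eta({\bf x})>0$ everywhere. For the converse, consider the set $A=\{{\bf x}:\eta({\bf x})>0,\ e({\bf x})=0\}$. Its $p$-measure is controlled through
\begin{align}
\int_A \eta({\bf x})p({\bf x})\,d{\bf x}=\pi\int_A p^{\rm p}({\bf x})\,d{\bf x}=0,
\end{align}
where the last equality is the positivity condition $e>0$ holding $p^{\rm p}$-a.e. The integrand $\eta p$ is strictly positive on $A$ wherever $p>0$. Hence $A$ has $p$-measure zero, which gives Eq.~\eqref{eq:support_u_eta}.

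\textbf{Positivity of $u$ under $p^{\rm l}$.} Because $p^{\rm l}=up/\alpha$, we get $\int I(u=0)\,p^{\rm l}=\alpha^{-1}\int I(u=0)\,u\,p=0$. So $u>0$ holds $p^{\rm l}$-a.s.

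\textbf{Change of measure.} On the $p^{\rm l}$-full set where $u>0$,
\begin{align}
\mathbb E_{p^{\rm l}}[\varphi/u]=\int_{\{u>0\}}\frac{\varphi({\bf x})}{u({\bf x})}\frac{u({\bf x})p({\bf x})}{\alpha}\,d{\bf x}=\frac1\alpha\int\varphi({\bf x})I(u({\bf x})>0)p({\bf x})\,d{\bf x}.
\end{align}
Replacing $I(u>0)$ by $I(\eta>0)$ via the first part, which holds $p$-a.s., yields Eq.~\eqref{eq:change_measure_pl_p}. The finiteness hypothesis justifies splitting and cancelling the integrals.

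\textbf{Main obstacle.} The delicate step is the null-set argument in the support relation. Positivity is only stated $p^{\rm p}$-a.e., and this must be transferred to a $p$-a.s. statement on $\{\eta>0\}$. This is exactly where the domination $\eta p=\pi p^{\rm p}$ is needed. The remaining steps are routine measure-theoretic bookkeeping: densities are taken with respect to a common base measure, and one must note that $p^{\rm l}$ is absolutely continuous with respect to $p$.
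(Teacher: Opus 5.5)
Your proposal is correct and follows the paper's proof essentially step for step: the factorization $u=e\eta$ for the forward implication, the set $A=\{e=0,\ \eta>0\}$ with $\int_A \eta p=\pi\int_A p^{\rm p}=0$ to transfer positivity from $p^{\rm p}$-a.e.\ to $p$-a.e.\ on $\{\eta>0\}$, the Bayes identity $p^{\rm l}=up/\alpha$ to get $u>0$ $p^{\rm l}$-a.s., and the change of measure on $\{u>0\}$ followed by replacing $I(u>0)$ with $I(\eta>0)$. The only cosmetic difference is that you obtain $u>0\Rightarrow\eta>0$ from $u=e\eta\le\eta$, whereas the paper reads it directly off the factorization.
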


\begin{proof}
By Eq.~\eqref{eq:labeling_propensity} in
Section~\ref{lem2},
\begin{align}
u({\bf x})=e({\bf x})\eta({\bf x}).
\end{align}
Therefore, $u({\bf x})>0$ implies
$\eta({\bf x})>0$.

To show the reverse implication, the positivity condition implies
\begin{align}
\int_{\{{\bf x}:e({\bf x})=0\}}
p^{\rm p}({\bf x})d{\bf x}
=0.
\end{align}
Therefore, letting
$A:=\{{\bf x}:e({\bf x})=0,\ \eta({\bf x})>0\}$,
we have
\begin{align}
0
&=
\int_A p^{\rm p}({\bf x})d{\bf x}
\nonumber\\
&=
\frac{1}{\pi}
\int_A
\eta({\bf x})p({\bf x})d{\bf x},
\end{align}
where we used
$p^{\rm p}({\bf x})=\eta({\bf x})p({\bf x})/\pi$.
Since $\eta({\bf x})>0$ on $A$, this implies
\begin{align}
\int_A p({\bf x})d{\bf x}=0.
\end{align}
Thus,
$e({\bf x})>0$ for $p$-almost every ${\bf x}$ such that
$\eta({\bf x})>0$.
Since $u({\bf x})=e({\bf x})\eta({\bf x})$, it follows that
\begin{align}
\eta({\bf x})>0
\quad\Longrightarrow\quad
u({\bf x})>0
\qquad p\text{-almost surely}.
\end{align}

Combining the two implications, we obtain
\begin{align}
I(u({\bf x})>0)
=
I(\eta({\bf x})>0)
\quad p\text{-almost surely}.
\end{align}

Next, by Bayes' rule,
\begin{align}
p^{\rm l}({\bf x})
=
\frac{u({\bf x})p({\bf x})}{\alpha}.
\label{eq:pl_up_common}
\end{align}
Therefore,
\begin{align}
\int_{\{{\bf x}:u({\bf x})=0\}}
p^{\rm l}({\bf x})d{\bf x}
&=
\frac{1}{\alpha}
\int_{\{{\bf x}:u({\bf x})=0\}}
u({\bf x})p({\bf x})d{\bf x}
\nonumber\\
&=0.
\end{align}
Thus, $u({\bf x})>0$ for $p^{\rm l}$-almost every
${\bf x}$.
We can therefore restrict the expectation under $p^{\rm l}$
to the region where $u({\bf x})>0$. Using
Eq.~\eqref{eq:pl_up_common}, we obtain
\begin{align}
&
\mathbb E_{{\bf x}\sim p^{\rm l}({\bf x})}
\left[
\frac{\varphi({\bf x})}{u({\bf x})}
\right]
\nonumber\\
&=
\int_{\{{\bf x}:u({\bf x})>0\}}
p^{\rm l}({\bf x})
\frac{\varphi({\bf x})}{u({\bf x})}
d{\bf x}
\nonumber\\
&=
\frac{1}{\alpha}
\int_{\{{\bf x}:u({\bf x})>0\}}
p({\bf x})\varphi({\bf x})
d{\bf x}
\nonumber\\
&=
\frac{1}{\alpha}
\mathbb E_{{\bf x}\sim p({\bf x})}
\left[
\varphi({\bf x})
I(u({\bf x})>0)
\right]
\nonumber\\
&=
\frac{1}{\alpha}
\mathbb E_{{\bf x}\sim p({\bf x})}
\left[
\varphi({\bf x})
I(\eta({\bf x})>0)
\right],
\end{align}
where the last equality follows from
Eq.~\eqref{eq:support_u_eta}.
\end{proof}

\subsection{Proof of The Robustness Statement under Strictly Increasing Transformations of Positive-confidence}
\label{apen:monotonic_transform}

\begin{proof}
Let
$\eta({\bf x}):=p(y=1|{\bf x})$ and
$u({\bf x}):=p(o=1|{\bf x})$.
Since the multiplicative coefficient in
Eq.~\eqref{obj_auc_ours_strans} is positive and the additive
term is independent of $s$, minimizing
${\cal R}_{\sigma}^{h}(s)$ is equivalent to minimizing
\begin{align}
\widetilde{\cal R}_{\sigma}^{h}(s)
:=
\mathbb E_{{\bf x}^{\rm p}\sim p^{\rm l}({\bf x})}
\mathbb E_{{\bf x}\sim p({\bf x})}
\left[
\frac{h(\eta({\bf x}^{\rm p}))}
{u({\bf x}^{\rm p})}
f({\bf x}^{\rm p},{\bf x})
\right].
\end{align}

Applying Lemma~\ref{lem:support_relation} with
\begin{align}
\varphi({\bf x}^{\rm p})
=
h(\eta({\bf x}^{\rm p}))
\mathbb E_{{\bf x}\sim p({\bf x})}
[f({\bf x}^{\rm p},{\bf x})],
\end{align}
we obtain
\begin{align}
\widetilde{\cal R}_{\sigma}^{h}(s)
=
\frac{1}{\alpha}
\mathbb E_{{\bf x}^{\rm p},{\bf x}\sim p({\bf x})}
\left[
h(\eta({\bf x}^{\rm p}))
I(\eta({\bf x}^{\rm p})>0)
f({\bf x}^{\rm p},{\bf x})
\right].
\label{eq:transformed_population_risk}
\end{align}

Define
\begin{align}
g_h(z):=h(z)I(z>0),
\end{align}
and
\begin{align}
Z_h
&:=
\mathbb E_{{\bf x}\sim p({\bf x})}
[g_h(\eta({\bf x}))],
&
q_h({\bf x})
&:=
\frac{g_h(\eta({\bf x}))p({\bf x})}{Z_h}.
\end{align}
Since
$\pi=\mathbb E_{{\bf x}\sim p({\bf x})}[\eta({\bf x})]>0$
and $g_h(z)>0$ for $z>0$, we have $Z_h>0$.
Then,
\begin{align}
\widetilde{\cal R}_{\sigma}^{h}(s)
=
\frac{Z_h}{\alpha}
\mathbb E_{{\bf x}^{\rm p}\sim q_h({\bf x})}
\mathbb E_{{\bf x}\sim p({\bf x})}
[f({\bf x}^{\rm p},{\bf x})].
\end{align}

Thus, up to a positive multiplicative constant,
$\widetilde{\cal R}_{\sigma}^{h}$ is the sigmoid AUC surrogate
for bipartite ranking between $q_h$ and $p$.
By the AUC consistency of the sigmoid loss
\citep{charoenphakdee2019symmetric}
and the Bayes characterization of bipartite ranking
\citep{menon2016bipartite},
the Bayes-optimal ranking for this problem is determined by
\begin{align}
\frac{q_h({\bf x})}{p({\bf x})}
=
\frac{g_h(\eta({\bf x}))}{Z_h}.
\end{align}

Since $h:[0,1]\to[0,1]$ is strictly increasing,
$g_h$ is also strictly increasing on $[0,1]$:
for $z>0$,
$g_h(z)=h(z)>h(0)\geq0=g_h(0)$,
and for $0<z_1<z_2$,
$g_h(z_1)=h(z_1)<h(z_2)=g_h(z_2)$.
Thus, the Bayes-optimal ranking induced by
$q_h({\bf x})/p({\bf x})$ is identical to that induced by
$\eta({\bf x})$.

On the other hand, the Bayes-optimal ranking for the original
AUC risk in Eq.~\eqref{obj_auc} is determined by
\begin{align}
\frac{p^{\rm p}({\bf x})}{p^{\rm n}({\bf x})}
=
\frac{1-\pi}{\pi}
\frac{\eta({\bf x})}{1-\eta({\bf x})},
\end{align}
which is also strictly increasing in $\eta({\bf x})$.
Thus, the transformed objective and the original AUC risk
induce the same Bayes-optimal ranking for the AUC.
\end{proof}

\subsection{Sensitivity to Labeling-Probability Estimation}
\label{apen:u_error}

In practice, the labeling probability $p(o=1|{\bf x})$ in
Eq.~\eqref{obj_auc_ours} is unknown and must be estimated from the
given PU data.
Here, we analyze how its estimation error affects the rewritten AUC
risk and the resulting ranking.

\begin{proposition}
Let $\eta({\bf x}):=p(y=1|{\bf x})$ and
$u({\bf x}):=p(o=1|{\bf x})$.
For an estimate $\widehat u({\bf x})$ of $u({\bf x})$, define
\begin{align}
{\cal R}_{\sigma}(s;\widehat u)
:=
\frac{c}{1-\pi}
\mathbb E_{{\bf x}^{\rm p}\sim p^{\rm l}({\bf x})}
\mathbb E_{{\bf x}\sim p({\bf x})}
\left[
\frac{\eta({\bf x}^{\rm p})}
{\widehat u({\bf x}^{\rm p})}
f({\bf x}^{\rm p},{\bf x})
\right]
-\frac{\pi}{2(1-\pi)}.
\end{align}
Note that
${\cal R}_{\sigma}(s;u)={\cal R}_{\sigma}(s)$
in Eq.~\eqref{obj_auc_ours}.
Suppose that
$\widehat u({\bf x})\geq\delta>0$ for all ${\bf x}$ and
\begin{align}
|\widehat u({\bf x})-u({\bf x})|
\leq\epsilon
\end{align}
for $p$-almost every ${\bf x}$ satisfying $u({\bf x})>0$.
Then,
\begin{align}
\sup_s
\left|
{\cal R}_{\sigma}(s;\widehat u)
-
{\cal R}_{\sigma}(s;u)
\right|
\leq
\frac{\epsilon}{(1-\pi)\delta}.
\label{eq:u_error_bound}
\end{align}

Furthermore, suppose that $\widehat s$ is an approximate
minimizer satisfying
\begin{align}
{\cal R}_{\sigma}(\widehat s;\widehat u)
\leq
\inf_s{\cal R}_{\sigma}(s;\widehat u)
+\xi_{\rm opt},
\end{align}
where $\xi_{\rm opt}\ge0$.
Then,
\begin{align}
{\cal R}_{\sigma}(\widehat s;u)
-
\inf_s{\cal R}_{\sigma}(s;u)
\leq
\frac{2\epsilon}{(1-\pi)\delta}
+\xi_{\rm opt}.
\end{align}
\end{proposition}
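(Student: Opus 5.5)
The plan is to bound the difference pointwise in $s$ by pulling the weight difference out of the expectation, and then to use a standard two-sided comparison for the approximate minimizer. First write
\begin{align*}
{\cal R}_{\sigma}(s;\widehat u)-{\cal R}_{\sigma}(s;u)
=\frac{c}{1-\pi}\mathbb E_{{\bf x}^{\rm p}\sim p^{\rm l}({\bf x})}\left[\eta({\bf x}^{\rm p})\left(\frac{1}{\widehat u({\bf x}^{\rm p})}-\frac{1}{u({\bf x}^{\rm p})}\right)F_s({\bf x}^{\rm p})\right],
\end{align*}
where $F_s({\bf x}^{\rm p}):=\mathbb E_{{\bf x}\sim p({\bf x})}[f({\bf x}^{\rm p},{\bf x})]\in[0,1]$ because $\sigma$ takes values in $(0,1)$. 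The constant term cancels. By Lemma~\ref{lem:support_relation}, $u>0$ holds $p^{\rm l}$-almost surely, so the integrand is well defined. Next write $1/\widehat u-1/u=(u-\widehat u)/(\widehat u\,u)$. Bounding $|F_s|\le 1$ and $|u-\widehat u|\le\epsilon$ gives the pointwise bound $\eta\,\epsilon/(\widehat u\,u)\le \epsilon\,\eta/(\delta\,u)$. The $\epsilon$-bound holds $p$-a.s.\ on $\{u>0\}$, hence $p^{\rm l}$-a.s., since $p^{\rm l}=u\,p/\alpha$ is absolutely continuous with respect to $p$.

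Then I apply the change of measure in Lemma~\ref{lem:support_relation}, Eq.~\eqref{eq:change_measure_pl_p}, with $\varphi=\eta$. This gives $\mathbb E_{p^{\rm l}}[\eta/u]=\frac{1}{\alpha}\mathbb E_p[\eta\, I(\eta>0)]=\pi/\alpha=1/c$, using $\alpha=c\pi$. Hence
\begin{align*}
\left|{\cal R}_{\sigma}(s;\widehat u)-{\cal R}_{\sigma}(s;u)\right|\le\frac{c}{1-\pi}\cdot\frac{\epsilon}{\delta}\cdot\frac{1}{c}=\frac{\epsilon}{(1-\pi)\delta}.
\end{align*}
The right-hand side does not depend on $s$, so taking the supremum over $s$ yields Eq.~\eqref{eq:u_error_bound}. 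Integrability is not an issue, because the integrand is dominated by $\eta/(\delta u)$, which has finite $p^{\rm l}$-expectation by the same identity.

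For the second claim, let $B:=\epsilon/((1-\pi)\delta)$ and use the usual three-step chain. Since $\widehat s$ is an approximate minimizer, for any $s$,
\begin{align*}
{\cal R}_{\sigma}(\widehat s;u)\le{\cal R}_{\sigma}(\widehat s;\widehat u)+B\le{\cal R}_{\sigma}(s;\widehat u)+\xi_{\rm opt}+B\le{\cal R}_{\sigma}(s;u)+2B+\xi_{\rm opt}.
\end{align*}
The first and third inequalities use the uniform bound, and the second uses $\inf_s{\cal R}_{\sigma}(s;\widehat u)\le{\cal R}_{\sigma}(s;\widehat u)$. Taking the infimum over $s$ gives the stated excess-risk bound.

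The main obstacle is the step that produces exactly $1/c$ instead of a cruder constant. Bounding $\eta/u=1/e$ directly would require a lower bound on $e$, which the statement does not assume. The resolution is to keep $u$ in the denominator (bounding only $1/\widehat u\le 1/\delta$) and to invoke the change-of-measure identity of Lemma~\ref{lem:support_relation}. Some care is also needed that the $\epsilon$-condition, which is stated only on $\{u>0\}$ under $p$, transfers to $p^{\rm l}$-almost sure statements.
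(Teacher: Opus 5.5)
Your proof is correct and follows the paper's argument: bound $0\le f\le 1$, keep $u$ in the denominator while bounding only $1/\widehat u\le 1/\delta$, use the change-of-measure identity of Lemma~\ref{lem:support_relation} together with $\alpha=c\pi$ to get exactly the factor $1/c$, and finish with the standard three-step chain for the approximate minimizer. The only difference is the order of two steps. The paper applies the lemma with $\varphi=\eta|\widehat u-u|/\widehat u$ and then invokes the $\epsilon$-bound under $p$ via $I(\eta>0)=I(u>0)$. You first transfer the $\epsilon$-bound to hold $p^{\rm l}$-almost surely and then apply the lemma with $\varphi=\eta$; the two orderings are equivalent.
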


\begin{proof}
Since $0\leq f\leq1$,
\begin{align}
&
\left|
{\cal R}_{\sigma}(s;\widehat u)
-
{\cal R}_{\sigma}(s;u)
\right|
\nonumber\\
&\leq
\frac{c}{1-\pi}
\mathbb E_{{\bf x}\sim p^{\rm l}({\bf x})}
\left[
\eta({\bf x})
\frac{|\widehat u({\bf x})-u({\bf x})|}
{u({\bf x})\widehat u({\bf x})}
\right].
\end{align}
Applying Lemma~\ref{lem:support_relation} with
\begin{align}
\varphi({\bf x})
=
\eta({\bf x})
\frac{|\widehat u({\bf x})-u({\bf x})|}
{\widehat u({\bf x})},
\end{align}
we obtain
\begin{align}
&
\left|
{\cal R}_{\sigma}(s;\widehat u)
-
{\cal R}_{\sigma}(s;u)
\right|
\nonumber\\
&\leq
\frac{c}{\alpha(1-\pi)}
\mathbb E_{{\bf x}\sim p({\bf x})}
\left[
\eta({\bf x})
\frac{|\widehat u({\bf x})-u({\bf x})|}
{\widehat u({\bf x})}
I(\eta({\bf x})>0)
\right]
\nonumber\\
&\leq
\frac{c\epsilon}{\alpha(1-\pi)\delta}
\mathbb E_{{\bf x}\sim p({\bf x})}
\left[
\eta({\bf x})I(\eta({\bf x})>0)
\right]
\nonumber\\
&=
\frac{c\epsilon}{\alpha(1-\pi)\delta}
\mathbb E_{{\bf x}\sim p({\bf x})}
[\eta({\bf x})]
=
\frac{\epsilon}{(1-\pi)\delta},
\end{align}
where the second inequality follows from
$I(\eta({\bf x})>0)=I(u({\bf x})>0)$
$p$-almost surely in
Lemma~\ref{lem:support_relation} and the assumptions on
$\widehat u$.
In the last equality, we used
$\mathbb E_{p}[\eta({\bf x})]=\pi$ and
$\alpha=c\pi$.
By taking the supremum over $s$, we obtain
Eq.~\eqref{eq:u_error_bound}.

By Eq.~\eqref{eq:u_error_bound},
\begin{align}
{\cal R}_{\sigma}(\widehat s;u)
&\leq
{\cal R}_{\sigma}(\widehat s;\widehat u)
+
\frac{\epsilon}{(1-\pi)\delta}
\nonumber\\
&\leq
\inf_s{\cal R}_{\sigma}(s;\widehat u)
+\xi_{\rm opt}
+
\frac{\epsilon}{(1-\pi)\delta}
\nonumber\\
&\leq
\inf_s{\cal R}_{\sigma}(s;u)
+\xi_{\rm opt}
+
\frac{2\epsilon}{(1-\pi)\delta},
\end{align}
which proves the second claim.
\end{proof}

Eq.~\eqref{eq:u_error_bound} shows that, for fixed $\pi$ and
$\delta$, the deviation of the rewritten AUC risk increases at most
linearly with the labeling probability estimation error $\epsilon$.
The factor $1/\delta$ also indicates that the effect of estimation
errors can be amplified when the estimated labeling probability is
close to zero.
This provides theoretical motivation for the clipping used in our experiments.

We next show how misspecification of the labeling
probability affects the resulting ranking.
Using
$p^{\rm l}({\bf x})=u({\bf x})p({\bf x})/\alpha$,
the part of ${\cal R}_{\sigma}(s;\widehat u)$ that depends on
$s$ can, up to a positive multiplicative constant, be written as
\begin{align}
\mathbb E_{{\bf x}^{\rm p},{\bf x}\sim p({\bf x})}
\left[
\eta({\bf x}^{\rm p})
\frac{u({\bf x}^{\rm p})}
{\widehat u({\bf x}^{\rm p})}
f({\bf x}^{\rm p},{\bf x})
\right].
\end{align}
Define
\begin{align}
\widetilde\eta({\bf x})
:=
\eta({\bf x})
\frac{u({\bf x})}{\widehat u({\bf x})}.
\end{align}
By the same Bayes characterization of bipartite ranking used in
Section~\ref{apen:monotonic_transform}
\citep{menon2016bipartite},
the Bayes-optimal ranking associated with this objective is
determined by $\widetilde\eta({\bf x})$.

This result shows that some estimation errors in the
labeling probability do not affect the resulting ranking.
First, suppose that
$\widehat u({\bf x})=a u({\bf x})$ for some constant $a>0$
for $p$-almost every ${\bf x}$ satisfying $u({\bf x})>0$.
On this set,
\begin{align}
\widetilde\eta({\bf x})
=
\eta({\bf x})
\frac{u({\bf x})}{\widehat u({\bf x})}
=
\frac{\eta({\bf x})}{a}.
\end{align}
On the other hand, by Lemma~\ref{lem:support_relation},
$u({\bf x})=0$ implies $\eta({\bf x})=0$
$p$-almost surely.
In addition, since $\widetilde\eta({\bf x})=0$ when $u({\bf x})=0$, we have
\begin{align}
\widetilde\eta({\bf x})
=
\frac{\eta({\bf x})}{a}
\quad
p\text{-almost surely}.
\end{align}

This almost-sure equality is sufficient for the population AUC. Since
$p({\bf x})=\pi p^{\rm p}({\bf x})+(1-\pi)p^{\rm n}({\bf x})$
with $0<\pi<1$, any $p$-null set is also null under both
$p^{\rm p}$ and $p^{\rm n}$.
Hence, the pairwise ordering induced by
$\widetilde\eta$ and $\eta/a$ is identical
$p^{\rm p}\times p^{\rm n}$-almost surely.
Since $a>0$, $\eta({\bf x})/a$ preserves the ordering induced by
$\eta({\bf x})$.
Thus, a constant multiplicative error in the estimated labeling
probability does not change the Bayes-optimal AUC ranking.

More generally, suppose that the relative estimation error is
uniformly bounded as
\begin{align}
\left|
\frac{\widehat u({\bf x})}{u({\bf x})}-1
\right|
\leq \rho < 1
\label{eq:relative_u_error}
\end{align}
for $p$-almost every ${\bf x}$ satisfying $u({\bf x})>0$.
Then,
\begin{align}
1-\rho
\leq
\frac{\widehat u({\bf x})}{u({\bf x})}
\leq
1+\rho,
\end{align}
and thus
\begin{align}
\frac{\eta({\bf x})}{1+\rho}
\leq
\widetilde\eta({\bf x})
\leq
\frac{\eta({\bf x})}{1-\rho}.
\end{align}
For two instances ${\bf x}$ and $\bar{\bf x}$ outside the
corresponding $p$-null set satisfying
$\eta({\bf x})>\eta(\bar{\bf x})>0$,
their ordering is guaranteed to be preserved if the smallest
possible value of $\widetilde\eta({\bf x})$ is larger than the
largest possible value of $\widetilde\eta(\bar{\bf x})$, namely, if
\begin{align}
\frac{\eta({\bf x})}{1+\rho}
>
\frac{\eta(\bar{\bf x})}{1-\rho},
\end{align}
or equivalently,
\begin{align}
\frac{\eta({\bf x})}{\eta(\bar{\bf x})}
>
\frac{1+\rho}{1-\rho}.
\end{align}
If $\eta(\bar{\bf x})=0<\eta({\bf x})$, their ordering is also
preserved outside the corresponding $p$-null set.
Therefore, under the relative-error bound in Eq.~\eqref{eq:relative_u_error}, a pair
can have its ordering reversed only when their posterior
probabilities are sufficiently close.

\subsection{Generalization Error Analysis}
\label{apen:generalization}

We next provide a finite-sample analysis of learning with the proposed
AUC risk estimator.
We consider finite score function and labeling probability classes to analyze the basic convergence behavior of the estimator.

Let $\mathcal S$ be a finite class of score functions, and let
$\mathcal U_\delta$ be a finite class of labeling probability functions
such that
\begin{align}
v({\bf x})\ge\delta>0
\quad
\text{for all }
v\in\mathcal U_\delta
\text{ and }{\bf x}.
\end{align}
Define
$K:=|\mathcal S||\mathcal U_\delta|$.
Let $\widetilde r({\bf x})\in[0,1]$ denote the observed, possibly
noisy confidence.
To make the dependence on the score function explicit, we write
\begin{align}
f_s({\bf x},{\bf x}')
:=
\sigma(-s({\bf x})+s({\bf x}')).
\end{align}
For $s\in\mathcal S$ and $v\in\mathcal U_\delta$, define
\begin{align}
h_{s,v}({\bf x},{\bf x}')
:=
\frac{\widetilde r({\bf x})}{v({\bf x})}
f_s({\bf x},{\bf x}').
\label{eq:h_sv}
\end{align}
Since
$0\le\widetilde r({\bf x})\le1$,
$0\le f_s({\bf x},{\bf x}')\le1$, and
$v({\bf x})\ge\delta$, we have
\begin{align}
0\le h_{s,v}({\bf x},{\bf x}')\le\frac{1}{\delta}.
\label{eq:h_sv_bound}
\end{align}

Define the empirical positive-positive and positive-unlabeled terms
corresponding to Eq.~\eqref{obj_auc_ours2_emp} as
\begin{align}
\widehat L_{\rm pp}(s,v)
&:=
\frac{1}{N^{\rm p}(N^{\rm p}-1)}
\sum_{n\ne m}^{N^{\rm p},N^{\rm p}}
h_{s,v}({\bf x}^{\rm p}_n,{\bf x}^{\rm p}_m),
\label{eq:Lpp_emp}\\
\widehat L_{\rm pu}(s,v)
&:=
\frac{1}{N^{\rm p}N}
\sum_{n=1}^{N^{\rm p}}
\sum_{m=1}^{N}
h_{s,v}({\bf x}^{\rm p}_n,{\bf x}_m).
\label{eq:Lpu_emp}
\end{align}
The corresponding population terms are
\begin{align}
L_{\rm pp}(s,v)
&:=
\mathbb{E}_{{\bf x}\sim p^{\rm l}({\bf x})}
\mathbb{E}_{{\bf x}'\sim p^{\rm l}({\bf x})}
[h_{s,v}({\bf x},{\bf x}')],
\label{eq:Lpp_pop}\\
L_{\rm pu}(s,v)
&:=
\mathbb{E}_{{\bf x}\sim p^{\rm l}({\bf x})}
\mathbb{E}_{{\bf x}'\sim p^{\rm u}({\bf x})}
[h_{s,v}({\bf x},{\bf x}')].
\label{eq:Lpu_pop}
\end{align}
Let $\alpha:=p(o=1)$ and define
\begin{align}
L(s,v)
:=
\alpha L_{\rm pp}(s,v)
+(1-\alpha)L_{\rm pu}(s,v).
\label{eq:L_pop}
\end{align}

We first derive a uniform finite-sample bound.
For $\tau\in(0,1)$, define
\begin{align}
B_{N^{\rm p},N}(\tau)
:=
\frac{1}{\delta}
\left[
\alpha
\sqrt{
\frac{2\log(4K/\tau)}{N^{\rm p}}
}
+
(1-\alpha)
\sqrt{
\frac{\log(4K/\tau)}{2}
\left(
\frac{1}{N^{\rm p}}+\frac{1}{N}
\right)
}
\right].
\label{eq:B_generalization}
\end{align}

\begin{lemma}
With probability at least $1-\tau$,
\begin{align}
\sup_{\substack{s\in\mathcal S\\v\in\mathcal U_\delta}}
\left|
\alpha\widehat L_{\rm pp}(s,v)
+
(1-\alpha)\widehat L_{\rm pu}(s,v)
-
L(s,v)
\right|
\le
B_{N^{\rm p},N}(\tau).
\label{eq:uniform_L_bound}
\end{align}
\end{lemma}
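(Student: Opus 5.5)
The plan is to bound the two empirical terms separately, for each fixed pair $(s,v)$, and then take a union bound over the finite class of size $K$. By the triangle inequality,
\begin{align*}
\left|\alpha\widehat L_{\rm pp}+(1-\alpha)\widehat L_{\rm pu}-L\right|
\le \alpha\left|\widehat L_{\rm pp}-L_{\rm pp}\right|+(1-\alpha)\left|\widehat L_{\rm pu}-L_{\rm pu}\right|.
\end{align*}
So it suffices to show that, for each fixed $(s,v)$, each deviation exceeds its corresponding piece of $B_{N^{\rm p},N}(\tau)$ with probability at most $\tau/(2K)$. A union bound over the $K$ pairs and the two terms then gives total failure probability at most $\tau$.

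For the positive-positive term, $\widehat L_{\rm pp}(s,v)$ is a (non-symmetric-kernel) U-statistic of order two in the i.i.d. sample $X^{\rm p}\sim p^{\rm l}$. Its mean is $L_{\rm pp}(s,v)$, because the diagonal $n=m$ is excluded. The kernel $h_{s,v}$ takes values in $[0,1/\delta]$ by Eq.~\eqref{eq:h_sv_bound}.
\begin{itemize}
\item I would invoke Hoeffding's inequality for U-statistics: write the statistic as an average over permutations of averages of $\lfloor N^{\rm p}/2\rfloor$ independent blocks, then apply Jensen to the moment generating function.
\item This gives a two-sided tail $2\exp(-2\lfloor N^{\rm p}/2\rfloor t^2\delta^2)$.
\item Setting this equal to $\tau/(2K)$ yields $t=\delta^{-1}\sqrt{\log(4K/\tau)/(2\lfloor N^{\rm p}/2\rfloor)}$.
\item Using $\lfloor N^{\rm p}/2\rfloor\ge N^{\rm p}/4$ (valid for $N^{\rm p}\ge2$), this is bounded by $\delta^{-1}\sqrt{2\log(4K/\tau)/N^{\rm p}}$, which is exactly the first piece.
\end{itemize}

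For the positive-unlabeled term, $\widehat L_{\rm pu}(s,v)$ is a two-sample statistic in the $N^{\rm p}+N$ independent variables ${\bf x}^{\rm p}_1,\dots,{\bf x}^{\rm p}_{N^{\rm p}},{\bf x}_1,\dots,{\bf x}_N$. Its expectation is $L_{\rm pu}(s,v)$ by independence of $X^{\rm p}$ and $X$.
\begin{itemize}
\item I would apply McDiarmid's bounded-differences inequality.
\item Changing one ${\bf x}^{\rm p}_n$ changes the statistic by at most $1/(\delta N^{\rm p})$, since it only affects $N$ terms, each weighted $1/(N^{\rm p}N)$ with range $1/\delta$.
\item Changing one ${\bf x}_m$ changes it by at most $1/(\delta N)$.
\item The sum of squared differences is therefore $\delta^{-2}(1/N^{\rm p}+1/N)$, giving tail $2\exp\left(-2t^2\delta^2/(1/N^{\rm p}+1/N)\right)$.
\item Setting this to $\tau/(2K)$ gives $t=\delta^{-1}\sqrt{\tfrac{\log(4K/\tau)}{2}(1/N^{\rm p}+1/N)}$, which is exactly the second piece.
\end{itemize}

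The main subtlety is the U-statistic step for $\widehat L_{\rm pp}$. The summands are dependent, so plain Hoeffding does not apply, and one must check that the kernel need not be symmetric (here $h_{s,v}({\bf x},{\bf x}')\neq h_{s,v}({\bf x}',{\bf x})$, because the weight depends only on the first argument). Hoeffding's block decomposition works for any ordered-pair kernel summed over all $n\neq m$: the permutation-averaging representation still holds with ordered pairs. Alternatively, McDiarmid can be applied here too. Changing one ${\bf x}^{\rm p}_n$ affects $2(N^{\rm p}-1)$ ordered terms, so the bounded difference is $2/(\delta N^{\rm p})$. This gives $t=\delta^{-1}\sqrt{2\log(4K/\tau)/N^{\rm p}}$ directly, again matching the stated constant. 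I would use this cleaner McDiarmid route for both terms.

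Finally, the dependence of $h_{s,v}$ on the possibly noisy $\widetilde r$ is harmless. The confidences are attached to the labeled sample, so each pair $({\bf x}^{\rm p}_n,\widetilde r_n)$ is treated as one i.i.d. draw, and only $\widetilde r\in[0,1]$ is used.
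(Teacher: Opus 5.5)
Your proposal is correct, and the McDiarmid route you settle on is exactly the paper's proof. The paper uses bounded differences $2/(N^{\rm p}\delta)$ for $\widehat L_{\rm pp}$ and $1/(N^{\rm p}\delta)$, $1/(N\delta)$ for $\widehat L_{\rm pu}$, a union bound over the $2K$ events at level $\tau/(2K)$ each, and the triangle inequality with weights $\alpha$ and $1-\alpha$. Your alternative Hoeffding U-statistic argument for the positive-positive term, using $\lfloor N^{\rm p}/2\rfloor\ge N^{\rm p}/4$, is also valid and gives the same constant.
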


\begin{proof}
Consider a fixed pair $(s,v)$.
Replacing one labeled positive instance changes at most
$2(N^{\rm p}-1)$ summands in
$\widehat L_{\rm pp}(s,v)$.
Thus, by Eq.~\eqref{eq:h_sv_bound}, replacing one labeled positive instance
changes $\widehat L_{\rm pp}(s,v)$ by at most
$2/(N^{\rm p}\delta)$.
As a result, McDiarmid's inequality gives
\begin{align}
\Pr\left(
\left|
\widehat L_{\rm pp}(s,v)-L_{\rm pp}(s,v)
\right|
\ge t
\right)
\le
2\exp\left(
-\frac{N^{\rm p}\delta^2t^2}{2}
\right).
\label{eq:mcdiarmid_pp}
\end{align}

For $\widehat L_{\rm pu}(s,v)$, replacing one labeled positive
instance changes $N$ summands and thus changes the average by at most
$1/(N^{\rm p}\delta)$.
Similarly, replacing one unlabeled instance changes $N^{\rm p}$
summands and thus changes the average by at most $1/(N\delta)$.
As a result, McDiarmid's inequality gives
\begin{align}
\Pr\left(
\left|
\widehat L_{\rm pu}(s,v)-L_{\rm pu}(s,v)
\right|
\ge t
\right)
\le
2\exp\left(
-\frac{2\delta^2t^2}
{1/N^{\rm p}+1/N}
\right).
\label{eq:mcdiarmid_pu}
\end{align}

Applying a union bound over the
$K=|\mathcal S||\mathcal U_\delta|$ pairs in
$\mathcal S\times\mathcal U_\delta$ and over the two empirical terms
yields, with probability at least $1-\tau$,
\begin{align}
\sup_{\substack{s\in\mathcal S\\v\in\mathcal U_\delta}}
\left|
\widehat L_{\rm pp}(s,v)-L_{\rm pp}(s,v)
\right|
&\le
\frac{1}{\delta}
\sqrt{
\frac{2\log(4K/\tau)}{N^{\rm p}}
},
\\
\sup_{\substack{s\in\mathcal S\\v\in\mathcal U_\delta}}
\left|
\widehat L_{\rm pu}(s,v)-L_{\rm pu}(s,v)
\right|
&\le
\frac{1}{\delta}
\sqrt{
\frac{\log(4K/\tau)}{2}
\left(
\frac{1}{N^{\rm p}}+\frac{1}{N}
\right)
}.
\end{align}
Eq.~\eqref{eq:uniform_L_bound} follows by combining these two
inequalities with Eq.~\eqref{eq:L_pop}.
\end{proof}

In practice, $\alpha$ is estimated as
\begin{align}
\widehat\alpha
:=
\frac{N^{\rm p}}{N^{\rm p}+N}.
\end{align}
Define its estimation error as
\begin{align}
\epsilon_\alpha
:=
|\widehat\alpha-\alpha|.
\label{eq:alpha_error_def}
\end{align}
Since
$0\le\widehat L_{\rm pp}(s,v),\widehat L_{\rm pu}(s,v)\le1/\delta$,
we have
\begin{align}
&
\left|
\widehat\alpha\widehat L_{\rm pp}(s,v)
+
(1-\widehat\alpha)\widehat L_{\rm pu}(s,v)
-
\alpha\widehat L_{\rm pp}(s,v)
-
(1-\alpha)\widehat L_{\rm pu}(s,v)
\right|
\nonumber\\
&=
|\widehat\alpha-\alpha|
\left|
\widehat L_{\rm pp}(s,v)-\widehat L_{\rm pu}(s,v)
\right|
\nonumber\\
&\le
\frac{\epsilon_\alpha}{\delta}.
\end{align}
Thus, on the event in Eq.~\eqref{eq:uniform_L_bound},
\begin{align}
\sup_{\substack{s\in\mathcal S\\v\in\mathcal U_\delta}}
&
\left|
\widehat\alpha\widehat L_{\rm pp}(s,v)
+
(1-\widehat\alpha)\widehat L_{\rm pu}(s,v)
-
L(s,v)
\right|
\nonumber\\
&\le
B_{N^{\rm p},N}(\tau)
+
\frac{\epsilon_\alpha}{\delta}.
\label{eq:uniform_L_alpha}
\end{align}
Thus, Eq.~\eqref{eq:uniform_L_alpha} also holds with
probability at least $1-\tau$.

For arbitrary confidence $\widetilde r$ and
$v\in\mathcal U_\delta$, define the population and empirical
rewritten risks by
\begin{align}
{\cal R}_{\sigma}(s;\widetilde r,v)
&:=
\frac{c}{1-\pi}L(s,v)
-\frac{\pi}{2(1-\pi)},
\label{eq:R_general_pop}\\
\widehat{\cal R}_{\sigma}(s;\widetilde r,v)
&:=
\frac{c}{1-\pi}
\left[
\widehat\alpha\widehat L_{\rm pp}(s,v)
+
(1-\widehat\alpha)\widehat L_{\rm pu}(s,v)
\right]
-\frac{\pi}{2(1-\pi)}.
\label{eq:R_general_emp}
\end{align}
Define
\begin{align}
\Gamma_{N^{\rm p},N}(\tau)
:=
\frac{c}{1-\pi}
\left[
B_{N^{\rm p},N}(\tau)
+
\frac{\epsilon_\alpha}{\delta}
\right].
\label{eq:Gamma_generalization}
\end{align}
Then, Eq.~\eqref{eq:uniform_L_alpha} implies that, with probability
at least $1-\tau$,
\begin{align}
\sup_{\substack{s\in\mathcal S\\v\in\mathcal U_\delta}}
\left|
\widehat{\cal R}_{\sigma}(s;\widetilde r,v)
-
{\cal R}_{\sigma}(s;\widetilde r,v)
\right|
\le
\Gamma_{N^{\rm p},N}(\tau).
\label{eq:uniform_R_bound}
\end{align}

We next define the population risk corresponding to the true
confidence and labeling probability.
For the following analysis, we interpret
$\eta({\bf x})/u({\bf x})$ as zero when $u({\bf x})=0$.
By Lemma~\ref{lem:support_relation},
$u({\bf x})>0$ for $p^{\rm l}$-almost every ${\bf x}$.
Therefore, this convention does not affect expectations with
respect to $p^{\rm l}$ and thus does not change the rewritten
AUC risk in Eq.~\eqref{obj_auc_ours}.

We denote this population risk by
\begin{align}
{\cal R}_{\sigma}(s;\eta,u)
:=
\frac{c}{1-\pi}
\mathbb E_{{\bf x}^{\rm p}\sim p^{\rm l}({\bf x})}
\mathbb E_{{\bf x}\sim p({\bf x})}
\left[
\frac{\eta({\bf x}^{\rm p})}
{u({\bf x}^{\rm p})}
f_s({\bf x}^{\rm p},{\bf x})
\right]
-
\frac{\pi}{2(1-\pi)}.
\label{eq:R_true_generalization}
\end{align}
By Eq.~\eqref{obj_auc_ours},
${\cal R}_{\sigma}(s;\eta,u)$ is equal to the original population
AUC risk.

We next show the excess-risk bound for the score function learned
with the proposed estimator.

\begin{theorem}
Suppose that the data-dependent estimate
$\widehat u\in\mathcal U_\delta$ and that
$\widehat s\in\mathcal S$ is an approximate empirical minimizer
satisfying
\begin{align}
\widehat{\cal R}_{\sigma}
(\widehat s;\widetilde r,\widehat u)
\le
\inf_{s\in\mathcal S}
\widehat{\cal R}_{\sigma}
(s;\widetilde r,\widehat u)
+
\xi_{\rm opt},
\label{eq:approx_erm}
\end{align}
where $\xi_{\rm opt}\ge0$.
Suppose that the PU assumption and the positivity condition in
Section~\ref{sec:prop_form} hold, and that
\begin{align}
|\widetilde r({\bf x})-\eta({\bf x})|
&\le\epsilon_r,
\nonumber\\
|\widehat u({\bf x})-u({\bf x})|
&\le\epsilon_u
\label{eq:nuisance_errors}
\end{align}
for $p$-almost every ${\bf x}$ satisfying $u({\bf x})>0$.
Then, with probability at least $1-\tau$,
\begin{align}
{\cal R}_{\sigma}(\widehat s;\eta,u)
-
\inf_{s\in\mathcal S}
{\cal R}_{\sigma}(s;\eta,u)
\le
2\Gamma_{N^{\rm p},N}(\tau)
+
\frac{2(c\epsilon_r+\epsilon_u)}
{(1-\pi)\delta}
+
\xi_{\rm opt}.
\label{eq:final_generalization}
\end{align}
\end{theorem}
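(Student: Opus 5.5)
The plan is the standard three-way decomposition of excess risk. We compare the empirical objective $\widehat{\cal R}_{\sigma}(\cdot;\widetilde r,\widehat u)$ to the population objective with the same nuisances, ${\cal R}_{\sigma}(\cdot;\widetilde r,\widehat u)$, using the uniform bound in Eq.~\eqref{eq:uniform_R_bound}. We then compare that objective to the true-nuisance risk ${\cal R}_{\sigma}(\cdot;\eta,u)$ with a deterministic perturbation bound, modeled on the proof of the sensitivity proposition in Section~\ref{apen:u_error}.

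One subtlety arises because $\widehat u$ is data dependent. Eq.~\eqref{eq:uniform_R_bound} is uniform over all $v\in\mathcal U_\delta$, so on its event (probability at least $1-\tau$) it applies in particular to $v=\widehat u$. This also holds for every $s\in\mathcal S$, including $\widehat s$ and any near-minimizer of the true risk.

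The first and key step is the deterministic nuisance bound. We show that for every $s$,
\begin{align}
\left|{\cal R}_{\sigma}(s;\widetilde r,\widehat u)-{\cal R}_{\sigma}(s;\eta,u)\right|\le\frac{c\epsilon_r+\epsilon_u}{(1-\pi)\delta}.
\end{align}
Note that ${\cal R}_{\sigma}(s;\widetilde r,\widehat u)=\frac{c}{1-\pi}\mathbb E_{p^{\rm l}}\mathbb E_{p}[\frac{\widetilde r}{\widehat u}f_s]-\frac{\pi}{2(1-\pi)}$, since $\alpha p^{\rm l}+(1-\alpha)p^{\rm u}=p$. We insert the intermediate term $\eta/\widehat u$ and split
\begin{align}
\frac{\widetilde r}{\widehat u}-\frac{\eta}{u}=\frac{\widetilde r-\eta}{\widehat u}+\eta\frac{u-\widehat u}{u\widehat u}.
\end{align}
The second piece is exactly what the sensitivity proposition controls. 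Using $0\le f_s\le1$ and Lemma~\ref{lem:support_relation}, it contributes at most $\epsilon_u/((1-\pi)\delta)$.

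For the first piece we use $\widehat u\ge\delta$ and $|\widetilde r-\eta|\le\epsilon_r$. The latter holds $p$-a.s. on $\{u>0\}$, which is $p^{\rm l}$-full because $p^{\rm l}=up/\alpha$. This gives $\frac{c}{1-\pi}\cdot\frac{\epsilon_r}{\delta}$ directly, with no change of measure needed. The combined bound is therefore $\frac{c\epsilon_r+\epsilon_u}{(1-\pi)\delta}$, matching the stated constant.

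To conclude, fix any $s\in\mathcal S$ and write $\Gamma$ for $\Gamma_{N^{\rm p},N}(\tau)$ and $\Delta$ for the nuisance bound above. On the good event, the chain
\begin{align}
{\cal R}_{\sigma}(\widehat s;\eta,u)&\le{\cal R}_{\sigma}(\widehat s;\widetilde r,\widehat u)+\Delta\le\widehat{\cal R}_{\sigma}(\widehat s;\widetilde r,\widehat u)+\Gamma+\Delta\\
&\le\widehat{\cal R}_{\sigma}(s;\widetilde r,\widehat u)+\xi_{\rm opt}+\Gamma+\Delta\le{\cal R}_{\sigma}(s;\eta,u)+2\Gamma+2\Delta+\xi_{\rm opt}
\end{align}
holds, where the third inequality uses Eq.~\eqref{eq:approx_erm}. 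Taking the infimum over $s\in\mathcal S$ yields Eq.~\eqref{eq:final_generalization}.

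I expect the main obstacle to be the measure-theoretic bookkeeping in the nuisance step. The error hypotheses hold only on $\{u>0\}$, and $\eta/u$ is defined as zero off that set. Lemma~\ref{lem:support_relation} resolves both issues: $p^{\rm l}$ puts no mass where $u=0$, and its change-of-measure identity turns the $1/u$ factor into $1/\alpha$. Combined with $\mathbb E_p[\eta]=\pi$ and $\alpha=c\pi$, this produces the constant $1/(1-\pi)$.
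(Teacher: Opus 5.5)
Your proposal is correct and follows the paper's proof essentially step for step. It uses the uniform bound at the data-dependent $\widehat u$, the same split $\frac{\widetilde r}{\widehat u}-\frac{\eta}{u}=\frac{\widetilde r-\eta}{\widehat u}+\eta\frac{u-\widehat u}{u\widehat u}$ with Lemma~\ref{lem:support_relation} and $\alpha=c\pi$ to get $\frac{c\epsilon_r+\epsilon_u}{(1-\pi)\delta}$, and the same inequality chain; the only cosmetic difference is that you compare against an arbitrary $s\in\mathcal S$ and take the infimum, whereas the paper passes through the minimizers $s^\dagger$ and $s^\ast$.
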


\begin{proof}
On the event in Eq.~\eqref{eq:uniform_R_bound}, the uniform bound
holds simultaneously for all
$(s,v)\in\mathcal S\times\mathcal U_\delta$.
Therefore, since $\widehat u\in\mathcal U_\delta$, it also holds
for the data-dependent estimate $\widehat u$:
\begin{align}
\sup_{s\in\mathcal S}
\left|
\widehat{\cal R}_{\sigma}(s;\widetilde r,\widehat u)
-
{\cal R}_{\sigma}(s;\widetilde r,\widehat u)
\right|
\le
\Gamma_{N^{\rm p},N}(\tau).
\label{eq:uniform_R_hatu}
\end{align}

Since $\mathcal S$ is finite, let
\begin{align}
s^\dagger
\in
\argmin_{s\in\mathcal S}
{\cal R}_{\sigma}(s;\widetilde r,\widehat u).
\end{align}
Then, by Eqs.~\eqref{eq:uniform_R_hatu} and \eqref{eq:approx_erm}, 
we have
\begin{align}
{\cal R}_{\sigma}
(\widehat s;\widetilde r,\widehat u)
&\le
\widehat{\cal R}_{\sigma}
(\widehat s;\widetilde r,\widehat u)
+
\Gamma_{N^{\rm p},N}(\tau)
\nonumber\\
&\le
\widehat{\cal R}_{\sigma}
(s^\dagger;\widetilde r,\widehat u)
+
\Gamma_{N^{\rm p},N}(\tau)
+
\xi_{\rm opt}
\nonumber\\
&\le
{\cal R}_{\sigma}
(s^\dagger;\widetilde r,\widehat u)
+
2\Gamma_{N^{\rm p},N}(\tau)
+
\xi_{\rm opt}.
\end{align}
By the definition of $s^\dagger$, this yields
\begin{align}
{\cal R}_{\sigma}
(\widehat s;\widetilde r,\widehat u)
-
\inf_{s\in\mathcal S}
{\cal R}_{\sigma}
(s;\widetilde r,\widehat u)
\le
2\Gamma_{N^{\rm p},N}(\tau)
+
\xi_{\rm opt}.
\label{eq:erm_bound_noisy}
\end{align}

We next analyze the effects of the confidence error and the labeling-probability estimation error.
For any $s\in\mathcal S$, using $0\le f_s\le1$, we have
\begin{align}
&
\left|
{\cal R}_{\sigma}(s;\widetilde r,\widehat u)
-
{\cal R}_{\sigma}(s;\eta,u)
\right|
\nonumber\\
&\le
\frac{c}{1-\pi}
\mathbb E_{{\bf x}\sim p^{\rm l}({\bf x})}
\left[
\left|
\frac{\widetilde r({\bf x})}
{\widehat u({\bf x})}
-
\frac{\eta({\bf x})}
{u({\bf x})}
\right|
\right].
\label{eq:nuisance_bound_step1}
\end{align}
Here and below, $\eta({\bf x})/u({\bf x})$ is interpreted as zero
when $u({\bf x})=0$.
By Lemma~\ref{lem:support_relation},
$u({\bf x})>0$ for $p^{\rm l}$-almost every ${\bf x}$.
Thus, we can restrict the expectation in
Eq.~\eqref{eq:nuisance_bound_step1} to the set
$\{{\bf x}:u({\bf x})>0\}$.
On this set,
\begin{align}
\left|
\frac{\widetilde r({\bf x})}{\widehat u({\bf x})}
-
\frac{\eta({\bf x})}{u({\bf x})}
\right|
&\le
\frac{
|\widetilde r({\bf x})-\eta({\bf x})|
}
{\widehat u({\bf x})}
+
\eta({\bf x})
\frac{
|\widehat u({\bf x})-u({\bf x})|
}
{u({\bf x})\widehat u({\bf x})}.
\label{eq:nuisance_decomp}
\end{align}

Since $\widehat u({\bf x})\ge\delta$ and
$|\widetilde r({\bf x})-\eta({\bf x})|\le\epsilon_r$
for $p$-almost every ${\bf x}$ satisfying $u({\bf x})>0$,
the expectation of the first term is bounded as
\begin{align}
\mathbb E_{{\bf x}\sim p^{\rm l}({\bf x})}
\left[
\frac{
|\widetilde r({\bf x})-\eta({\bf x})|
}
{\widehat u({\bf x})}
\right]
\le
\frac{\epsilon_r}{\delta}.
\label{eq:r_error_component}
\end{align}

For the second term, using
$p^{\rm l}({\bf x})=u({\bf x})p({\bf x})/\alpha$, we obtain
\begin{align}
&
\mathbb E_{{\bf x}\sim p^{\rm l}({\bf x})}
\left[
I(u({\bf x})>0)
\eta({\bf x})
\frac{
|\widehat u({\bf x})-u({\bf x})|
}
{u({\bf x})\widehat u({\bf x})}
\right]
\nonumber\\
&=
\frac{1}{\alpha}
\mathbb E_{{\bf x}\sim p({\bf x})}
\left[
I(u({\bf x})>0)
\eta({\bf x})
\frac{
|\widehat u({\bf x})-u({\bf x})|
}
{\widehat u({\bf x})}
\right]
\nonumber\\
&\le
\frac{\epsilon_u}{\alpha\delta}
\mathbb E_{{\bf x}\sim p({\bf x})}
\left[
\eta({\bf x})I(u({\bf x})>0)
\right].
\label{eq:u_error_component_pre}
\end{align}
By Lemma~\ref{lem:support_relation},
$I(u({\bf x})>0)=I(\eta({\bf x})>0)$
$p$-almost surely.
Therefore,
\begin{align}
\mathbb E_{{\bf x}\sim p({\bf x})}
\left[
\eta({\bf x})I(u({\bf x})>0)
\right]
&=
\mathbb E_{{\bf x}\sim p({\bf x})}
[\eta({\bf x})]
\nonumber\\
&=
\pi.
\end{align}
Since $\alpha=c\pi$, Eq.~\eqref{eq:u_error_component_pre}
gives
\begin{align}
&
\mathbb E_{{\bf x}\sim p^{\rm l}({\bf x})}
\left[
I(u({\bf x})>0)
\eta({\bf x})
\frac{
|\widehat u({\bf x})-u({\bf x})|
}
{u({\bf x})\widehat u({\bf x})}
\right]
\le
\frac{\epsilon_u}{c\delta}.
\label{eq:u_error_component}
\end{align}
Combining Eqs.~\eqref{eq:nuisance_bound_step1},
\eqref{eq:r_error_component}, and
\eqref{eq:u_error_component}, we obtain
\begin{align}
\sup_{s\in\mathcal S}
\left|
{\cal R}_{\sigma}(s;\widetilde r,\widehat u)
-
{\cal R}_{\sigma}(s;\eta,u)
\right|
&\le
\frac{c}{1-\pi}
\left(
\frac{\epsilon_r}{\delta}
+
\frac{\epsilon_u}{c\delta}
\right)
\nonumber\\
&=
\frac{c\epsilon_r+\epsilon_u}
{(1-\pi)\delta}.
\label{eq:nuisance_bound}
\end{align}

Let
\begin{align}
s^\ast
\in
\argmin_{s\in\mathcal S}
{\cal R}_{\sigma}(s;\eta,u).
\end{align}
Using Eq.~\eqref{eq:nuisance_bound} for both
$\widehat s$ and $s^\ast$ and Eq.~\eqref{eq:erm_bound_noisy}, we obtain
\begin{align}
{\cal R}_{\sigma}(\widehat s;\eta,u)
&\le
{\cal R}_{\sigma}
(\widehat s;\widetilde r,\widehat u)
+
\frac{c\epsilon_r+\epsilon_u}
{(1-\pi)\delta}
\nonumber\\
&\le
{\cal R}_{\sigma}
(s^\ast;\widetilde r,\widehat u)
+
2\Gamma_{N^{\rm p},N}(\tau)
+
\xi_{\rm opt}
+
\frac{c\epsilon_r+\epsilon_u}
{(1-\pi)\delta}
\nonumber\\
&\le
{\cal R}_{\sigma}(s^\ast;\eta,u)
+
2\Gamma_{N^{\rm p},N}(\tau)
+
\xi_{\rm opt}
+
\frac{2(c\epsilon_r+\epsilon_u)}
{(1-\pi)\delta},
\end{align}
which proves Eq.~\eqref{eq:final_generalization}.
\end{proof}

The bound in Eq.~\eqref{eq:final_generalization} decomposes the
excess smoothed AUC risk into the finite-sample statistical error,
the estimation error of $\alpha$, the confidence error,
the labeling-probability estimation error, and the optimization error.
For fixed finite function classes,
$B_{N^{\rm p},N}(\tau)$ converges to zero as
$N^{\rm p},N\to\infty$.
Thus, the excess risk converges to zero as the sample sizes increase,
provided that the estimation errors of $\alpha$, the confidence,
and the labeling probability, as well as the optimization error,
also vanish.
Here, we consider finite function classes for a self-contained analysis.
Extending the result to infinite function classes using Rademacher complexity is left for future work.

\section{Derivations for the Proposed AUC Risk}

\subsection{Constancy of the Positive-positive Term}
\label{const_lem}

\begin{lemma}
$\mathbb{E}_{{\bf x}^{{\rm p}} \sim p^{{\rm p}} ({\bf x} )} \mathbb{E}_{{\bar {\bf x }^{{\rm p}} } \sim p^{{\rm p}} ({\bf x} )} \left[  f({\bf x}^{{\rm p}}, {\bar {\bf x}^{{\rm p}}}) \right] = \frac{1}{2}$.
\end{lemma}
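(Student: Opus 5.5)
The plan is a symmetrization argument. Since ${\bf x}^{\rm p}$ and $\bar{\bf x}^{\rm p}$ are drawn independently from the same density $p^{\rm p}({\bf x})$, the joint law of the pair $({\bf x}^{\rm p},\bar{\bf x}^{\rm p})$ is invariant under swapping the two coordinates. I will denote the quantity in the statement by
\begin{align}
J := \mathbb{E}_{{\bf x}^{\rm p}\sim p^{\rm p}({\bf x})}\mathbb{E}_{\bar{\bf x}^{\rm p}\sim p^{\rm p}({\bf x})}\left[f({\bf x}^{\rm p},\bar{\bf x}^{\rm p})\right].
\end{align}
First, I rename the integration variables (equivalently, apply Fubini to the product density $p^{\rm p}({\bf x})p^{\rm p}(\bar{\bf x})$, which is symmetric). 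This gives
\begin{align}
J = \mathbb{E}_{{\bf x}^{\rm p}\sim p^{\rm p}({\bf x})}\mathbb{E}_{\bar{\bf x}^{\rm p}\sim p^{\rm p}({\bf x})}\left[f(\bar{\bf x}^{\rm p},{\bf x}^{\rm p})\right].
\end{align}
Fubini applies because $0\le f\le 1$, so every expectation involved is finite.

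Next, I average the two expressions for $J$:
\begin{align}
2J = \mathbb{E}\mathbb{E}\left[f({\bf x}^{\rm p},\bar{\bf x}^{\rm p})+f(\bar{\bf x}^{\rm p},{\bf x}^{\rm p})\right].
\end{align}
With $z := s(\bar{\bf x}^{\rm p})-s({\bf x}^{\rm p})$, the integrand equals $\sigma(z)+\sigma(-z)$. The sigmoid satisfies $\sigma(z)+\sigma(-z) = \frac{1}{1+e^{-z}}+\frac{1}{1+e^{z}} = 1$ for every real $z$. Hence the integrand is identically $1$, so $2J=1$ and $J=1/2$.

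There is no real obstacle here. The only points to state carefully are measurability and integrability, which boundedness of $\sigma$ handles, and the pointwise identity $\sigma(z)+\sigma(-z)=1$. The same argument gives $J=k/2$ for any symmetric loss with $\sigma(z)+\sigma(-z)=k$, as remarked in the main text.
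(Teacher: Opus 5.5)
Your proof is correct and is essentially the paper's argument. The paper uses $\sigma(z)+\sigma(-z)=1$ to write $J = 1 - \mathbb{E}\mathbb{E}[f(\bar{\bf x}^{\rm p},{\bf x}^{\rm p})]$ and leaves implicit that this last expectation equals $J$ because the i.i.d. pair is exchangeable; you instead make the swap step explicit first and then add the two expressions rather than subtracting.
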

\begin{proof}
Since $f({\bf x}^{{\rm p}}, {\bar {\bf x }^{{\rm p}}}) = \sigma (-s({\bf x}^{{\rm p}})+s({\bar {\bf x }^{{\rm p}}}))$ and $\sigma(z)+\sigma(-z)=1$ for all $z \in \mathbb{R}$,
\begin{align}
\mathbb{E}_{{\bf x}^{{\rm p}} \sim p^{{\rm p}} ({\bf x} )} \mathbb{E}_{{\bar {\bf x }^{{\rm p}} } \sim p^{{\rm p}} ({\bf x} )} \left[  f({\bf x}^{{\rm p}}, {\bar {\bf x}^{{\rm p}}}) \right] 
 =& \mathbb{E}_{{\bf x}^{{\rm p}} \sim p^{{\rm p}} ({\bf x} )} \mathbb{E}_{{\bar {\bf x }^{{\rm p}} } \sim p^{{\rm p}} ({\bf x} )} \left[ 1 - f({\bar {\bf x}^{{\rm p}}}, {\bf x}^{{\rm p}}) \right] \nonumber\\
=& 1 - \mathbb{E}_{{\bf x}^{{\rm p}} \sim p^{{\rm p}} ({\bf x} )} \mathbb{E}_{{\bar {\bf x }^{{\rm p}} } \sim p^{{\rm p}} ({\bf x} )} \left[ f({\bar {\bf x}^{{\rm p}}}, {\bf x}^{{\rm p}}) \right].
\end{align}
It is clear that the lemma follows from this equation.
\end{proof}

\subsection{Derivation of the Relation between the Labeled and Positive Densities}
\label{lem1}

\begin{lemma}
Labeled density $p^{{\rm l}}({\bf x}) = p({\bf x}|o=1)$ can be represented as $p^{{\rm l}}({\bf x}) = \frac{e({\bf x})}{c} p^{{\rm p}}({\bf x})$, where $e({\bf x}) = p(o=1|{\bf x}, y=1)$, $c=p(o=1|y=1)$, and $p^{{\rm p}}({\bf x})=p({\bf x}|y=1)$.
\end{lemma}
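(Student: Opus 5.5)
The identity follows from Bayes' rule applied to the joint density $p({\bf x}, y, o)$, used twice: once to express $p^{\rm l}$ through the joint of ${\bf x}$ and $y$, and once to normalize. Concretely, I would start from the definition $p^{\rm l}({\bf x}) = p({\bf x}|o=1) = p({\bf x}, o=1)/p(o=1)$.

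First, I would marginalize the numerator over $y$:
\begin{align}
p({\bf x}, o=1) = p({\bf x}, y=1, o=1) + p({\bf x}, y=0, o=1).
\end{align}
The second term equals $p(o=1|{\bf x}, y=0)\,p({\bf x}, y=0)$. This is zero by the PU assumption $p(o=1|{\bf x}, y=0)=0$.

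Second, I would factor the remaining term as
\begin{align}
p({\bf x}, y=1, o=1) = p(o=1|{\bf x}, y=1)\, p({\bf x}|y=1)\, p(y=1) = e({\bf x})\, p^{\rm p}({\bf x})\, \pi.
\end{align}

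Third, I would handle the denominator $p(o=1)$. As in the footnote computing $\alpha$, the PU property gives $p(o=1) = p(o=1|y=1)\,p(y=1) = c\pi$. Dividing the numerator by the denominator then gives $p^{\rm l}({\bf x}) = e({\bf x})\,p^{\rm p}({\bf x})\,\pi/(c\pi) = \frac{e({\bf x})}{c}\,p^{\rm p}({\bf x})$, which is the claim.

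The only point needing care is that $\pi>0$ and $c>0$, so that the division is well-defined. Both hold because labeled data exist, i.e., $\alpha = c\pi > 0$. With that noted, no step presents a real obstacle; the argument is a direct computation.
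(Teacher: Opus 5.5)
Your proof is correct and is essentially the paper's argument: Bayes' rule plus the PU assumption $p(o=1|{\bf x},y=0)=0$, with the normalizer $p(o=1)=p(o=1,y=1)=c\pi$ cancelling the factor $\pi$. The only cosmetic difference is that the paper applies the PU property once, by writing $p({\bf x}|o=1)=p({\bf x}|o=1,y=1)$ before using Bayes' rule, whereas you apply it separately to the numerator $p({\bf x},o=1)$ and to the denominator $p(o=1)$.
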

\begin{proof}
\begin{align}
p^{{\rm l}}({\bf x}) &= p({\bf x}|o=1) \nonumber\\
&=p({\bf x}|o=1, y=1) \nonumber\\
&=\frac{p({\bf x}, o=1, y=1)}{p(o=1, y=1)} \nonumber\\
&=\frac{p(o=1|{\bf x}, y=1)p({\bf x}|y=1)p(y=1)}{p(o=1|y=1)p(y=1)} \nonumber\\
&= \frac{e({\bf x})}{c} p^{{\rm p}}({\bf x}),
\end{align}
where we used the PU property (labeled data are always positive) in the second equality and the Bayes' rule in the third equality.
\end{proof}

\subsection{Relation between the Propensity Score and the Labeling Probability}
\label{lem2}

\begin{lemma}
Labeling probability $p(o=1|{\bf x})$ and propensity score
$e({\bf x})=p(o=1|{\bf x},y=1)$ satisfy
\begin{align}
p(o=1|{\bf x})
=
e({\bf x})p(y=1|{\bf x}).
\label{eq:labeling_propensity}
\end{align}
Consequently, for any ${\bf x}$ such that $p(y=1|{\bf x})>0$,
\begin{align}
e({\bf x})
=
\frac{p(o=1|{\bf x})}{p(y=1|{\bf x})}.
\label{eq:propensity_labeling}
\end{align}
\end{lemma}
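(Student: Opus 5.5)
We prove the identity $p(o=1|{\bf x}) = e({\bf x})p(y=1|{\bf x})$ by the law of total probability over the latent class $y$, and then obtain Eq.~\eqref{eq:propensity_labeling} by dividing by $p(y=1|{\bf x})$. No earlier result is needed beyond the PU assumption $p(o=1|{\bf x},y=0)=0$ from Section~\ref{sec:prop_form} and the definition $e({\bf x}) := p(o=1|{\bf x},y=1)$.

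Starting from the joint density $p({\bf x},y,o)$, we marginalize over $y\in\{0,1\}$:
\begin{align}
p(o=1|{\bf x}) = p(o=1|{\bf x},y=1)\,p(y=1|{\bf x}) + p(o=1|{\bf x},y=0)\,p(y=0|{\bf x}).
\end{align}
The PU assumption makes the second term vanish. The first factor of the remaining term is $e({\bf x})$ by definition, which yields Eq.~\eqref{eq:labeling_propensity}. The consequence then follows immediately: on the set where $p(y=1|{\bf x})>0$, dividing both sides by $p(y=1|{\bf x})$ gives Eq.~\eqref{eq:propensity_labeling}.

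The only point requiring care is well-definedness, since the conditionals given ${\bf x}$ are defined only up to $p$-null sets and $e({\bf x})$ is only meaningful where $p(y=1|{\bf x})>0$. We handle this in two ways. First, we state the identity pointwise wherever the conditionals are defined, that is, for $p$-almost every ${\bf x}$. Second, wherever $p(y=1|{\bf x})=0$, both sides equal zero regardless of how $e$ is defined there, because $p(o=1|{\bf x}) = p(o=1,y=1|{\bf x}) \le p(y=1|{\bf x})$. With these conventions there is no real obstacle.
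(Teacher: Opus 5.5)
Your proof is correct and follows essentially the same route as the paper: it applies the law of total probability over $y$, uses the PU assumption $p(o=1|{\bf x},y=0)=0$ to drop the negative term, and divides by $p(y=1|{\bf x})$ where it is positive. The paper also notes that $p(y=1|{\bf x})>0$ for $p^{\rm p}$-almost every ${\bf x}$, since $p^{\rm p}({\bf x})=p(y=1|{\bf x})p({\bf x})/\pi$, which is what later justifies this division inside $p^{\rm p}$-expectations; your null-set remark complements that point.
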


\begin{proof}
By the law of total probability,
\begin{align}
p(o=1|{\bf x})
&=
p(o=1|{\bf x},y=1)p(y=1|{\bf x})
+
p(o=1|{\bf x},y=0)p(y=0|{\bf x}) \nonumber\\
&=
e({\bf x})p(y=1|{\bf x}),
\end{align}
where the second equality follows from the definition of the propensity score
and the PU assumption $p(o=1|{\bf x},y=0)=0$.
Dividing both sides by
$p(y=1|{\bf x})$ gives Eq.~\eqref{eq:propensity_labeling}.
Moreover, since
$p^{\rm p}({\bf x})=p(y=1|{\bf x})p({\bf x})/\pi$,
we have $p(y=1|{\bf x})>0$ for
$p^{\rm p}$-almost every ${\bf x}$.
Therefore, Eq.~\eqref{e_sy} holds for
$p^{\rm p}$-almost every ${\bf x}$.
\end{proof}

\section{Extensions of the Proposed Method}

\subsection{Extension to Non-symmetric Loss Functions}
\label{nonsymauc}

In this section, we derive the AUC risk estimator when using non-symmetric function $\sigma$, which also enables us to maximize the AUC without requiring $\pi$ and $c$.

Specifically, since the second term in Eq. \eqref{obj_auc_won} does not become a constant with non-symmetric functions, 
we need to calculate this term with biased positive data. 
Using Eqs.~\eqref{rel_po} and \eqref{e_sy}, as in the derivation
of the proposed risk in the main paper, the second term in
Eq.~\eqref{obj_auc_won} can be rewritten as
\begin{align}
\frac{\pi}{1-\pi}
&\mathbb{E}_{{\bf x}^{\rm p}\sim p^{\rm p}({\bf x})}
\mathbb{E}_{{\bar{\bf x}}^{\rm p}\sim p^{\rm p}({\bf x})}
\left[
f({\bf x}^{\rm p},{\bar{\bf x}}^{\rm p})
\right]
\nonumber\\
&=
\frac{c\alpha}{1-\pi}
\mathbb{E}_{{\bf x}^{\rm p}\sim p^{\rm l}({\bf x})}
\mathbb{E}_{{\bar{\bf x}}^{\rm p}\sim p^{\rm l}({\bf x})}
\left[
\frac{p(y=1|{\bf x}^{\rm p})}
     {p(o=1|{\bf x}^{\rm p})}
\frac{p(y=1|{\bar{\bf x}}^{\rm p})}
     {p(o=1|{\bar{\bf x}}^{\rm p})}
f({\bf x}^{\rm p},{\bar{\bf x}}^{\rm p})
\right],
\end{align}
where we used $\alpha=c\pi$.

By replacing the constant term $\frac{\pi}{2(1-\pi)}$ in Eq. \eqref{obj_auc_ours2} 
with the above-derived term, we can obtain the AUC risk with non-symmetric functions as follows:
\begin{align}
{\cal R} (s) &= \frac{c}{1-\pi} \left[ \alpha \mathbb{E}_{{\bf x}^{{\rm p}} \sim p^{{\rm l}} ({\bf x})} \mathbb{E}_{{\bar {\bf x}}^{{\rm p}} \sim p^{{\rm l}} ({\bf x} )} \left[ \frac{p(y=1|{\bf x}^{{\rm p}})}{p(o=1|{\bf x} ^{{\rm p}})} f ({\bf x}^{{\rm p}}, {\bar {\bf x}}^{\rm p} ) \right] \right. \nonumber\\ 
& \left. +(1-\alpha) \mathbb{E}_{{\bf x}^{{\rm p}} \sim p^{{\rm l}} ({\bf x})} \mathbb{E}_{{\bf x} \sim p^{{\rm u}} ({\bf x} )} \left[ \frac{p(y=1|{\bf x}^{{\rm p}})}{p(o=1|{\bf x} ^{{\rm p}})} f ({\bf x}^{{\rm p}}, {\bf x} ) \right] \right. \nonumber\\
& \left. - \alpha \mathbb{E}_{{\bf x}^{{\rm p}} \sim p^{{\rm l}} ({\bf x} )} \mathbb{E}_{{\bf {\bar x}}^{{\rm p}} \sim p^{{\rm l}} ({\bf x} )} \left[ \frac{p(y=1|{\bf x}^{{\rm p}})}{p(o=1|{\bf x}^{{\rm p}})} \frac{p(y=1|{\bf {\bar x}}^{{\rm p}})}{p(o=1|{\bf {\bar x}}^{{\rm p}})} f ({\bf x}^{{\rm p}}, {\bf {\bar x}}^{{\rm p}}) \right] \right].
\end{align}
The corresponding AUC risk estimator can be obtained by approximating the expectation with the sample average.
Since the coefficient $c/(1-\pi)$ does not affect the optimization of $s$, we can also perform training without knowing 
$\pi$ and $c$. 

\subsection{Extension to the Two-sample Scenario}
\label{ours_ts}

In the main paper, we assume the one-sample scenario for collecting PU data.
In this section, we consider the proposed method for the two-sample scenario.
In this scenario, PU data are sampled independently:
labeled positive data are drawn from $p^{{\rm l}} ({\bf x})$ and unlabeled data are drawn from true marginal density $p({\bf x})$. As a result,
we are given the following data for training:
\begin{align}
\label{given_data2}
X^{{\rm p}} &:= \{ {\bf x} ^{{\rm p}}_n \}_{n=1}^{N^{{\rm p}}} \sim p^{{\rm l}} ({\bf x}) := p({\bf x} | o=1), \nonumber\\
X &:= \{ {\bf x} _m \}_{m=1}^N \sim p ({\bf x}), \nonumber\\
R^{{\rm p}} &:= \{ r^{{\rm p}}_n | \ r^{{\rm p}}_n = p(y=1 | {\bf x} ^{{\rm p}}_n) \}_{n=1}^{N^{{\rm p}}}.
\end{align}
In this scenario, the relationship between $p^{{\rm l}} ({\bf x})$ and $p^{{\rm p}} ({\bf x})$ in Eq. \eqref{rel_po} is also satisfied. Additionally, propensity score $e({\bf x})$ can be rewritten as
\begin{align}
\label{ex_ts}
e({\bf x}) &= p(o=1|{\bf x}, y=1) \nonumber\\
&= \frac{p(o=1, {\bf x}, y=1)}{p({\bf x}, y=1)} \nonumber\\
&= \frac{p(y=1|{\bf x}, o=1)p({\bf x}|o=1)p(o=1)}{p(y=1|{\bf x})p({\bf x})} \nonumber\\
&= \frac{p({\bf x}|o=1)p(o=1)}{p(y=1|{\bf x})p({\bf x})} \nonumber\\
&= \frac{\alpha w({\bf x})}{p(y=1|{\bf x})},
\end{align}
where we used the Bayes' rule in the second equality, the PU property (labeled data are always positive), $p(y=1|{\bf x}, o=1)=1$, in the fourth equality, and set $w({\bf x}):= \frac{p({\bf x}|o=1)}{p({\bf x})}$ and $\alpha := p(o=1)$ in the fifth equality.
Here, unlike Eq. \eqref{e_sy} of the one-sample scenario, we do not use $p(o=1|{\bf x})$ to represent $e({\bf x})$.
This is because the numbers of PU data are arbitrarily determined by the users (data collectors) in the two-sample scenario and thus true label ratio $\alpha$ cannot be estimated from the given data
\citep{elkan2008learning}.
As a result, $p(o=1|{\bf x}) = \alpha p({\bf x}|o=1)/p({\bf x})$ cannot also be estimated from the given data
\citep{elkan2008learning}.
In contrast, density-ratio $w({\bf x}) = p({\bf x}|o=1)/p({\bf x})$ in Eq. \eqref{ex_ts} can be estimated from given PU data
without knowing $\alpha$
by using off-the-shelf density-ratio estimation methods
\citep{kanamori2009least,kato2021non,kato2018learning,sugiyama2012density}.

By substituting Eqs. \eqref{rel_po} and \eqref{ex_ts} into Eq. \eqref{obj_auc_won_c}, 
AUC risk $R_{\sigma}(s)$ can be represented as
\begin{align}
\label{obj_auc_ours_ts}
{\cal R}_{\sigma} (s) &= \frac{c}{(1-\pi)\alpha}\mathbb{E}_{{\bf x}^{{\rm p}} \sim p^{{\rm l}} ({\bf x})} \mathbb{E}_{{\bf x} \sim p ({\bf x} )} \left[ \frac{p(y=1|{\bf x}^{{\rm p}})}{w({\bf x} ^{{\rm p}})} f ({\bf x}^{{\rm p}}, {\bf x} ) \right] -\frac{\pi}{2(1-\pi)}.
\end{align}
Here, coefficient $c/((1-\pi)\alpha)$ and constant $\pi/(2(1-\pi))$ do not affect the optimization for score function $s$ and thus we can safely ignore them. The estimator of the AUC risk in Eq. \eqref{obj_auc_ours_ts} with training data $X^{{\rm p}} \cup X \cup R^{{\rm p}}$ can be represented as
\begin{align}
\label{obj_auc_ours2_emp_ts}
{\hat {\cal R}}_{\sigma} (s) &= \frac{c}{(1-\pi)\alpha} \left[ \frac{1}{N^{{\rm p}}N} \sum_{n, m=1}^{N^{{\rm p}},N} \frac{r_n^{{\rm p}}}{\hat{w} _n^{{\rm p}}} f ({\bf x}_n^{{\rm p}}, {\bf x}_m )  \right] -\frac{\pi}{2(1-\pi)},
\end{align}
where $\hat{w}^{{\rm p}}_n$ denotes the estimate of density-ratio $w({\bf x}_n^{{\rm p}})$ for instance ${\bf x}_n^{{\rm p}}$.

\section{Additional Experimental Details}

\subsection{Datasets and Data Construction}
\label{apen:dataset}

Mnist consists of hand-written images of 10 digits.
Fmnist consists of images of 10 fashion categories.
In Mnist and Fmnist, each image is represented by gray scale with $28 \times 28$ pixels.
Svhn consists of images of 10 printed digits clipped from photographs of house number plates. 
Cifar10 consists of images of 10 animal and vehicle categories.
In Svhn and Cifar10, each image is represented by $32 \times 32$ RGB pixels.
Diabetes is a tabular dataset whose task is to predict whether the respondent has diabetes or not.
Each respondent is represented as $142$-dimensional features.
Blood is a tabular dataset whose task is hypertension diagnosis for high-risk age people. 
Each survey subject is represented as $100$-dimensional features.

We created the binary classification problems from each dataset following the data creation procedure of previous studies
\citep{kumagai2025importance,kumagaiauc,kumagaipositive,xie2024weakly}.
Specifically, for Mnist and Svhn, we used odd digits as positive and even digits as negative.
For Fmnist, 
we used non-upper garment classes (1, 5, 7, 8, and 9) as positive and the others as negative.
For Cifar10, we used the vehicle classes (0, 1, 8, and 9) as positive and the animal classes as negative.
For Diabetes and Blood, we used the original binary class labels.

To create the sampling bias, for Mnist and Svhn,
90$\%$ of data were selected from digits 1, 3, and 5, and 10$\%$ of data were selected from digits 7 and 9 for labeled positive data.
For Fmnist, 90$\%$ of data were selected from classes 1, 5, and 7, and 10$\%$ of data were selected from classes 8 and 9 for labeled positive data.
For Cifar10, 90$\%$ of data were selected from classes 0, and 1, and 10$\%$ of data were selected from classes 8 and 9 for labeled positive data.
The remaining unselected data were used for unlabeled data for training and validation.

To create the selection bias in Diabetes and Blood datasets, we followed the procedure used in previous studies
\citep{sakai2019covariate,kumagaipositive}.
Specifically, we first calculated the Euclidean distance between positive instance ${\bf x} _n$ and the mean vector of all positive data ${\bar {\bf x}}$, ${\bf c} _n := \| {\bf x}_n - {\bar {\bf x}} \|$.
Then, we found the median ${\bf c}_{{\rm med}}$ from all $\{ {\bf c} _n \}$.
We split all $\{ {\bf c} _n \}$ into the first set whose elements were smaller than ${\bf c}_{{\rm med}}$ and the second set whose elements were greater than or equal to ${\bf c}_{{\rm med}}$. 
Within labeled positive data, $90\%$ data were selected from the first set and $10\%$ data from the second set.
In addition, we provide the results when using other selection biases for the image and tabular datasets in Section \ref{sec_insdep_bias}.

\subsection{Datasets with Human-derived Confidence}
\label{apen:human_data}

Cifar10-H and Fmnist-H were real-world image datasets where each instance was 
labeled by approximately $50$ and $67$ real-world human annotators, respectively
\citep{peterson2019human,ishidaperformance}.
In these datasets, confidence information for each instance can be obtained by averaging the corresponding annotations
\citep{ishidaperformance}.
For Cifar10-H,
we treated the land-related classes (1, 3, 4, 5, 7, and 9) as positive and the remaining classes (e.g., water- or sky-related classes) as negative following the previous study
\citep{ishidaperformance}. 
To create the sampling bias, 90$\%$ of the labeled positive data were sampled from classes 1, 3, and 4.
For Fmnist-H,
we treated the casual classes (0, 1, 2, 6, and 7) as positive and the non-casual classes (3, 4, 5, 8, and 9) as negative.
To create the sampling bias, 90$\%$ of the labeled positive data were sampled from classes 0, 1, and 2. 
Since the amount of annotated data was limited, we set the number of initially sampled unlabeled training to $3,000$ for both datasets.
The other conditions were the same as those used for Cifar10 and Fmnist.

\subsection{Model and Training Details}
\label{arc}

For Mnist, Fmnist, Fmnist-H, Diabetes, and Blood, 
a three-layered feed-forward neural network with ReLU activation was used for the classifier (score function).
The number of hidden nodes was $124$.
For Svhn, Cifar10, and Cifar10-H, a convolutional neural network, which consisted of two convolutional blocks followed by a three-layered feed-forward neural network, was used for the classifier (score function). The first (second) convolutional block comprised a 6 (16) filter $5 \times 5$ convolution, the ReLU activation, and a $2 \times 2$ max-pooling layer.
For all comparison methods including the classifier to estimate positive-confidence, the same neural network architecture was used.

For the proposed method, w/oConf, nnPU, PUAUC, Pconf, and NPU, the sigmoid function was used as a surrogate, following the previous studies
\citep{kiryo2017positive,kumagaiauc,kumagaipositive}.
For PUSB, we used the logarithmic loss function to deal with the selection bias following the original paper
\citep{kato2018learning}.
For NTC and the classifier to estimate $p(y=1|{\bf x})$, we used the logistic regression with the neural network as probabilistic classifiers.
For nnPU, PUAUC, PUSB, and NPU, whose original objectives involve expectations over the marginal density, we exactly reexpressed these expectations in terms of the labeled positive and unlabeled densities under the one-sample sampling scheme~\citep{bekker2020learning} for fair comparisons.

For all methods, the empirical risk (loss) with validation PU data was used for early-stopping to mitigate overfitting.
For NPU, the weighting parameter was also chosen on the basis of the validation empirical risk from $\{0.1, 0.2, 0.3, 0.4, 0.5, 0.6, 0.7, 0.8, 0.9 \}$.
Confidence score $\tilde{r}({\bf x})$ used in NPU was set to one since the observed positive labels are noise-free in our setting. 
For the proposed method and w/oConf, we rounded up $\hat{u}_n^{{\rm p}} = \hat{u}({\bf x} _n^{{\rm p}})$ in Eq. \eqref{obj_auc_ours2_emp} 
less than $0.01$ to $0.01$ to stabilize the
training process following the previous study
\citep{ishida2018binary}.
Similarly, for Pconf and NPU, we rounded up $r_n^{{\rm p}} = p(y=1|{\bf x}_n^{{\rm p}})$
less than $0.01$ to $0.01$ for their objectives.
For all methods, we used the Adam optimizer
\citep{kingma2014adam}.
We set the learning rate to $10^{-4}$. 
Total mini-batch size $M = U + P$ was set to $1,024$ and PU mini-batch sizes $P$ and $U$ were set so as to maintain the label ratio $\alpha$ in the given all data.
For the classifier for estimating $p(y=1|{\bf x})$, we set the training epoch to $30$.
For other methods, the maximum number of epochs was $200$ and the early-stopping was used.
All methods were implemented using PyTorch
\citep{paszke2017automatic},
and all experiments were conducted on a Linux server with an Intel Xeon CPU and A100 GPU.

\section{Additional Experimental Results}

\subsection{Results with Different Positive Class-priors $\pi$}
\label{pi_graph}

Figure \ref{fig0} shows the average test AUCs with their standard errors when changing the value of positive class-prior $\pi$.
The proposed method tended to perform well with each $\pi$ value. 
Most methods tended to improve the performance as the value of $\pi$ increased.
This is because the number of labeled positive data increased when $\pi$ increased
due to the relationship $\alpha = p(o=1) = c\pi$ with fixed $c=p(o=1|y=1)$.

Figure \ref{fig1} shows the average test AUCs with their standard errors when changing the value of positive class-prior $\pi$ while maintaining the labeled positive data size.
To maintain the size, we adjusted the value of $c=p(o=1|y=1)$ such that $p(o=1)=c\pi=0.02$ for each $\pi$.
As a result, the labeled positive data and unlabeled data sizes were $100$ and $4,900$, respectively.
The proposed method worked well in most cases and simply increasing $\pi$ did not lead to performance improving.

\begin{figure*}[t]
  \centering
  \begin{subfigure}{0.31\textwidth}
    \centering
    \includegraphics[width=\linewidth]{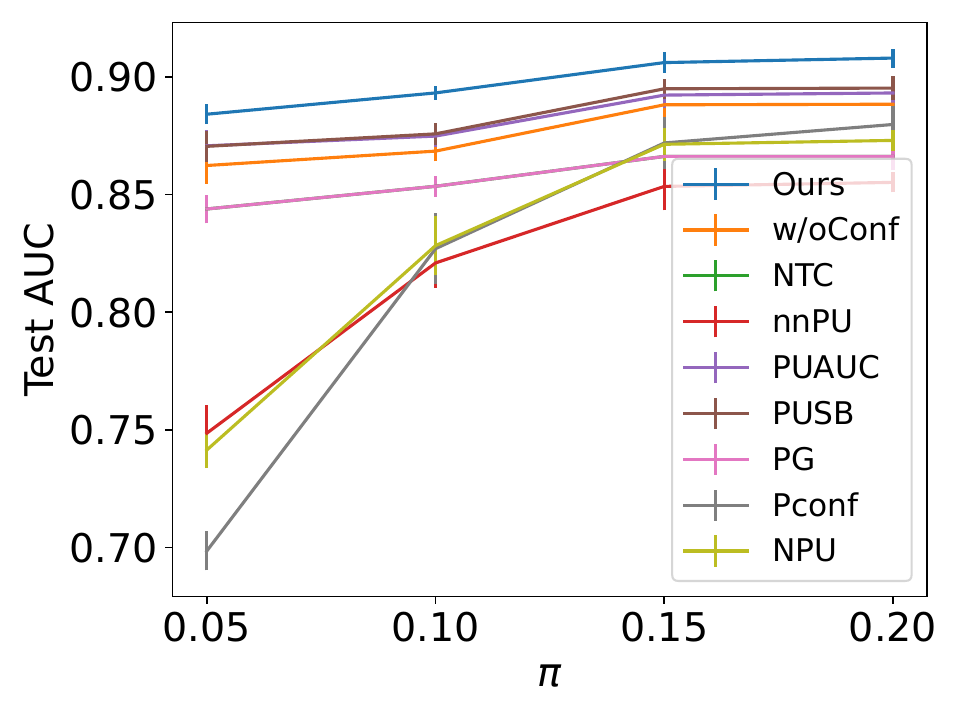}
    \caption{Mnist}
  \end{subfigure}
  \begin{subfigure}{0.31\textwidth}
    \centering
    \includegraphics[width=\linewidth]{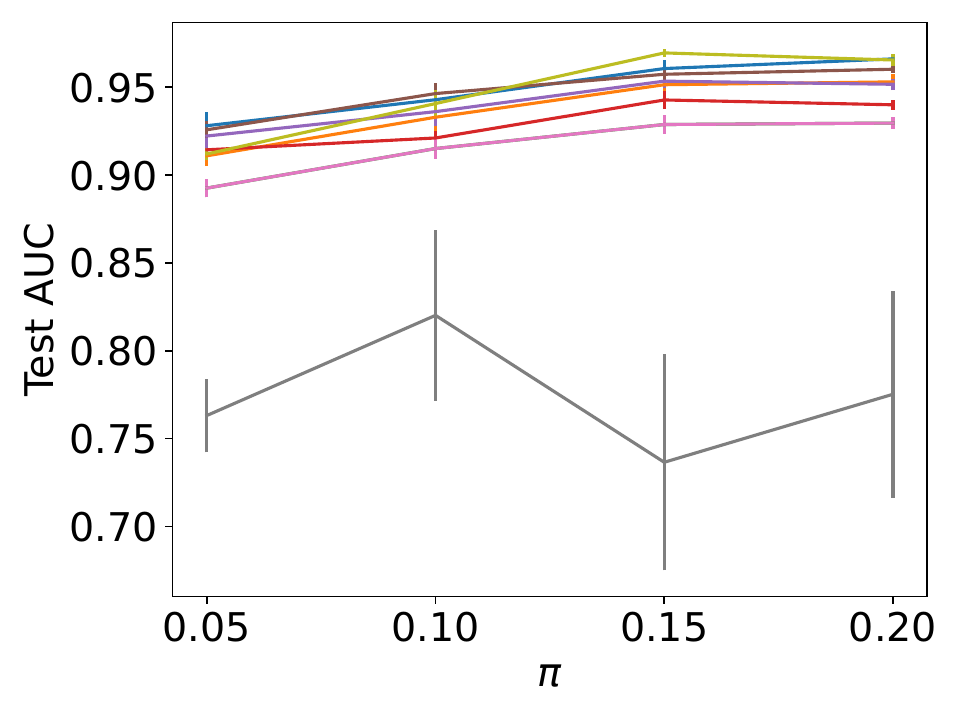}
    \caption{Fmnist}
  \end{subfigure}
  \begin{subfigure}{0.31\textwidth}
    \centering
    \includegraphics[width=\linewidth]{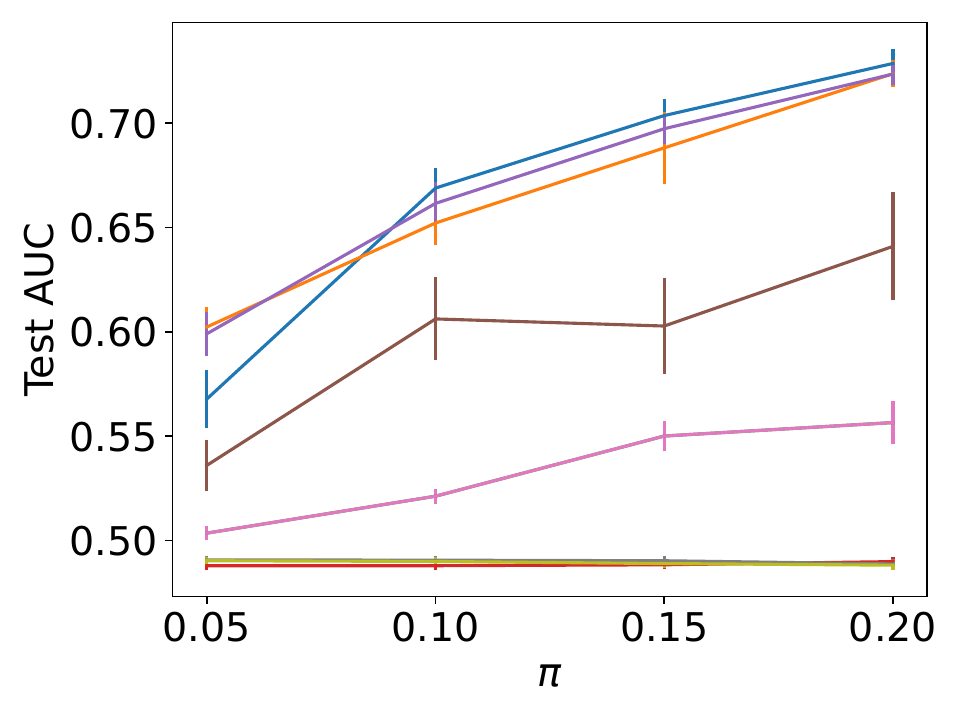}
    \caption{Svhn}
  \end{subfigure}
  \begin{subfigure}{0.31\textwidth}
    \centering
    \includegraphics[width=\linewidth]{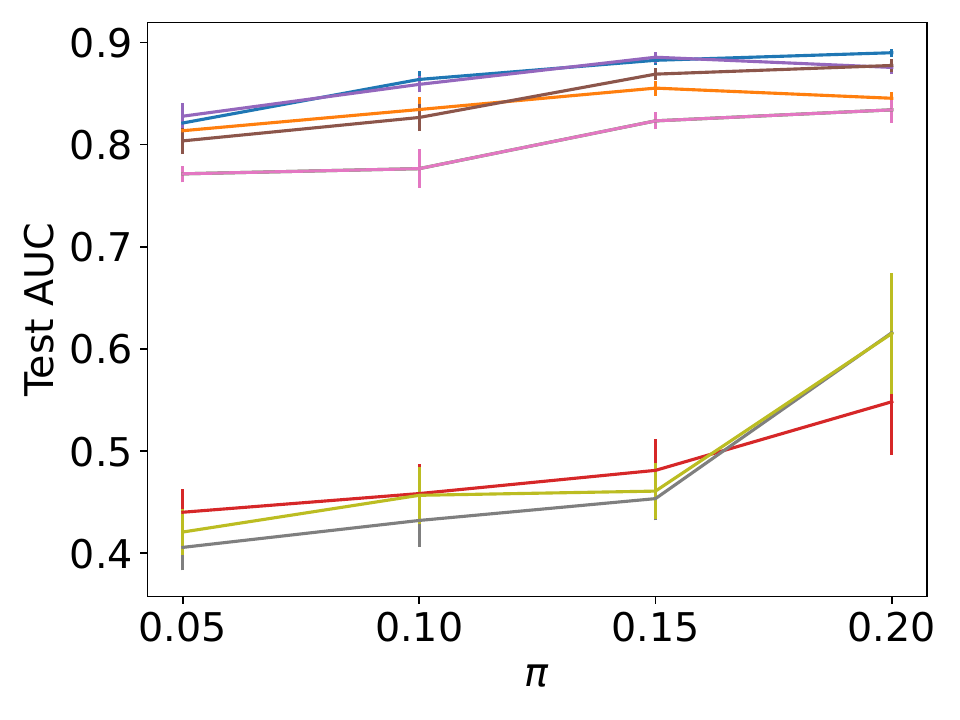}
    \caption{Cifar10}
  \end{subfigure}
  \begin{subfigure}{0.31\textwidth}
    \centering
    \includegraphics[width=\linewidth]{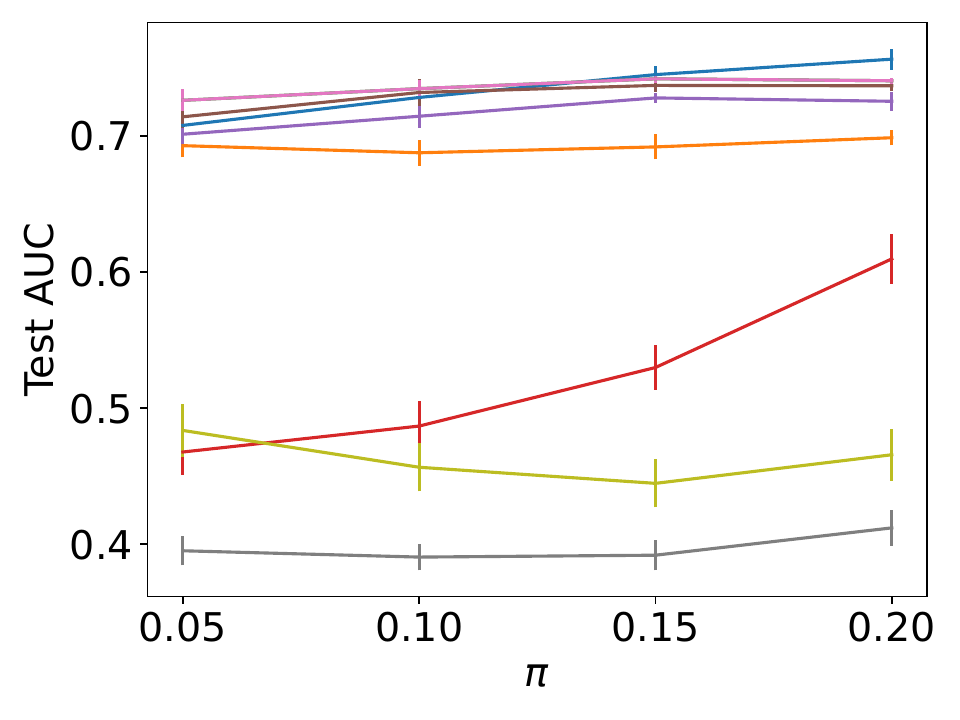}
    \caption{Diabetes}
  \end{subfigure}
  \begin{subfigure}{0.31\textwidth}
    \centering
    \includegraphics[width=\linewidth]{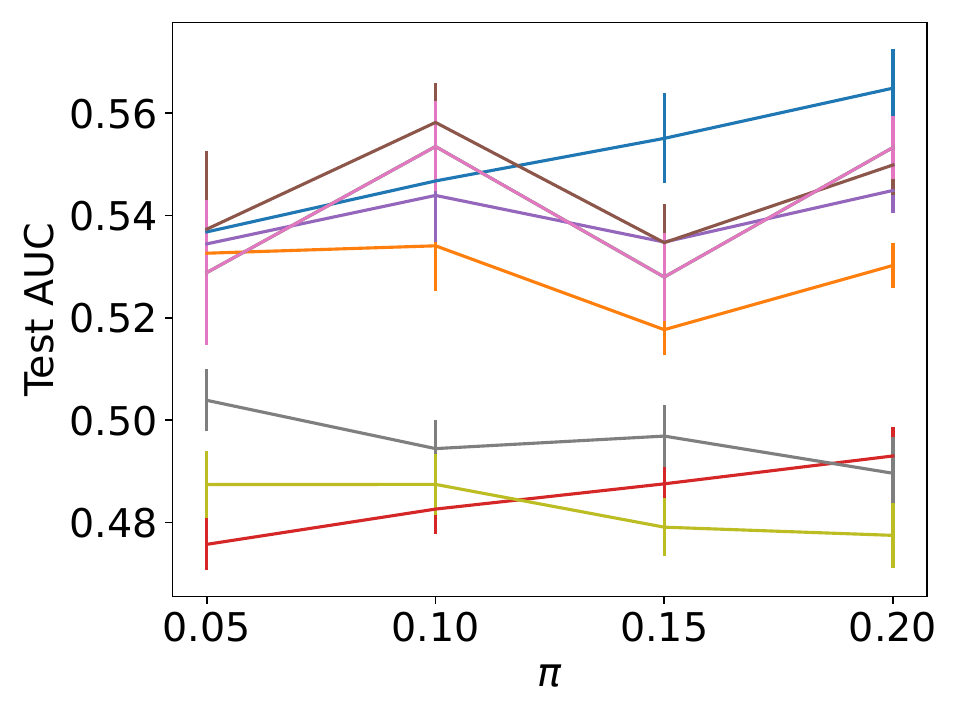}
    \caption{Blood}
  \end{subfigure}
  \caption{Average test AUCs with their standard errors when changing positive class-prior $\pi$ within $\{0.05, 0.1, 0.15, 0.2\}$.}
  \label{fig0}
\end{figure*}

\begin{figure*}[t]
  \centering
  \begin{subfigure}{0.3\textwidth}
    \centering
    \includegraphics[width=\linewidth]{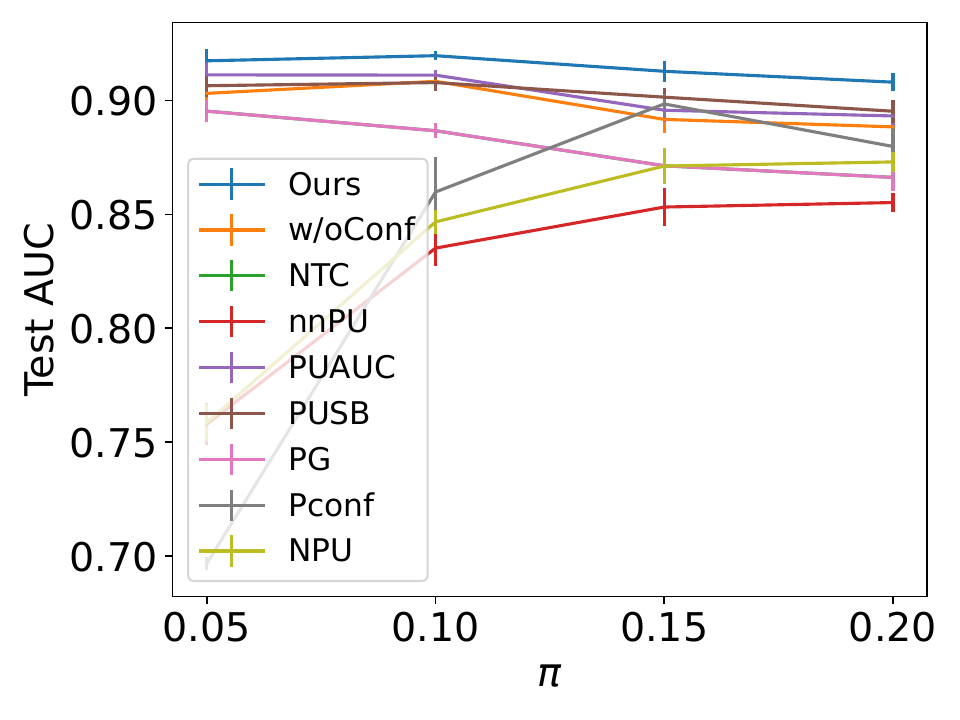}
    \caption{Mnist}
  \end{subfigure}
  \begin{subfigure}{0.3\textwidth}
    \centering
    \includegraphics[width=\linewidth]{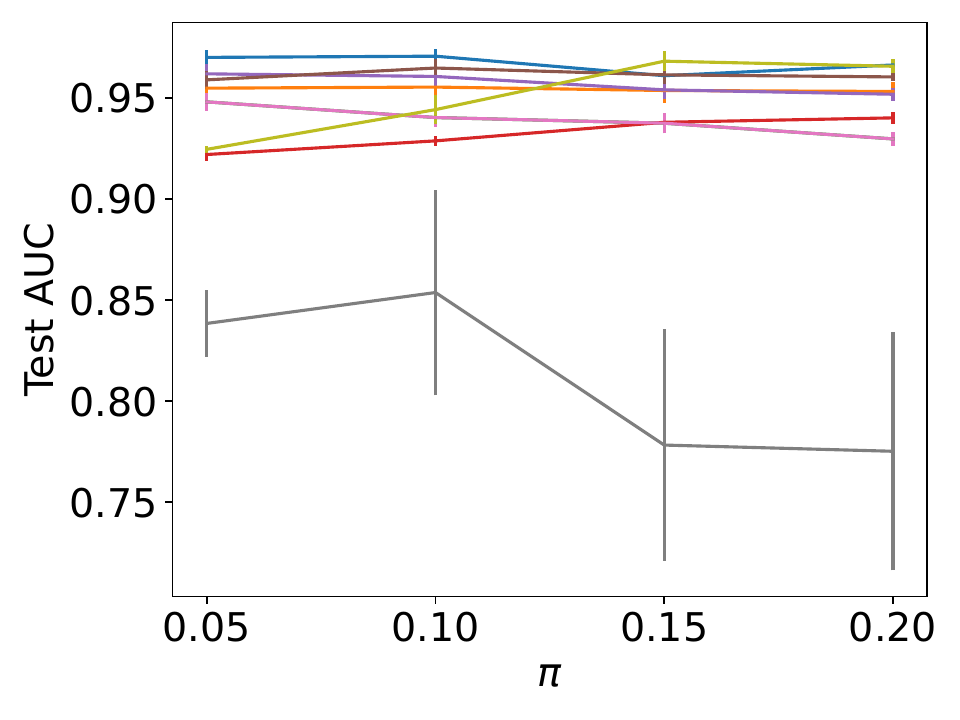}
    \caption{Fmnist}
  \end{subfigure}
  \begin{subfigure}{0.3\textwidth}
    \centering
    \includegraphics[width=\linewidth]{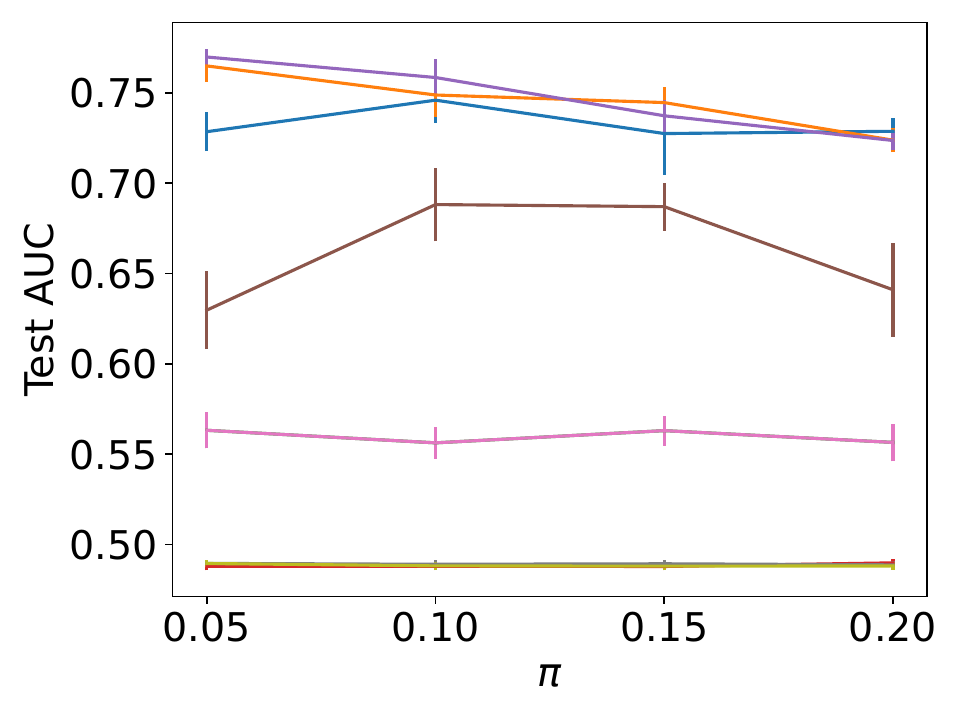}
    \caption{Svhn}
  \end{subfigure}
  \begin{subfigure}{0.3\textwidth}
    \centering
    \includegraphics[width=\linewidth]{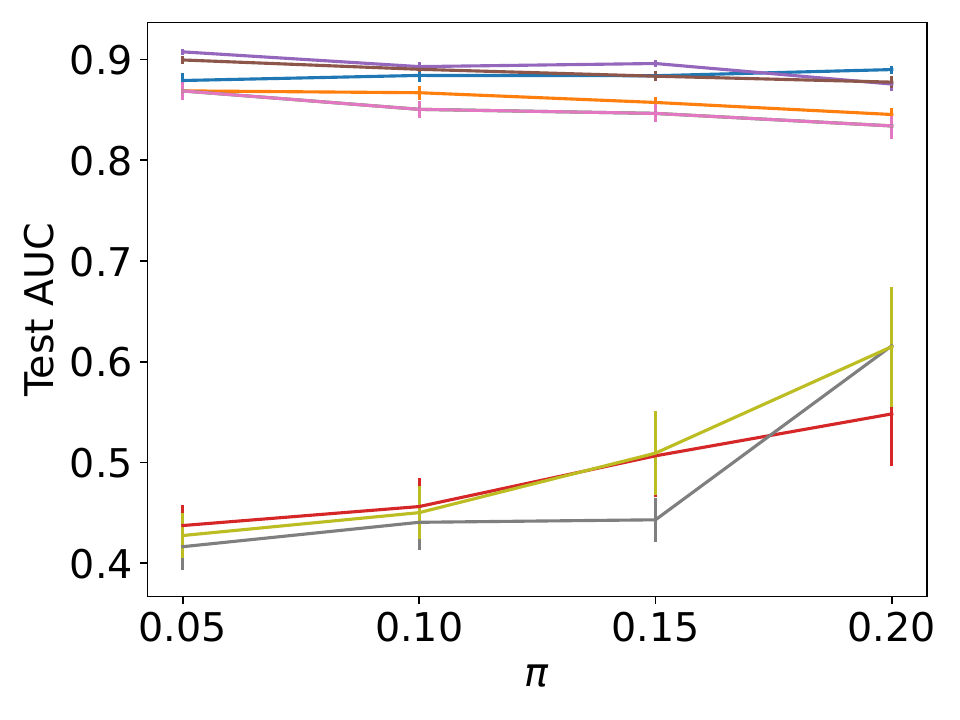}
    \caption{Cifar10}
  \end{subfigure}
  \begin{subfigure}{0.3\textwidth}
    \centering
    \includegraphics[width=\linewidth]{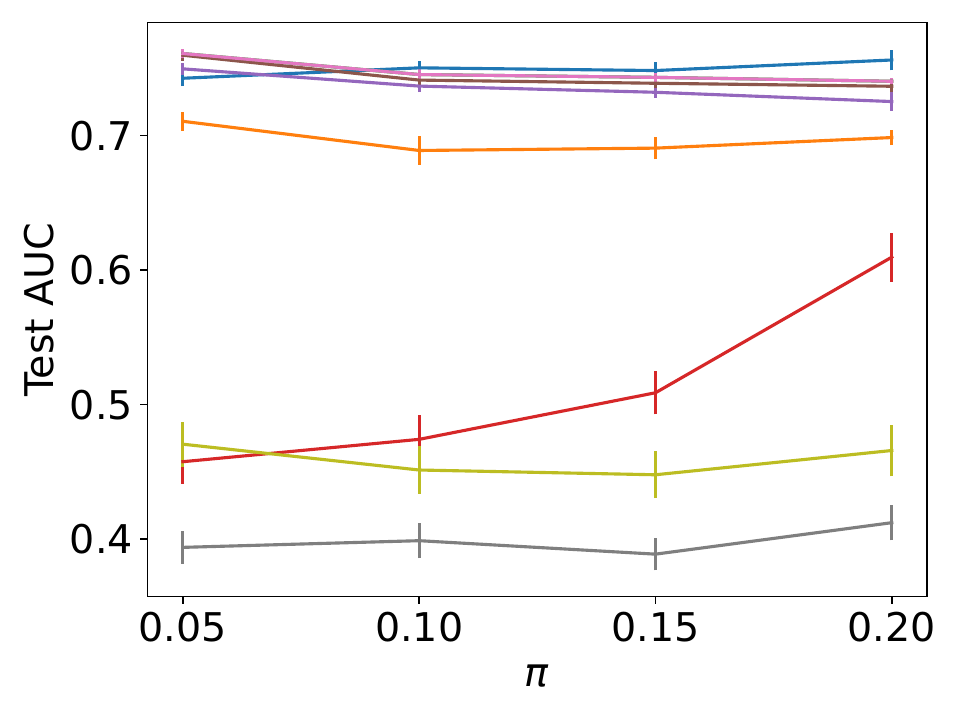}
    \caption{Diabetes}
  \end{subfigure}
  \begin{subfigure}{0.3\textwidth}
    \centering
    \includegraphics[width=\linewidth]{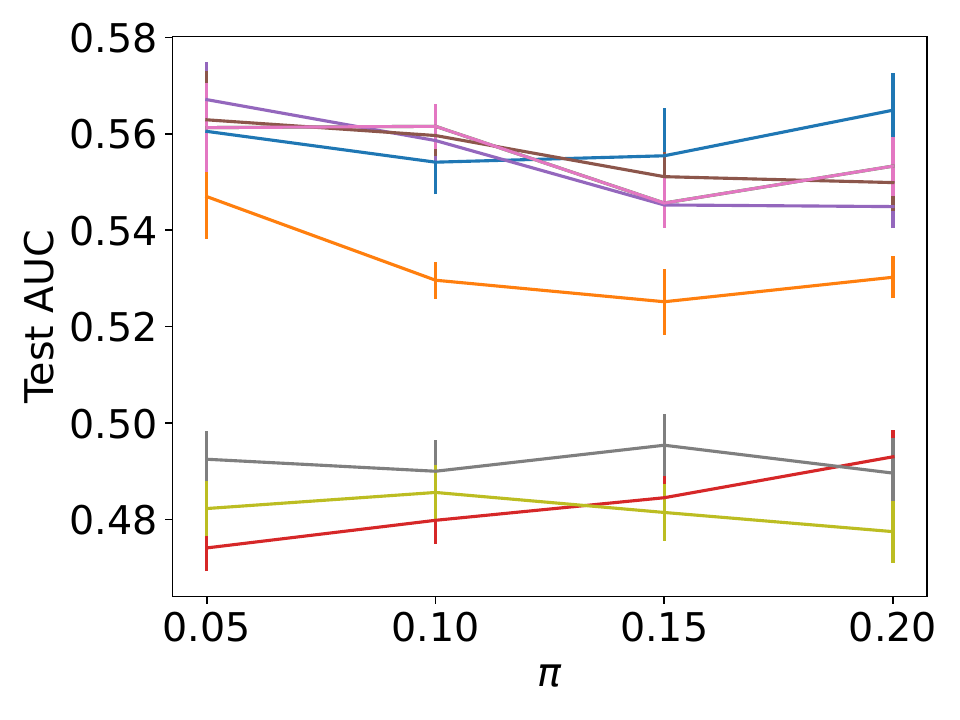}
    \caption{Blood}
  \end{subfigure}
  \caption{Average test AUCs with their standard errors when changing positive class-prior $\pi$ within $\{0.05, 0.1, 0.15, 0.2\}$ with the fixed number of labeled positive data.}
  \label{fig1}
\end{figure*}

\subsection{Results with Different Positive Labeling Rates $c=p(o=1|y=1)$}
\label{dif_c_result}

Tables \ref{tab:c005} and \ref{tab:c015} show the average test AUCs over different positive class-prior $\pi$ when changing the value of $c=p(o=1|y=1)$.
The proposed method performed well for each $c$ value.
As $c$ increases, the number of labeled positive data increases, and thus the performance of the proposed method tends to improve.

\begin{table*}[t]
\caption{Average test AUCs over different positive class-prior $\pi$ within $\{0.05, 0.1, 0.15, 0.2\}$ with $c=0.05$.}
\label{tab:c005}
\centering
\scalebox{0.95}{
\begin{tabular}{lrrrrrrrrr}
\hline
Data 
& \multicolumn{1}{c}{Ours} 
& \multicolumn{1}{c}{w/oConf} 
& \multicolumn{1}{c}{NTC} 
& \multicolumn{1}{c}{nnPU} 
& \multicolumn{1}{c}{PUAUC} 
& \multicolumn{1}{c}{PUSB} 
& \multicolumn{1}{c}{PG} 
& \multicolumn{1}{c}{Pconf} 
& \multicolumn{1}{c}{NPU} \\
\hline
Mnist    
& \textbf{0.8643} & 0.8465 & 0.8136 & 0.8009 & 0.8538 & 0.8525 & 0.8136 & 0.7697 & 0.8095 \\
Fmnist   
& \textbf{0.9220} & 0.9141 & 0.8827 & 0.9205 & 0.9164 & \textbf{0.9305} & 0.8827 & 0.6691 & \textbf{0.9318} \\
Svhn     
& \textbf{0.5896} & \textbf{0.5826} & 0.5090 & 0.4885 & \textbf{0.5911} & 0.5465 & 0.5090 & 0.4906 & 0.4898 \\
Cifar10  
& \textbf{0.8235} & 0.7974 & 0.7312 & 0.4751 & \textbf{0.8151} & 0.7979 & 0.7312 & 0.4976 & 0.4838 \\
Diabetes 
& \textbf{0.7109} & 0.6821 & 0.6949 & 0.5660 & 0.6905 & \textbf{0.7065} & 0.6949 & 0.4013 & 0.4744 \\
Blood    
& \textbf{0.5311} & 0.5170 & 0.5144 & 0.4905 & 0.5196 & \textbf{0.5269} & 0.5144 & 0.5041 & 0.4780 \\
\hline
\# Best  
& 6 & 1 & 0 & 0 & 2 & 3 & 0 & 0 & 1 \\
\hline
\end{tabular}
}
\end{table*}
\begin{table*}[t!]
\caption{Average test AUCs over different positive class-prior $\pi$ within $\{0.05, 0.1, 0.15, 0.2\}$ with $c=0.15$.}
\label{tab:c015}
\centering
\scalebox{0.95}{
\begin{tabular}{lrrrrrrrrr}
\hline
Data 
& \multicolumn{1}{c}{Ours} 
& \multicolumn{1}{c}{w/oConf} 
& \multicolumn{1}{c}{NTC} 
& \multicolumn{1}{c}{nnPU} 
& \multicolumn{1}{c}{PUAUC} 
& \multicolumn{1}{c}{PUSB} 
& \multicolumn{1}{c}{PG} 
& \multicolumn{1}{c}{Pconf} 
& \multicolumn{1}{c}{NPU} \\
\hline
Mnist    
& \textbf{0.9131} & 0.8931 & 0.8712 & 0.8216 & 0.8978 & 0.8972 & 0.8712 & 0.8456 & 0.8323 \\
Fmnist   
& \textbf{0.9589} & 0.9475 & 0.9333 & 0.9317 & 0.9512 & 0.9544 & 0.9333 & 0.8156 & \textbf{0.9529} \\
Svhn     
& \textbf{0.7095} & \textbf{0.7185} & 0.5524 & 0.4882 & \textbf{0.7167} & 0.6493 & 0.5524 & 0.4891 & 0.4884 \\
Cifar10  
& 0.8804 & 0.8548 & 0.8350 & 0.4949 & \textbf{0.8882} & 0.8751 & 0.8350 & 0.4867 & 0.4902 \\
Diabetes 
& \textbf{0.7478} & 0.7000 & \textbf{0.7483} & 0.5032 & 0.7321 & \textbf{0.7439} & \textbf{0.7483} & 0.3956 & 0.4628 \\
Blood    
& \textbf{0.5609} & 0.5314 & 0.5489 & 0.4829 & 0.5431 & 0.5495 & 0.5489 & 0.4947 & 0.4826 \\
\hline
\# Best  
& 5 & 1 & 1 & 0 & 2 & 1 & 1 & 0 & 1 \\
\hline
\end{tabular}
}
\end{table*}

\begin{figure}[t!]
\centering
\includegraphics[width=7.0cm]{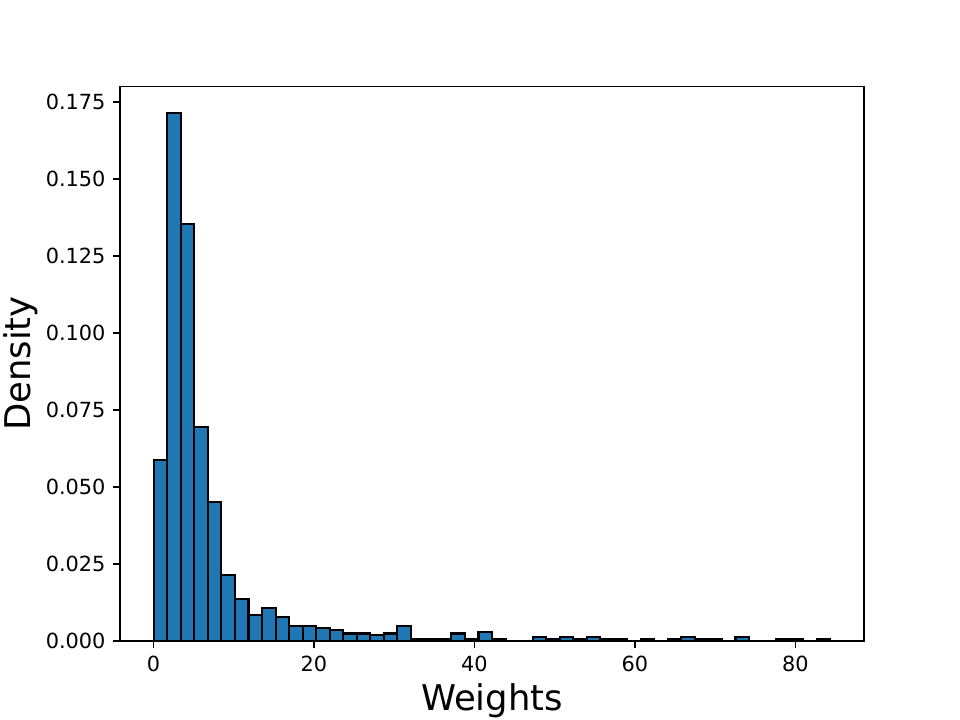}
\caption{Weight distribution of the proposed method with the Mnist dataset when $\pi=0.2$.}
\label{vis_weight}
\end{figure}

\subsection{Visualization}

The proposed method assigns weight $p(y=1|{\bf x})/p(o=1|{\bf x})$ to the labeled positive instance ${\bf x}$ in the AUC risk of Eq. \eqref{obj_auc_ours2} for dealing with the bias of given positive data.
In this section, we investigated the distribution of the estimated weights for labeled positive data with the Mnist dataset.
Figure \ref{vis_weight} shows the result.
As can be seen, the values of the weights ranged from around $0.05$ to about $84.0$, with the majority distributed between $0.0$ and $10.0$. By leveraging these estimated weights, the proposed method performed superiorly.

\subsection{Results with Confidence-dependent Selection Bias}
\label{sec_insdep_bias}

In the main paper, we considered class-based and distance-based selection biases for the image and tabular datasets, respectively. In this section, we evaluated the proposed method using a confidence-dependent selection bias, which often arises in real-world applications. Specifically, 
we fixed the fraction of labeled positive instances at $c=0.1$, so that $N^{{\rm p}}=Vc\pi$, and sampled these instances from the positive instances in an initial data pool of size $V$, using sampling weights proportional to $\sigma(4r({\bf x})-2)$, where $r({\bf x})=p(y=1|{\bf x})$.
We used the probabilistic classifier for estimating $r({\bf x})$.
In this setting, positive instances with higher confidence were more likely to be selected as labeled data.
Table~\ref{result_conf_dep} shows the results.
The proposed method performed well under this confidence-dependent selection bias.

\begin{table*}[t]
\caption{The results using confidence-dependent selection bias: average test AUCs over different positive class-prior $\pi$ within $\{0.05, 0.1, 0.15, 0.2\}$. 
Values in bold are not statistically different at the $5\%$ level from the best performing method in each row according to a paired t-test.
}
\label{result_conf_dep}
\centering
\scalebox{0.95}{
\begin{tabular}{lrrrrrrrrrr}
\hline
Data & \multicolumn{1}{c}{Ours} & \multicolumn{1}{c}{w/oConf} & \multicolumn{1}{c}{NTC} & \multicolumn{1}{c}{nnPU} & \multicolumn{1}{c}{PUAUC} &  \multicolumn{1}{c}{PUSB} & \multicolumn{1}{c}{PG} & \multicolumn{1}{c}{Pconf} & \multicolumn{1}{c}{NPU} \\
\hline
Mnist & \bf{0.9142} & 0.9063 & 0.8941 & 0.8876 & 0.9132 & \textbf{0.9162} & 0.8941 & 0.7540 & 0.8901  \\
Fmnist & \bf{0.9748} & 0.9728 & 0.9488 & 0.9237 & \textbf{0.9738} & \textbf{0.9709}  & 0.9488 & 0.4847 & 0.9494  \\
Svhn & \bf{0.6546} & \bf{0.6594} & 0.5290 & 0.4890 & \bf{0.6650} & 0.5831 & 0.5290 & 0.4899 & 0.4894  \\
Cifar10 & \bf{0.8614} & 0.8499 & 0.8103 & 0.5196 & \bf{0.8595} & 0.8444 & 0.8103 & 0.5608 & 0.5788  \\
Diabetes & \bf{0.7684} & 0.7395 & \textbf{0.7682} & 0.5923 & 0.7643 & \textbf{0.7699} & \textbf{0.7682} & 0.3808 & 0.5249  \\
Blood & \bf{0.6129} & 0.5846 & 0.5927 & 0.5011 & 0.6076 & \textbf{0.6085} & 0.5927 & 0.4769 & 0.4762  \\
\hline
\end{tabular}
}
\end{table*}

\subsection{Computation Costs}

We investigated the training time of the proposed method on the Mnist dataset. 
We used a Linux server with a 2.20GHz CPU.
Table \ref{table:comptime} shows the results.
Although the proposed method (Ours), w/oConf, and PG require the training of $p(o=1|{\bf x})$,
we omitted its training time here because we wanted to purely investigate the computation cost of our proposed loss function in Eq. \eqref{obj_auc_ours2_emp}.
There were no significant differences in training time among all methods except for PG.
The training time of PG is short since the classifier of PG can be obtained analytically from trained $p(o=1|{\bf x})$.
However, it had lower test AUCs than the proposed method as described in Table~\ref{result_all}.
The full training times, including the training time of $p(o=1|{\bf x})$, for the proposed method (Ours), w/oConf, and PG were $169.3620$, $169.4360$, and $85.1475$, respectively.
Although the proposed method had a longer training time than other methods due to the additional training of $p(o=1|{\bf x})$, it had superior test AUCs.

\begin{table*}[t!]
\caption{Training time [sec] of each method on the Mnist dataset.}
\label{table:comptime}
\centering
\scalebox{0.9}{
\begin{tabular}{lccccccccc}
\hline
 & Ours & w/oConf & NTC & nnPU & PUAUC & PUSB & PG & Pconf & NPU \\
\hline
& 84.5424 & 84.6170 & 84.8195 & 84.5650 & 83.3238 & 85.2143 & 0.3280 & 80.4764 & 84.9054 \\
\hline
\end{tabular}
}
\end{table*}

\begin{table*}[t!]
\caption{Comparison with LBE: average test AUCs over different positive class-prior $\pi$ within $\{0.05, 0.1, 0.15, 0.2\}$. We set $c=p(o=1|y=1)=0.1$. Values in bold are not statistically different at the $5\%$ level from the best performing method in each column according to a paired t-test.}
\label{table:lbe}
\centering
\scalebox{0.9}{
\begin{tabular}{lrrrrrr}
\hline
Method & Mnist & Fmnist & Svhn & Cifar10 & Diabetes & Blood \\
\hline
Ours & \bf{0.8979} & \bf{0.9495} & \bf{0.6673} & \bf{0.8643} & \bf{0.7342} & \bf{0.5509} \\
LBE  & 0.8631 & 0.9245 & 0.5668 & 0.8271 & \bf{0.7292} & 0.5399 \\
\hline
\end{tabular}
}
\end{table*}

\subsection{Comparison with an Additional SAR-specific Method}
\label{apen:lbe}

To further compare the proposed method with methods specifically designed for the SAR setting, 
we additionally evaluated LBE (LBE-MLP)
\citep{gong2021instance}, 
a representative PU learning method for both the SAR setting and the one-sample scenario.
This method estimates both $p(y=1|{\bf x})$ and $p(o=1|{\bf x}, y=1)$ from biased PU data using the EM algorithm.
Table \ref{table:lbe} shows the results.
The proposed method often outperformed LBE. This would be because LBE is not designed for AUC maximization and has difficulty identifying $p(o=1|{\bf x}, y=1)$ from biased PU data.
In contrast, our method is designed to maximize AUC and to avoid the identifiability issue by using positive-confidence.

\subsection{Results under Strictly Increasing Transformations of Positive-confidence}
\label{apend:strans_results}

Figures \ref{fig:full_strans_h1} and \ref{fig:full_strans_h2} show the average test AUCs with their standard errors when changing $k$ of the transformations $h(r)=r^k$ and $h(r)=r^k/(r^k+(1-r)^k)$, respectively.
For both transformation forms, the performance of the proposed method remained relatively stable across different $k$ values and showed no large degradation as $k$ moved away from $1$.
In contrast, the performances of Pconf that also uses positive-confidence substantially degraded for some values of $k$, reflecting its sensitivity to transformations of the confidence values.

\begin{figure*}[t]
  \centering
  \begin{subfigure}{0.3\textwidth}
    \centering
    \includegraphics[width=\linewidth]{pi4_strans_mnist_eta100_2.pdf}
    \caption{Mnist}
  \end{subfigure}
  \begin{subfigure}{0.3\textwidth}
    \centering
    \includegraphics[width=\linewidth]{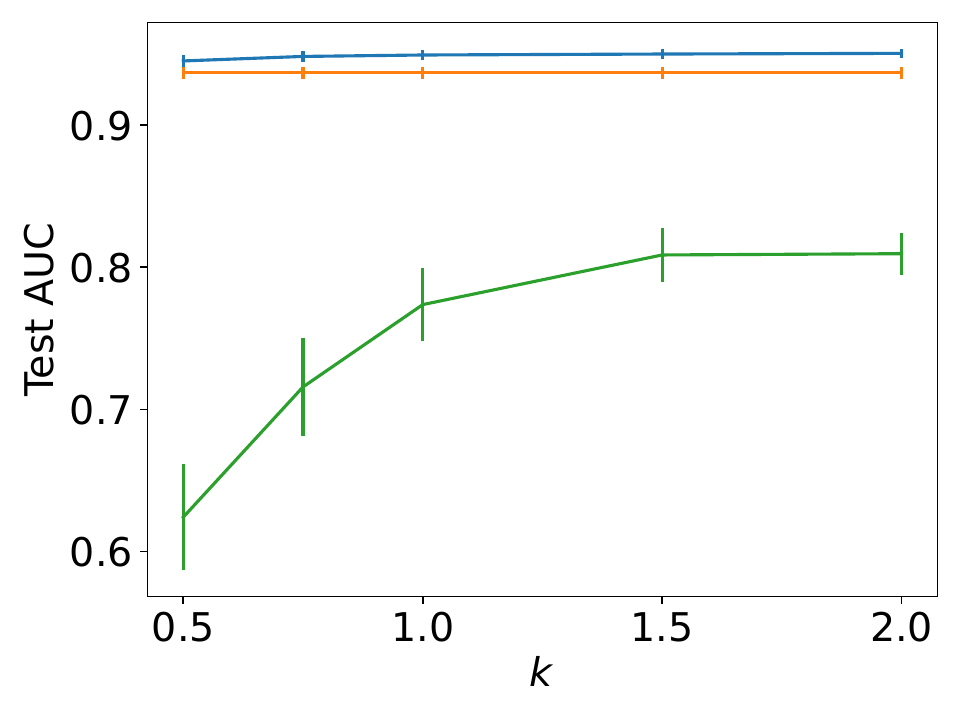}
    \caption{Fmnist}
  \end{subfigure}
  \begin{subfigure}{0.3\textwidth}
    \centering
    \includegraphics[width=\linewidth]{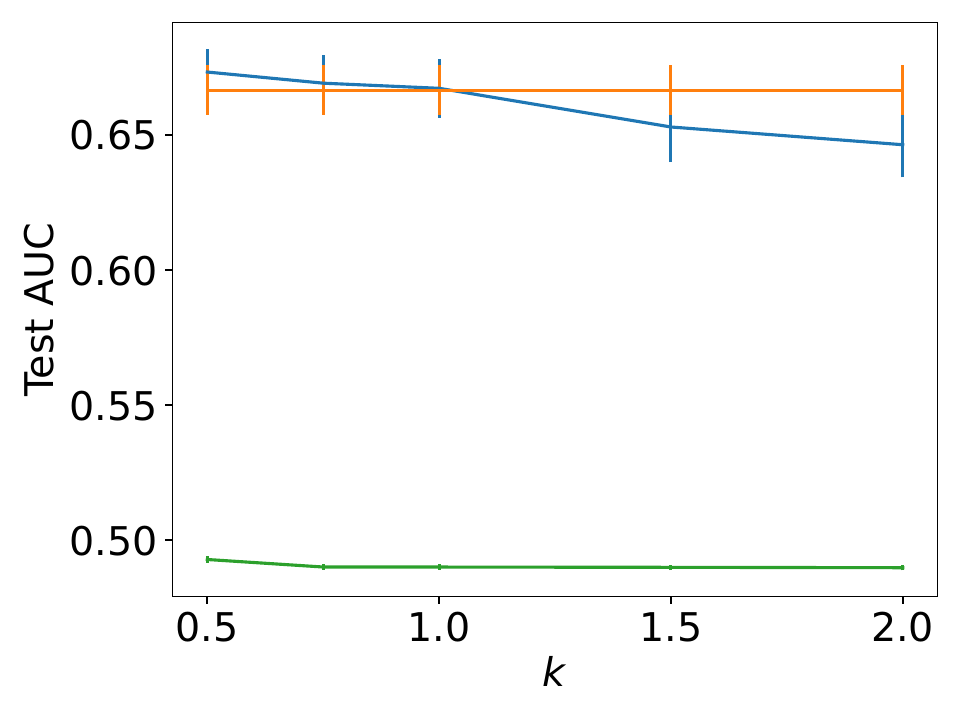}
    \caption{Svhn}
  \end{subfigure}
  \begin{subfigure}{0.3\textwidth}
    \centering
    \includegraphics[width=\linewidth]{pi4_strans_cifar10_eta100_2.pdf}
    \caption{Cifar10}
  \end{subfigure}
  \begin{subfigure}{0.3\textwidth}
    \centering
    \includegraphics[width=\linewidth]{pi4_strans_diabetes_eta100_2.pdf}
    \caption{Diabetes}
  \end{subfigure}
  \begin{subfigure}{0.3\textwidth}
    \centering
    \includegraphics[width=\linewidth]{pi4_strans_blood_eta100_2.pdf}
    \caption{Blood}
  \end{subfigure}
  \caption{The average test AUCs and their standard errors over different positive class-priors for different parameters $k$ of the transformation $h(r)=r^k$.}
  \label{fig:full_strans_h1}
\end{figure*}
\begin{figure*}[t!]
  \centering
  \begin{subfigure}{0.3\textwidth}
    \centering
    \includegraphics[width=\linewidth]{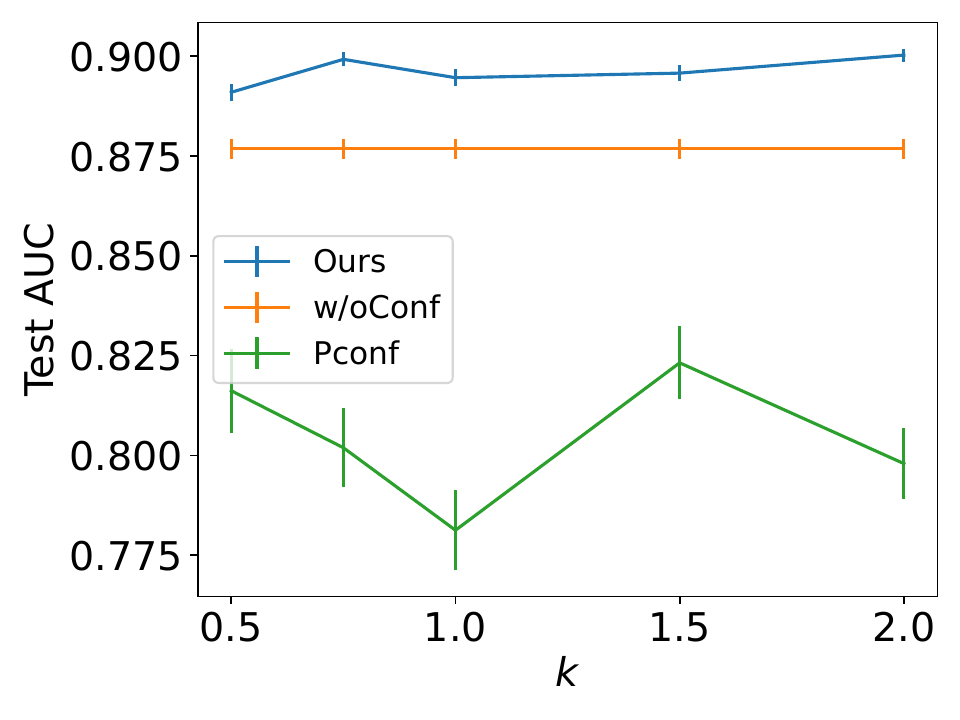}
    \caption{Mnist}
  \end{subfigure}
  \begin{subfigure}{0.3\textwidth}
    \centering
    \includegraphics[width=\linewidth]{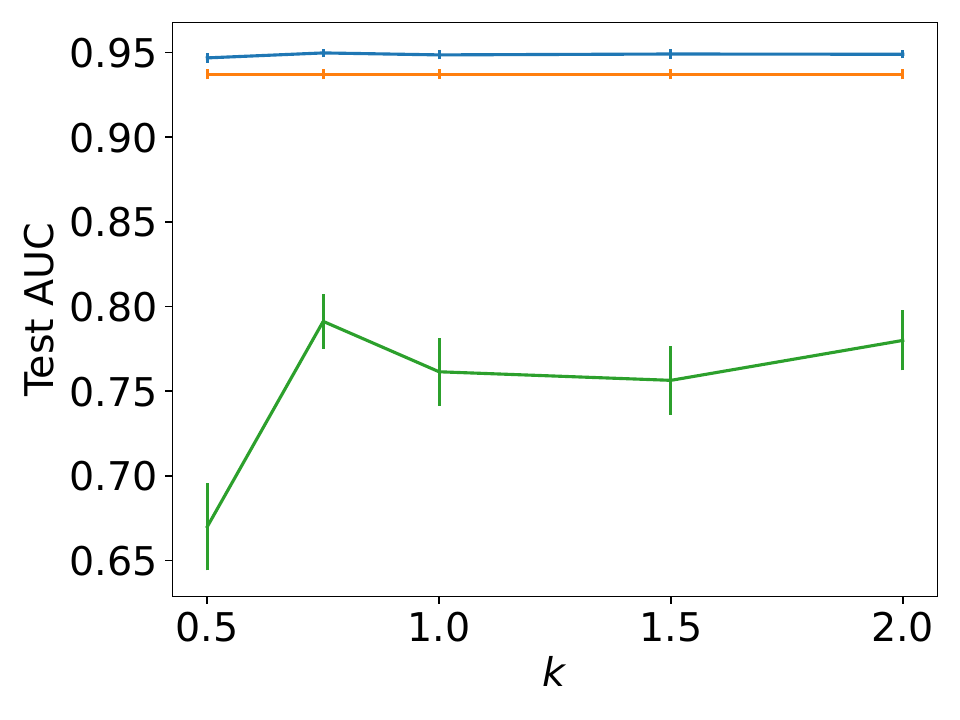}
    \caption{Fmnist}
  \end{subfigure}
  \begin{subfigure}{0.3\textwidth}
    \centering
    \includegraphics[width=\linewidth]{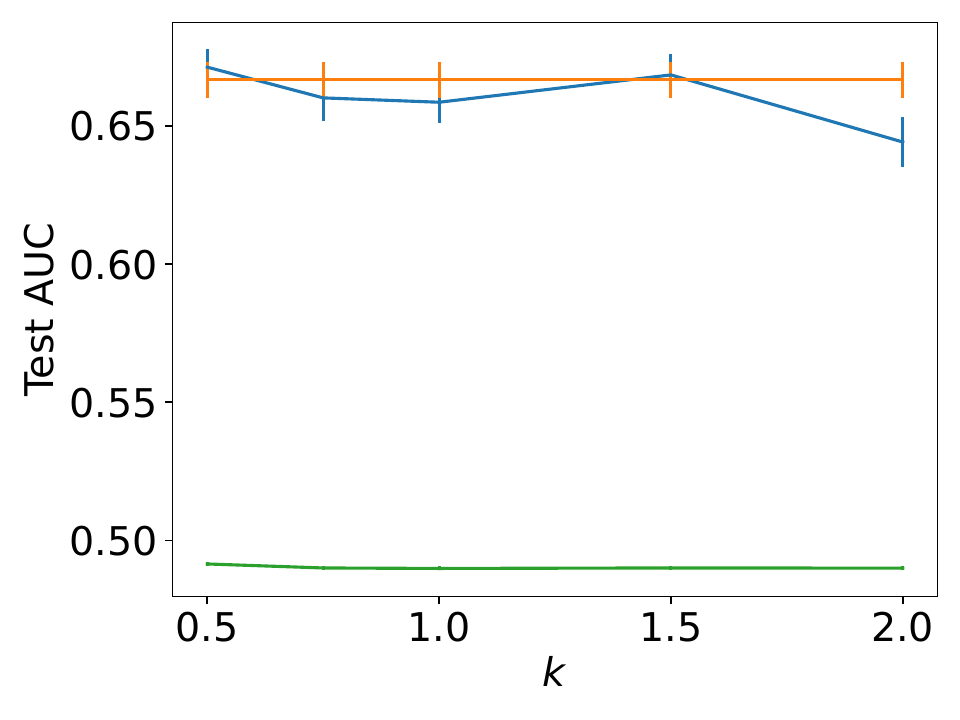}
    \caption{Svhn}
  \end{subfigure}
  \begin{subfigure}{0.3\textwidth}
    \centering
    \includegraphics[width=\linewidth]{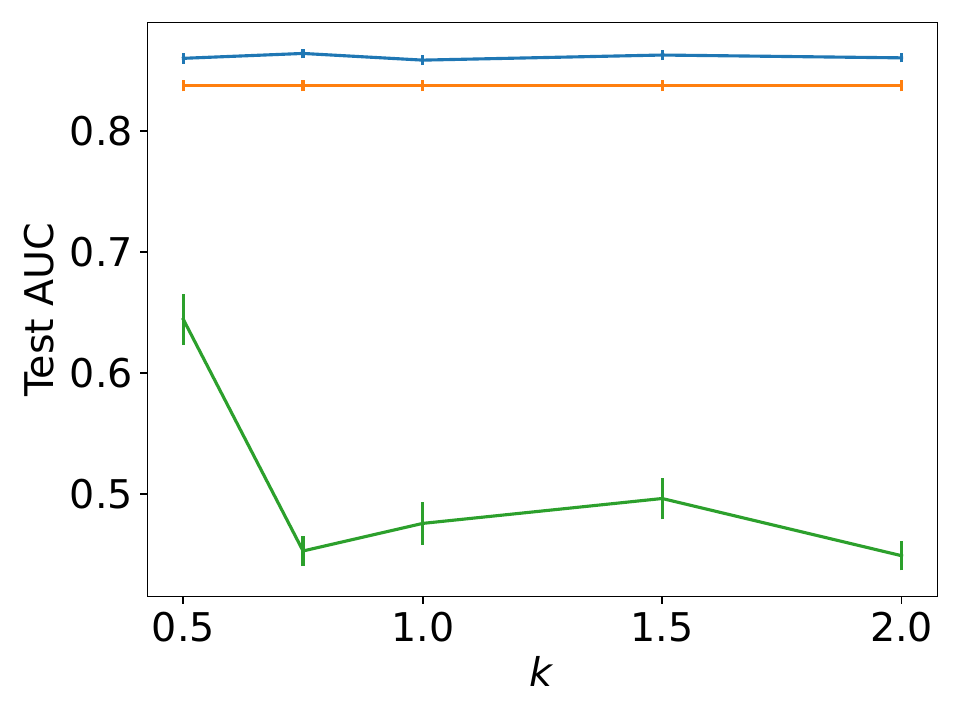}
    \caption{Cifar10}
  \end{subfigure}
  \begin{subfigure}{0.3\textwidth}
    \centering
    \includegraphics[width=\linewidth]{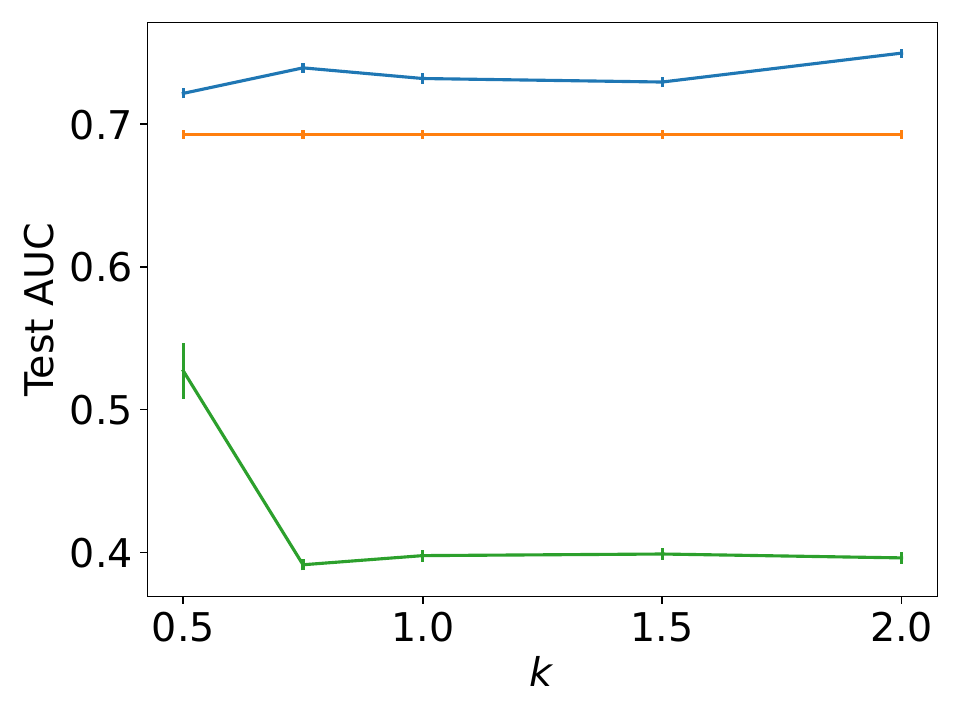}
    \caption{Diabetes}
  \end{subfigure}
  \begin{subfigure}{0.3\textwidth}
    \centering
    \includegraphics[width=\linewidth]{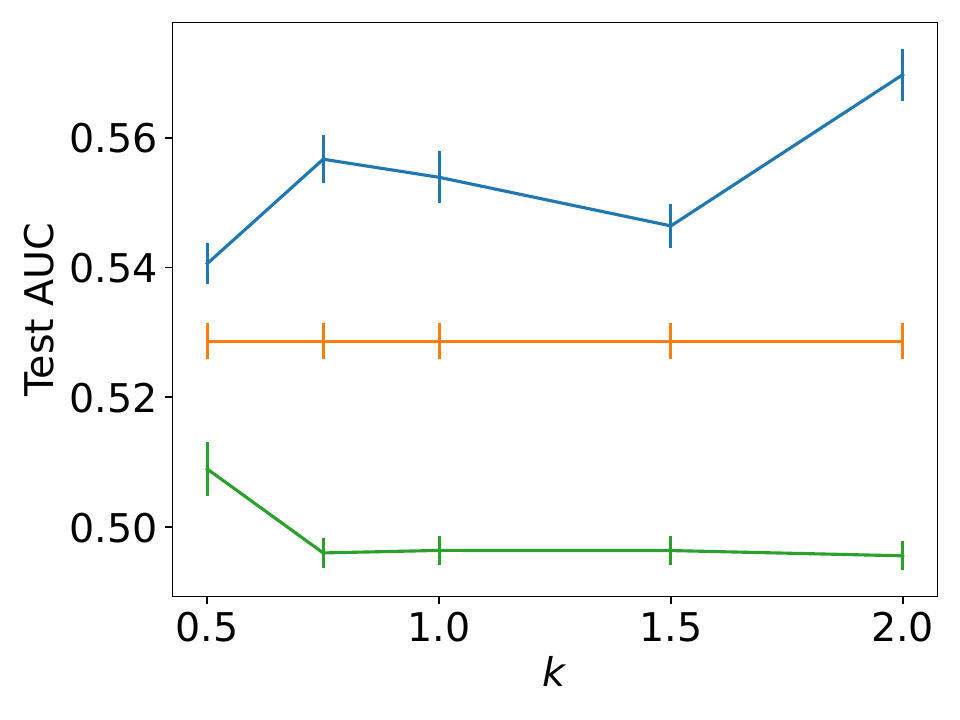}
    \caption{Blood}
  \end{subfigure}
  \caption{The average test AUCs and their standard errors over different positive class-priors for different parameters $k$ of the transformation $h(r)=r^k/(r^k+(1-r)^k)$.}
  \label{fig:full_strans_h2}
\end{figure*}

\subsection{Results with Gaussian Noise on Positive-confidence}
\label{apend:results_gnoise}

As shown in Section~\ref{robsut_pcon}, the proposed method can recover the Bayes-optimal AUC ranking even when the positive-confidence is distorted by a strictly increasing transformation.
We further evaluated its robustness when random noise was added to the positive-confidence.
For noisy positive-confidence, we added the zero-mean Gaussian noise with standard deviations chosen from $\{0.01, 0.05, 0.1, 0.15 \}$ following the previous study
\citep{ishida2018binary}.
As the standard deviation increases, the noise on positive-confidence becomes bigger.
When the modified positive-confidence was less than $0$ or greater than $1$, we clipped the value to $0$ and $1$, respectively.
Table \ref{result_noisy} shows the results.
As expected, the performance of the proposed method tended to decrease as the noise increased, because our objective function in Eq. \eqref{obj_auc_ours2_emp} explicitly depends on confidence values.
However, the degradation was relatively small on several datasets. Even with a standard deviation of $0.15$, the proposed method often outperformed w/oConf, which does not use positive-confidence. 
These results suggest that using positive-confidence can remain beneficial even when the confidence contains random noise.

\begin{table*}[t]
\caption{The results with noisy positive-confidence: average test AUCs over different positive class-prior $\pi$ within $\{0.05, 0.1, 0.15, 0.2\}$. Std represents the standard deviation of Gaussian noise. ${\rm Std}=0.0$ means that there is no noise.}
\label{result_noisy}
\centering
\scalebox{1.0}{
\begin{tabular}{lrrrrr|r}
\hline
Std & \multicolumn{1}{c}{0.0} & \multicolumn{1}{c}{0.01} & \multicolumn{1}{c}{0.05} & \multicolumn{1}{c}{0.1} & \multicolumn{1}{c}{0.15} & \multicolumn{1}{c}{w/oConf} \\
\hline
Mnist & 0.8979 & 0.8966 & 0.8932 & 0.8905 & 0.8883 & 0.8769  \\
Fmnist & 0.9495 & 0.9495 & 0.9501 & 0.9496 & 0.9489 & 0.9371 \\
Svhn & 0.6673 & 0.6662 & 0.6601 & 0.6367 & 0.6161 & 0.6666 \\
Cifar10 & 0.8643 & 0.8670 & 0.8623 & 0.8579 & 0.8552 & 0.8371 \\
Diabetes & 0.7342 & 0.7333 & 0.7277 & 0.7189 & 0.7013 & 0.6926 \\
Blood & 0.5509 & 0.5457 & 0.5435 & 0.5376 & 0.5321 & 0.5287 \\
\hline
\end{tabular}
}
\end{table*}

\subsection{Robustness analysis for the Clipping}
\label{apen:clipping}

As mentioned in Section \ref{subsec:comp}, we used the clipping for estimated labeling probability $\hat{u}({\bf x})$ to stabilize the training process.
Specifically, when the estimated labeling probability was smaller than a clipping value, we replaced it with the clipping value.
Here, we empirically investigated the robustness of the proposed method to the clipping values.
Table \ref{table:clipping} shows the average test AUCs by changing clipping value $\tau$ within $\{0.001, 0.005, 0.01, 0.05, 0.1\}$.
These results show that the proposed method is relatively robust against the choice of the clipping values.

\begin{table*}[t]
\caption{Average test AUCs when varying the clipping value $\tau$ for the estimated labeling probability $p(o=1|{\bf x})$.}
\label{table:clipping}
\centering
\scalebox{1.0}{
\begin{tabular}{lrrrrr}
\hline
Data & $\tau=0.001$ & $0.005$ & $0.01$ & $0.05$ & $0.1$ \\
\hline
Mnist    & 0.8945 & 0.8962 & 0.8979 & 0.8997 & 0.8995 \\
Fmnist   & 0.9467 & 0.9480 & 0.9495 & 0.9503 & 0.9494 \\
Svhn     & 0.6656 & 0.6672 & 0.6673 & 0.6525 & 0.6533 \\
Cifar10  & 0.8605 & 0.8624 & 0.8643 & 0.8642 & 0.8631 \\
Diabetes & 0.7325 & 0.7322 & 0.7342 & 0.7392 & 0.7400 \\
Blood    & 0.5480 & 0.5493 & 0.5509 & 0.5564 & 0.5564 \\
\hline
\end{tabular}
}
\end{table*}

\subsection{Results with Gaussian Noise on Estimated Labeling Probability}
\label{apend:results_gnoise_lp}

The proposed method uses the labeling probability estimated from the given PU data, $\hat{u}({\bf x})$, to construct the rewritten AUC estimator.
Here, we evaluated the proposed method when random noise was added to estimated labeling probability $\hat{u}({\bf x})$.
Specifically, we added the zero-mean Gaussian noise with standard deviations chosen from $\{0.01, 0.05, 0.1, 0.15 \}$ to $\hat{u}({\bf x})$.
When the modified labeling probability was less than $0.01$ or greater than $1$, we clipped the value to $0.01$ and $1$, respectively.
Table \ref{result_noisy_u} shows the results.
As expected, the performance of the proposed method tended to decrease as the noise increased.
However, the degradation was relatively small on several datasets. Even with a standard deviation of $0.15$, the proposed method often worked well. 
These results empirically demonstrate that the proposed method is
reasonably robust to estimation errors in the labeling probability.

\begin{table*}[t]
\caption{The results with noisy estimated labeling probabilities: average test AUCs over different positive class-prior $\pi$ within $\{0.05, 0.1, 0.15, 0.2\}$. Std represents the standard deviation of Gaussian noise. ${\rm Std}=0.0$ means that there is no noise.}
\label{result_noisy_u}
\centering
\scalebox{1.0}{
\begin{tabular}{lrrrrr}
\hline
Std & \multicolumn{1}{c}{0.0} & \multicolumn{1}{c}{0.01} & \multicolumn{1}{c}{0.05} & \multicolumn{1}{c}{0.1} & \multicolumn{1}{c}{0.15} \\
\hline
Mnist & 0.8979 & 0.8961 & 0.8933 & 0.8896 & 0.8867  \\
Fmnist & 0.9495 & 0.9501 & 0.9468 & 0.9403 & 0.9364 \\
Svhn & 0.6673 & 0.6595 & 0.6354 & 0.6214 & 0.6166 \\
Cifar10 & 0.8643 & 0.8689 & 0.8621 & 0.8567 & 0.8547 \\
Diabetes & 0.7342 & 0.7351 & 0.7325 & 0.7329 & 0.7328 \\
Blood & 0.5509 & 0.5453 & 0.5421 & 0.5412 & 0.5401 \\
\hline
\end{tabular}
}
\end{table*}

\subsection{Full Results}
\label{sec:full_result}

Table \ref{result_all_std} shows the average test AUCs with their standard deviations over different positive class-prior $\pi$ in each dataset. The proposed method performed the best or comparably to it in all cases.

\begin{table*}[t!]
\caption{Average test AUCs with their standard deviations over different positive class-prior $\pi$ within $\{0.05, 0.1, 0.15, 0.2\}$. 
We set $c=p(o=1|y=1)=0.1$. 
Values in bold are not statistically different at the $5\%$ level from the best performing method in each row according to a paired t-test.
}
\label{result_all_std}
\centering

\begin{subtable}{\textwidth}
\centering
\scalebox{0.95}{
\begin{tabular}{lrrrrr}
\hline
Data & Ours & w/oConf & NTC & nnPU & PUAUC \\
\hline
Mnist    & \bf{0.8979(0.0158)} & 0.8769(0.0226) & 0.8575(0.0192) & 0.8196(0.0531) & 0.8828(0.0203) \\
Fmnist   & \bf{0.9495(0.0236)} & 0.9371(0.0267) & 0.9165(0.0219) & 0.9296(0.0180) & 0.9409(0.0231) \\
Svhn     & \bf{0.6673(0.0688)} & \bf{0.6666(0.0579)} & 0.5328(0.0305) & 0.4885(0.0069) & \bf{0.6704(0.0539)} \\
Cifar10  & \bf{0.8643(0.0373)} & 0.8371(0.0420) & 0.8013(0.0491) & 0.4819(0.1191) & \bf{0.8620(0.0355)} \\
Diabetes & \bf{0.7342(0.0306)} & 0.6926(0.0268) & \bf{0.7357(0.0183)} & 0.5234(0.0775) & 0.7171(0.0258) \\
Blood    & \bf{0.5509(0.0317)} & 0.5287(0.0253) & 0.5409(0.0336) & 0.4847(0.0182) & 0.5395(0.0273) \\
\hline
\end{tabular}
}
\end{subtable}


\begin{subtable}{\textwidth}
\centering
\scalebox{0.95}{
\begin{tabular}{lrrrr}
\hline
Data & PUSB & PG & Pconf & NPU \\
\hline
Mnist    & 0.8841(0.0197) & 0.8575(0.0192) & 0.8193(0.0807) & 0.8285(0.0598) \\
Fmnist   & \bf{0.9474(0.0195)} & 0.9165(0.0219) & 0.7737(0.1616) & \bf{0.9470(0.0276)} \\
Svhn     & 0.5965(0.0763) & 0.5328(0.0305) & 0.4900(0.0070) & 0.4893(0.0068) \\
Cifar10  & 0.8442(0.0446) & 0.8013(0.0491) & 0.4767(0.1310) & 0.4883(0.1395) \\
Diabetes & 0.7230(0.0248) & \bf{0.7357(0.0183)} & 0.3974(0.0368) & 0.4626(0.0605) \\
Blood    & \bf{0.5450(0.0323)} & 0.5409(0.0336) & 0.4962(0.0204) & 0.4828(0.0200) \\
\hline
\end{tabular}
}
\end{subtable}

\end{table*}

\end{document}